\documentclass{article}

\usepackage{microtype}
\usepackage{graphicx}
\usepackage{booktabs} 
\usepackage{subcaption}

\usepackage[accepted]{icml2026}

\usepackage{amsmath}
\usepackage{amssymb}
\usepackage{mathtools}
\usepackage{amsthm}
\usepackage{makecell}
\usepackage{bm}
\usepackage{enumitem}

\newtheorem{proposition}{Proposition}

\newcommand{\ie}{\textit{i.e., }}
\newcommand{\eg}{\textit{e.g., }}
\usepackage{thmtools, thm-restate}

\usepackage[textsize=tiny]{todonotes}

\usepackage[colorlinks = true,
            linkcolor = blue,
            urlcolor  = blue,
            citecolor = blue,
            anchorcolor = blue]{hyperref}
\hypersetup{citecolor=[rgb]{0,0.2,0.7}}
\hypersetup{linkcolor=[rgb]{0,0.2,0.7}}
\hypersetup{urlcolor=[rgb]{0,0.2,0.7}}
\usepackage[capitalize,noabbrev]{cleveref}
\usepackage{titletoc}
\usepackage[nottoc]{tocbibind}
\usepackage{minitoc}
\usepackage{comment}
\usepackage{multirow}
\usepackage[table]{xcolor}
\definecolor{bestcolor}{HTML}{FFF2CC} 
\definecolor{secondcolor}{HTML}{F0F0F0}
\definecolor{myblue}{rgb}{0,0.2,0.7}

\newcommand{\best}[1]{\cellcolor{bestcolor}\textbf{#1}}
\newcommand{\second}[1]{\cellcolor{secondcolor}\textbf{#1}}

\usepackage{setspace}

\usepackage{xurl}
\icmltitlerunning{From Reward-Free Representations to Preferences: Rethinking Offline Preference-Based Reinforcement Learning}

\begin{document}

\twocolumn[
\icmltitle{From Reward-Free Representations to Preferences:\\Rethinking Offline Preference-Based Reinforcement Learning}




\begin{icmlauthorlist}
\icmlauthor{Jun-Jie Yang}{nycu}
\icmlauthor{Chia-Heng Hsu}{nycu}
\icmlauthor{Kui-Yuan Chen}{nycu}
\icmlauthor{Ping-Chun Hsieh}{nycu}
\end{icmlauthorlist}


\icmlaffiliation{nycu}{Department of Computer Science, National Yang Ming Chiao Tung University, Hsinchu, Taiwan
}

\icmlcorrespondingauthor{Ping-Chun Hsieh}{pinghsieh@nycu.edu.tw}

\icmlkeywords{Machine Learning, ICML}

\vskip 0.3in
]



\printAffiliationsAndNotice{\icmlEqualContribution} 

\begin{abstract}
Preference-based reinforcement learning (PbRL) avoids explicit reward engineering by learning from pairwise human preference feedback. Existing offline PbRL methods typically follow a two-stage pipeline, first learning a reward or preference model from labeled preferences and then performing offline RL on unlabeled data. We revisit offline PbRL through the lens of reward-free representation learning (RFRL) from the zero-shot RL literature, and propose a new training framework that first learns latent successor-measure representations from reward-free offline data, followed by contrastive search and fine-tuning using preference data. Through extensive experiments and ablations, we show that our method achieves superior preference efficiency over offline PbRL baselines. This work is the first to connect RFRL with PbRL, highlighting its potential as a feedback-efficient solution. Our code is publicly available at
\url{https://github.com/rl-bandits-lab/FB-PbRL}.

\end{abstract}
\section{Introduction}

Reinforcement learning (RL) has achieved remarkable success in sequential decision-making problems where reward signals can be well specified, including game playing~\citep{silver2016mastering,vinyals2019grandmaster}, recommender systems~\citep{zheng2018drn,afsar2022reinforcement}, and resource scheduling and optimization~\citep{khalil2017learning,zhang2020learning}. Despite these advances, a long-standing practical bottleneck in RL lies in reward design: crafting reward functions that faithfully encode task objectives is often difficult, ambiguous, or even infeasible in complex real-world settings~\citep{skalse2022defining}.
To address this challenge, preference-based reinforcement learning (PbRL) has emerged as a compelling alternative to explicit reward engineering~\citep{christiano2017deep,ouyang2022training}. Instead of relying on manually specified rewards, PbRL learns from trajectories augmented with pairwise preference feedback provided by a human annotator. By leveraging relative comparisons rather than scalar rewards, PbRL provides a flexible mechanism for aligning learned policies with human intent while avoiding brittle reward design.

Most existing PbRL methods adopt a two-stage training paradigm. In the first stage, a reward or preference model is learned from human feedback via supervised learning; in the second stage, this learned model serves as a surrogate objective for downstream RL, typically over large unlabeled offline datasets, as illustrated in Figures~\ref{fig:reward model} and~\ref{fig:pref model}. However, human preference feedback is expensive to obtain, and practical PbRL systems must operate under severe labeling constraints. With limited preference data, reward models are prone to overfitting and poor generalization~\citep{an2023direct,ye2024online,bai2025star}, while directly learning a preference model often leads to underfitting and low predictive accuracy. Enabling effective PbRL under scarce feedback therefore remains a critical challenge.

In parallel, a growing line of work has investigated \textit{reward-free representation learning} (RFRL) to improve sample efficiency in reward-based RL~\citep{touati2021learning,touati2023does,agarwal2025proto,chen2026reward}. RFRL aims to learn compact, general-purpose representations of states or state–action pairs purely from reward-free offline data. Typically, RFRL proceeds in two stages: (1) \textit{reward-free pre-training}, where representations are learned by modeling long-horizon environment dynamics without task-specific rewards; and (2) \textit{test-time reward-based task specification}, where these representations enable sample-efficient learning or even zero-shot generalization once explicit rewards are provided~\citep{pirotta2024fast,tirinzoni2025zero}.

\begin{figure*}[!t]
    \centering
    \begin{subfigure}[b]{0.31\linewidth}
        \includegraphics[width=0.98\linewidth]{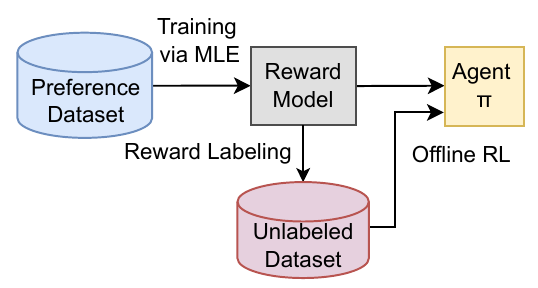}
        \caption{Reward modeling for PbRL.}
        \label{fig:reward model}
    \end{subfigure}
    \hspace{-1pt}
    \begin{subfigure}[b]{0.31\linewidth}
        \includegraphics[width=\linewidth]{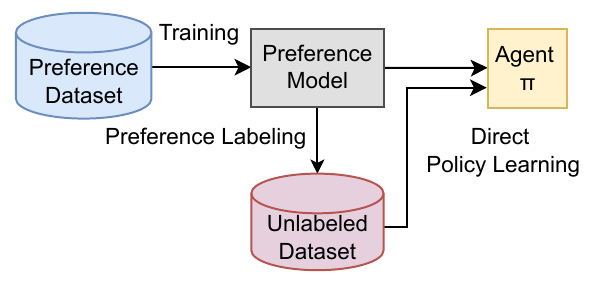}
        \caption{Preference modeling for PbRL.}
        \label{fig:pref model}
    \end{subfigure}
    \hspace{-1pt}
    \begin{subfigure}[b]{0.36\linewidth}
        \includegraphics[width=\linewidth]{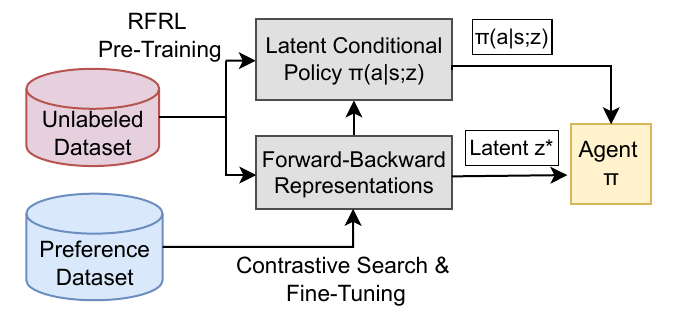}
        \caption{FB-PbRL (Ours).}
        \label{fig:fb-pbrl}
    \end{subfigure}
    \caption{An illustration of the commonly used PbRL training pipelines and our proposed FB-PbRL framework.}
    \label{fig:pbrl workflows}
\end{figure*}

Despite the conceptual alignment between PbRL and RFRL, both of which seek to learn effective behavior under weak or absent reward supervision, their connection has not yet been explored. In PbRL, tasks are specified through trajectory-level preference feedback, which constitutes a significantly weaker signal than per-step rewards. A naive attempt to bridge the two paradigms would be to first fit a reward model from preferences (\eg under the Bradley–Terry formulation) and then use this reward for RFRL-style task specification. However, this approach inherits the same reward over-optimization and generalization issues in conventional PbRL, particularly when preference data is scarce. This motivates a fundamental and yet underexplored question:
\emph{Can RFRL inform PbRL?}

In this work, we answer this question in the affirmative. We show that representations learned through RFRL pre-training can serve as compact and informative features for preference learning, fundamentally reshaping the PbRL pipeline. Our key insight is that, under mild structural conditions, the standard preference loss in PbRL can be equivalently reformulated as a contrastive objective akin to SimCLR~\citep{chen2020simple} when applied to RFRL representations. Building on this insight, we propose a new two-stage offline PbRL framework: (1) \emph{RFRL pre-training}, where we adopt the Forward–Backward (FB) decomposition method~\citep{touati2021learning} to learn latent state–action representations and a latent-conditioned policy from reward-free offline data; and (2) \emph{preference-guided search and fine-tuning}, where we apply a contrastive objective over preference-labeled trajectory segments to jointly fine-tune the FB representations and search for the latent vector required by the policy. This eliminates the need to explicitly learn either a reward or preference model. An overview of the proposed framework is shown in~\Cref{fig:fb-pbrl}.

We summarize our main contributions as follows:
\vspace{-3mm}
\begin{itemize}[leftmargin=*]
\item \textbf{A new perspective on PbRL through RFRL}: We uncover a fundamental connection between PbRL and RFRL, showing that PbRL objectives can be interpreted through the lens of contrastive representation learning.
\vspace{-3mm}
\item \textbf{A new training pipeline for PbRL}: We introduce FB-PbRL, a two-stage framework that combines RFRL pre-training with SimCLR-style contrastive fine-tuning, obviating the need to learn a reward or preference model.
\vspace{-3mm}
\item \textbf{Empirical insights into RFRL for PbRL}: We evaluate FB-PbRL on a diverse set of offline PbRL benchmarks, including locomotion and navigation tasks from the DeepMind Control Suite and robotic manipulation tasks from Adroit and MetaWorld. Our results show that FB-PbRL consistently outperforms state-of-the-art PbRL methods, achieves competitive or superior performance with substantially fewer preference labels, and remains robust under noisy preference feedback. We further observe favorable fine-tuning efficiency in wall-clock time.
\end{itemize}

\section{Background}
\label{sec:background}
In this section, we formally describe the standard offline PbRL and RFRL settings and set up the notation required throughout the paper.
We consider the RL problem formulated by a Markov decision process (MDP) characterized by the tuple $(\mathcal{S}$, $\mathcal{A}$, $p_0$, $\mathcal{T}$, $r$, $\gamma)$, where $\mathcal{S}$ and $\mathcal{A}$ denote the state and action spaces, $p_0\in \Delta(\mathcal{S})$\footnote{We let $\Delta(\mathcal{X})$ denote the set of all probability distributions over a measurable set $\mathcal{X}$ throughout this paper.} is the initial state distribution, $\mathcal{T}:\mathcal{S}\times\mathcal{A}\rightarrow \Delta(\mathcal{S})$ is the transition function, $r:\mathcal{S}\times\mathcal{A}\rightarrow\mathbb{R}$ is the scalar reward function, and $\gamma \in [0, 1)$ is the discount factor. Let $\Pi:=\{\pi\rvert \pi:\mathcal{S} \rightarrow \Delta(\mathcal{A})\}$ denote the set of all Markovian policies. The objective is to learn a policy $\pi \in \Pi$ that maximizes the expected total discounted reward $\mathcal{J}(\pi) = \mathbb{E}_{s_0 \sim p_0,\;\pi}
 \left[ \sum_{t=0}^{\infty} \gamma^t r(s_t, a_t) \right]$, where $s_t$ and $a_t$ denote the observed state and action taken at time $t$.

\subsection{Offline PbRL}
We consider the standard offline PbRL setting as follows: (i) The agent learns solely from a static dataset of unlabeled, reward-free trajectories $\mathcal{D}$ pre-collected by one or multiple behavior policies. No further online interaction with the environment is allowed to the agent.
(ii) Regarding supervision signals, the agent is given a dataset of pairwise human preference feedback on some of the trajectory segments in $\mathcal{D}$. Specifically, let $\sigma = ((s_0, a_0), \dots, (s_{k-1}, a_{k-1}))$ denote a segment of fixed length $k$ sampled from $\mathcal{D}$, and the feedback label $y \in \{1, -1, 0\}$ denotes a preference for the first segment ($\sigma^{(1)} \succ \sigma^{(2)}$), the second segment ($\sigma^{(2)} \succ \sigma^{(1)}$), or an equal preference, respectively. Each feedback is stored in the preference dataset $\mathcal{D}_{\mathrm{pref}}$ as a triple
$(\sigma^{(1)}, \sigma^{(2)}, y)$.

In PbRL, the Bradley-Terry (BT) model~\citep{bradley1952rank} is commonly used to formulate the connection between preferences and the rewards, assuming that the probability of preferring $\sigma^{(1)}$ over
$\sigma^{(2)}$ depends on the segment-level cumulative rewards:
\begin{equation}
   {P}_{\bm{\psi}}(\sigma^{(1)} \succ \sigma^{(2)}) = \frac{\exp\left(\sum_{t=0}^{k-1} r_{\bm{\psi}}(s_t^{(1)}, a_t^{(1)})/\tau\right)}{\sum_{i \in \{1, 2\}} \exp\left(\sum_{t=0}^{k-1} r_{\bm{\psi}}(s_t^{(i)}, a_t^{(i)})/\tau\right)},\label{eq:BT}
\end{equation}
where $r_{\bm{\psi}}$ is a parametrized reward function that assigns scores to state-action pairs and $\tau>0$ is the temperature parameter.
As a result, $r_{\bm{\psi}}$ can be optimized through minimizing the cross-entropy preference loss:
\begin{equation}
\begin{split}
    \mathcal{L}(\bm{\psi}) := - \mathbb{E}_{\mathcal{D}_{\mathrm{pref}}} \Big[ & \mathbb{I}(y=1) \log P_{\bm{\psi}}(\sigma^{(1)} \succ \sigma^{(2)}) \\
     + &\mathbb{I}(y=-1) \log P_{\bm{\psi}}(\sigma^{(2)} \succ \sigma^{(1)}) \Big],\label{eq:cross entropy}
\end{split}
\end{equation}
where $\mathbb{I}(\cdot)$ is the indicator function. Once the reward parameters $\bm{\psi}$ are learned, one can use the reward model to label the transitions in the offline dataset $\mathcal{D}$ and employ an off-the-shelf offline RL algorithm for policy learning.

\subsection{RFRL and Forward-Backward Representations}
In the typical RFRL setting, the goal is to learn compact and general-purpose representations that can be used for any downstream tasks, primarily via unsupervised pre-training from reward-free offline data $\mathcal{D}$. 
Among the RFRL methods, one pivotal machinery is the \textit{successor measure} defined as 
\begin{equation}
    \mathcal{M}^{\pi}(s, a, X):=\mathbb{E}_{\pi} \Big[ \sum_{t=0}^{\infty} \gamma^t \mathbb{I}((s_t,a_t)\in X) \Big\rvert s_0=s, a_0=a \Big],\label{eq:successor measure}
\end{equation}
which essentially captures the discounted future occupancy of the state-action pairs in $X\subseteq \mathcal{S}\times\mathcal{A}$ if the agent starts from $(s, a)$ and follows a policy $\pi$.
Define the standard Q function under a policy $\pi$ with respect to a reward function $r$ as $Q^\pi_r(s, a):= \mathbb{E}_{\pi} \left[ \sum_{t=0}^{\infty} \gamma^t {r}(s_t, a_t) \mid s_0=s, a_0=a \right]$.
Notably, one fundamental connection between the Q function with the successor measure is as follows\footnote{For ease of exposition, here we presume countable state and action spaces. The same argument can be directly extended to the uncountable case by a proper measure-theoretic argument.}:
\begin{equation}
    Q^{\pi}_r(s, a) = \sum_{(s', a')\in \mathcal{S}\times\mathcal{A}} \mathcal{M}^{\pi}(s, a, \{(s', a')\}) {r}(s', a'). \label{eq:Q and successor measure}
\end{equation}
Consequently, for any reward function $r$, learning the successor measure of an optimal policy for $r$ (denoted by $\pi_{r}^*$) implies learning the optimal Q function $Q^{\pi_{r}^*}$.

To efficiently learn $\mathcal{M}^{\pi_r^*}$, the FB framework~\citep{touati2021learning, touati2023does} decomposes the successor measure into the inner product of two encodings, \ie 
\begin{equation}
    \mathcal{M}^{\pi_r^*}(s,a,\{(s',a')\})=\mathbf{F}_\theta(s,a,\bm{z}_r)^\top \mathbf{B}_\omega(s',a'), \label{eq:FB and successor measure}
\end{equation}
where $\mathbf{F}_\theta: \mathcal{S} \times \mathcal{A} \times \mathbb{R}^{d}\rightarrow \mathbb{R}^{d}$ and $\mathbf{B}_{{\omega}}: \mathcal{S} \times \mathcal{A} \rightarrow \mathbb{R}^{d}$ are the \textit{Forward representation} and the \textit{Backward representation}, respectively, and $\bm{z}_r \in \mathbb{R}^{d}$ is a latent task specification vector to be designed to encode reward information.
By combining (\ref{eq:Q and successor measure})-(\ref{eq:FB and successor measure}), the FB method designs $\bm{z}_r$ such that the optimal Q function for reward function $r$ can be expressed as
\begin{equation}
Q^{\pi_{r}^*}(s,a) = \mathbf{F}_\theta(s, a,\bm{z}_r)^\top \bm{z}_r,
\end{equation}
where the task specification vector $\bm{z}_r$ is designed to capture reward information along with Backward representation as
\begin{equation}
\bm{z}_r = \mathbb{E}_{(s, a) \sim \mathcal{D}} \big[ \mathbf{B}_\omega(s, a)\, r(s, a) \big].
\end{equation}
Through this design, if $\mathbf{F}_\theta$ and $\mathbf{B}_{{\omega}}$ are well learned via pre-training, an optimal policy for a reward function $r$ can be directly recovered in a zero-shot manner at test time, \ie
\begin{equation}
\pi_r^{*}(s) = \arg\max_{a \in \mathcal{A}} \; \mathbf{F}_\theta(s, a,\bm{z}_r)^\top \bm{z}_r.
\end{equation}

\section{Methodology}
\label{sec:method}

In this section, we formally describe the proposed FB-PbRL framework. We identify two key limitations of naively applying reward-based and preference-based task specification to FB representations and propose two corresponding strategies integrated within FB-PbRL.


\subsection{Test-Time Preference-based Task Specification}
\label{sec:CPTS}

The standard FB framework enables zero-shot policy inference but relies on scalar reward signals $r(s,a)$ to construct the latent task specification vector $\bm{z}_r$. This reliance is problematic in offline PbRL, where supervision is provided only through pairwise preferences over trajectory segments rather than explicit rewards.

\textbf{Naive task specification via reward modeling.}
A straightforward way to adapt FB-based RFRL to PbRL is to first learn a proxy reward function $r_{\bm{\psi}}(s,a)$ from preference data (e.g., via~\Cref{eq:cross entropy}) and then use it for task specification at test time. However, similar to standard PbRL, this approach is vulnerable to reward over-optimization.
As shown in Figure~\ref{fig:bar_comparison}, the rewards learned under BT modeling exhibit a large mismatch with the ground-truth rewards, collapsing toward the mid-range and failing to capture the underlying true reward structure. Consequently, reward-based task specification yields substantially lower evaluation returns compared to task specification using true rewards.

\begin{figure}[t]
    \centering
    \begin{minipage}[c]{0.4\linewidth}
        \centering
        \includegraphics[width=\linewidth]{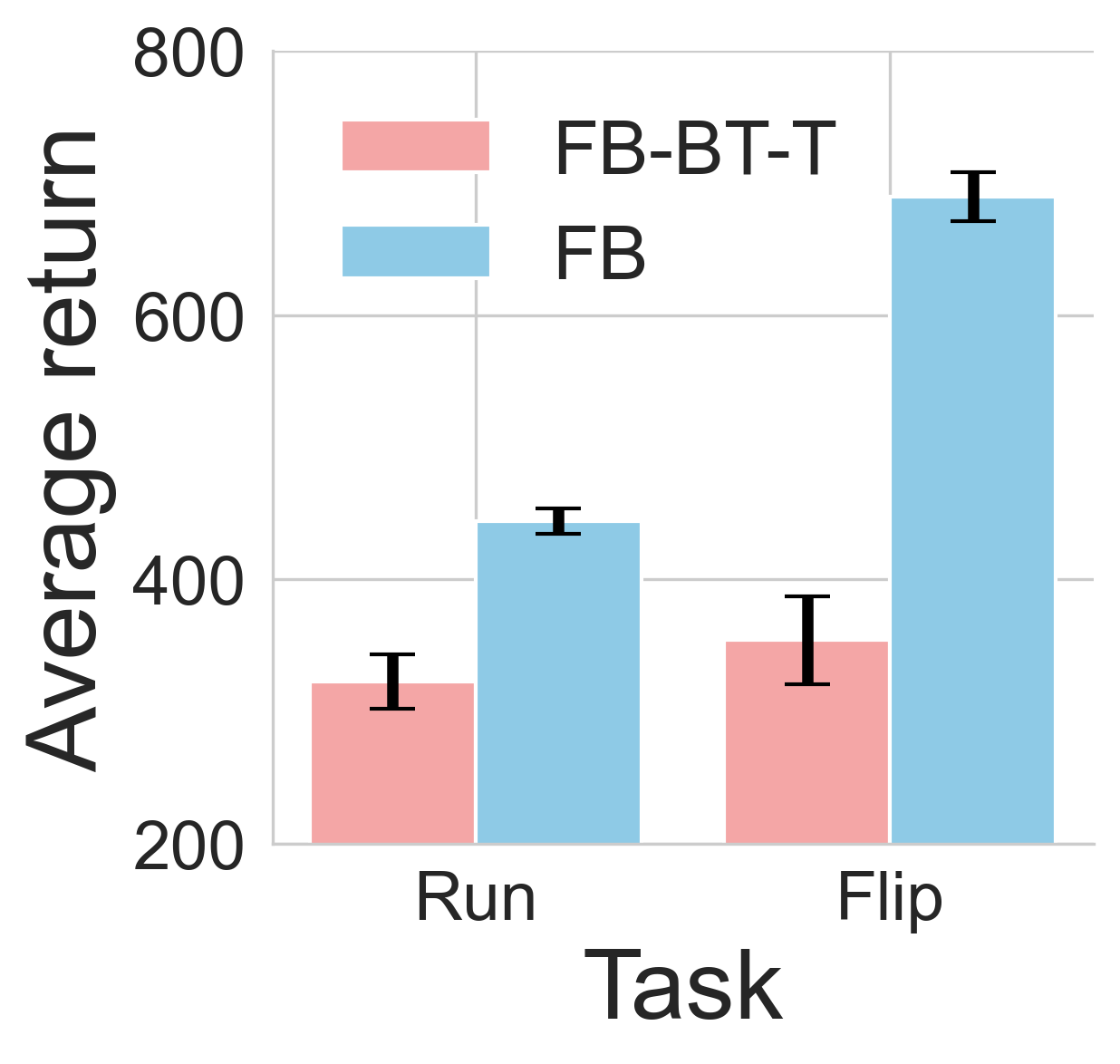}
        \vspace{-0.6em}
    \end{minipage}
    \begin{minipage}[c]{0.59\linewidth}
        \centering
        \includegraphics[width=\linewidth]{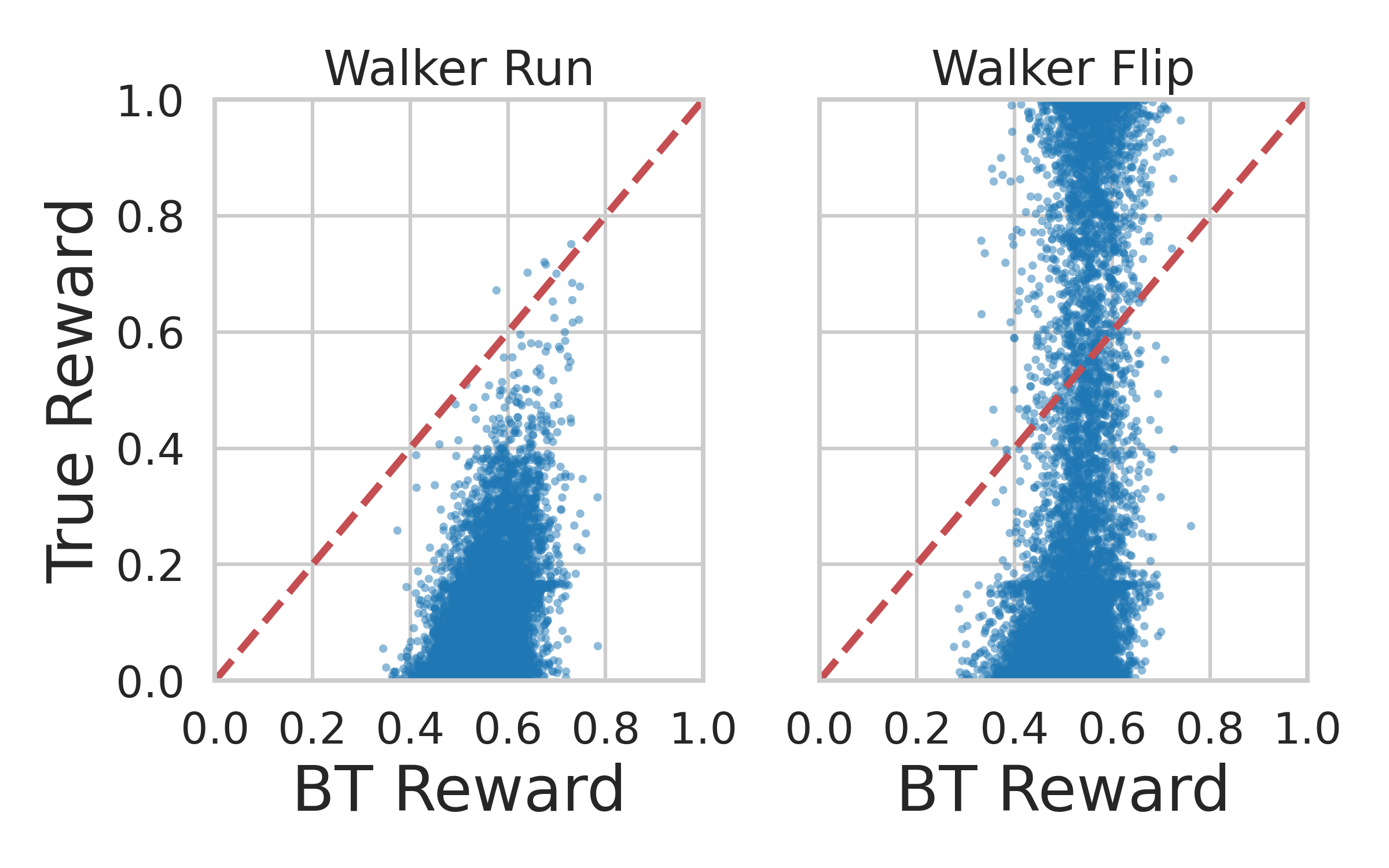}
        \vspace{-0.6em}
    \end{minipage}

    \vspace{-0.3em}
    \caption{
   \textbf{Reward over-optimization under the proxy rewards on \textit{Walker-run} and \textit{Walker-flip}.}
    ``FB-BT-T" performs test-time task inference using a proxy BT-based reward function, whereas ``FB" uses ground-truth rewards in task specification. (a) The bar chart reports average return with error bars indicating standard deviation over 5 seeds. (b) The scatter plot visualizes predicted rewards under BT modeling against the ground-truth rewards.}
    
    \label{fig:bar_comparison}
\end{figure}

\textbf{Key idea: Connecting RFRL with SimCLR contrastive learning.} Alternatively, we propose to exploit the intrinsic structure of the FB latent space and directly infer the task specification vector without explicitly learning a reward model. Since the FB latent space is designed to span all reward functions~\citep{touati2021learning}, we can interpret the task specification problem as finding a direction $\bm{z} \in \mathbb{R}^d$ that best aligns with the observed preferences.
Our key insight is that, under mild structural conditions, the standard preference loss in PbRL can be equivalently reformulated as a contrastive objective akin to SimCLR~\citep{chen2020simple}. 

Let ${\mathbf{F}_{\bar{\theta}}}$ and ${\mathbf{B}_{\bar{\omega}}}$ denote the pre-trained Forward and Backward representations, respectively.
Consider a class of linearly parametrized reward functions with respect to the pre-trained Backward representations, \ie
\begin{equation}
    r_{\bm{\psi}}(s,a):={\mathbf{B}_{\bar{\omega}}}(s,a)^\top\bm{\psi}, \label{eq:linear reward}
\end{equation}
where $\bm{\psi}\in \mathbb{R}^d$.
Under linear realizability, we can derive the corresponding latent vector $\bm{z}$ for each reward function as
\begin{align}
\bm{z}_{\bm{\psi}}&:=\mathbb{E}_{(s,a)\sim\mathcal{D}}\big[{\mathbf{B}_{\bar{\omega}}}(s,a)r_{\bm{\psi}}(s,a)\big]\\
    &=\underbrace{\Big(\mathbb{E}_{(s,a)\sim\mathcal{D}}\Big[{\mathbf{B}_{\bar{\omega}}}(s,a){\mathbf{B}_{\bar{\omega}}}(s,a)^\top\Big] \Big)}_{=:\mathbf{H}_{\mathbf{B}}}\bm{\psi},\label{eq:z H psi}
\end{align}
where $\mathbf{H}_{\mathbf{B}}$ is a $d\times d$ and non-negative definite matrix.
Clearly, $\mathbf{H}_{\mathbf{B}}$ would remain fixed if $\mathbf{B}_{\bar{\omega}}$ is fixed after pre-training.
The above suggests that if $\mathbf{H}_{\mathbf{B}}$ is invertible, then there exists a bijection between $\bm{z}_{\bm{\psi}}$ and $\bm{\psi}$, \ie $\bm{\psi}=\mathbf{H}_{\mathbf{B}}^{-1}\bm{z}_{\bm{\psi}}$.
For notational convenience, for any trajectory segment $\sigma = \bigl( (s_1, a_1), \dots, (s_k, a_k) \bigr)$, we define 
\begin{equation}
    \mathbf{B}_{\bar{\omega}}(\sigma):=\frac{1}{k}\sum_{i=1}^{k} \mathbf{B}_{\bar{\omega}}(s_i, a_i).\label{eq:B omega}
\end{equation}
Accordingly, by using (\ref{eq:linear reward})-(\ref{eq:z H psi}), we can rewrite the preference loss in (\ref{eq:cross entropy}) in terms of latent $\bm{z}$ as
\begin{equation}
    \mathcal{L}_{\text{pref}}(\bm{z};\bar{\omega})=-\mathbb{E}\Big[\log\frac{\exp(\bm{z}^\top \mathbf{H}_{\mathbf{B}}^{-1}\bm{z}^+_\sigma)}{\exp(\bm{z}^\top \mathbf{H}_{\mathbf{B}}^{-1}\bm{z}^+_\sigma)+\exp(\bm{z}^\top \mathbf{H}_{\mathbf{B}}^{-1}\bm{z}^-_\sigma)}\Big],\label{eq:z pref loss}
\end{equation}
where the expectation is taken over some sampling distribution of $k$-step preference pairs $(\sigma^+,\sigma^-)$, the temperature $\tau=k$, and the two latent vectors are defined as
\begin{equation}
    \bm{z}_\sigma^+ :=\mathbf{B}_{\bar{\omega}}(\sigma^+),\
    \bm{z}_{\sigma}^- := \mathbf{B}_{\bar{\omega}}(\sigma^-).\label{eq:z+ and z-}
\end{equation}

Notably, the equivalence to the SimCLR objective precisely holds under two structural conditions that are standard within the FB framework: (i) \textit{Linear reward parameterization} as defined in (\ref{eq:linear reward}); and (ii) \textit{Orthonormality of the backward embedding} ($\mathbf{H}_{\mathbf{B}}=\mathbf{I}_d$), which approximately holds in practice since standard FB pre-training imposes orthonormality on $\mathbf{B}_{\bar{\omega}}$ through an auxiliary loss~\citep{touati2021learning}. 
These conditions make explicit that the equivalence follows directly from the latent linear reward structure, thereby providing a principled bridge between FB representations and preference optimization. A detailed derivation of this connection is provided in Appendix~\ref{app:rfrl_simclr}.

\textbf{Technique 1: Contrastive preference-based task specification (CPTS).}
As suggested by (\ref{eq:z pref loss}), a proper latent task specification (denoted by $\bm{z}^*_{\mathrm{CPTS}}$) can be derived at test time by minimizing the contrastive preference loss in (\ref{eq:z pref loss}), \ie 
\begin{equation}
    \bm{z}^*_{\mathrm{CPTS}}:=\arg\min_{\bm{z}}\mathcal{L}_{\text{pref}}(\bm{z};\bar{\omega}).
\end{equation}
Intuitively, the SimCLR loss encourages $\bm{z}$ to align with the latent embeddings associated with preferred trajectory segments while repelling it from non-preferred ones. 

Notably, the proposed CP-TS technique enjoys two nice features: (i) \textit{No reward model learning}: The latent task specification can be determined directly based on the preference dataset $\mathcal{D}_{\text{pref}}$, without explicitly learning a reward model. (ii) \textit{Optimization of a convex function with respect to a low-dimensional decision variable}: Crucially, as $\mathcal{L}_{\text{pref}}$ in (\ref{eq:z pref loss}) in convex in $\bm{z}$ and $\bm{z}$ is usually a low-dimensional vector in practice (\eg a few hundred dimensions), the resulting optimization with respect to $\bm{z}$ is significantly simpler than learning an explicit preference or reward model, which typically requires training a high-capacity neural network.

Specifically, even if cosine similarity is used, it reduces to an inner product, $\cos(\bm{z}, \bm{z}_\sigma) = \bm{z}^\top \bm{z}_\sigma$, because $\bm{z}$ and $\bm{z}_\sigma$ are normalized by default in the FB framework, thereby retaining the convex structure. While jointly optimizing the representation network is inherently non-convex, our convexity guarantee specifically applies to this test-time search where the representation is frozen and we solve solely for $\bm{z}$.

Crucially, this test-time search procedure comes with formal theoretical guarantees. By extending the analysis of~\citep{touati2021learning} to the PbRL setting, we show that representation sufficiency reduces to preference data coverage and estimation error, yielding near-optimal downstream control. We provide the formal proposition and proof sketch in \Cref{app:theory}.




\subsection{Preference-Guided Fine-Tuning}
\label{sec:PGFT}
\begin{figure}[t]
    \centering
    
    %
    %
    \begin{minipage}[c]{0.82\linewidth} 
        \centering
        
        \begin{subfigure}[b]{0.48\linewidth}
            \centering
            \includegraphics[width=\linewidth]{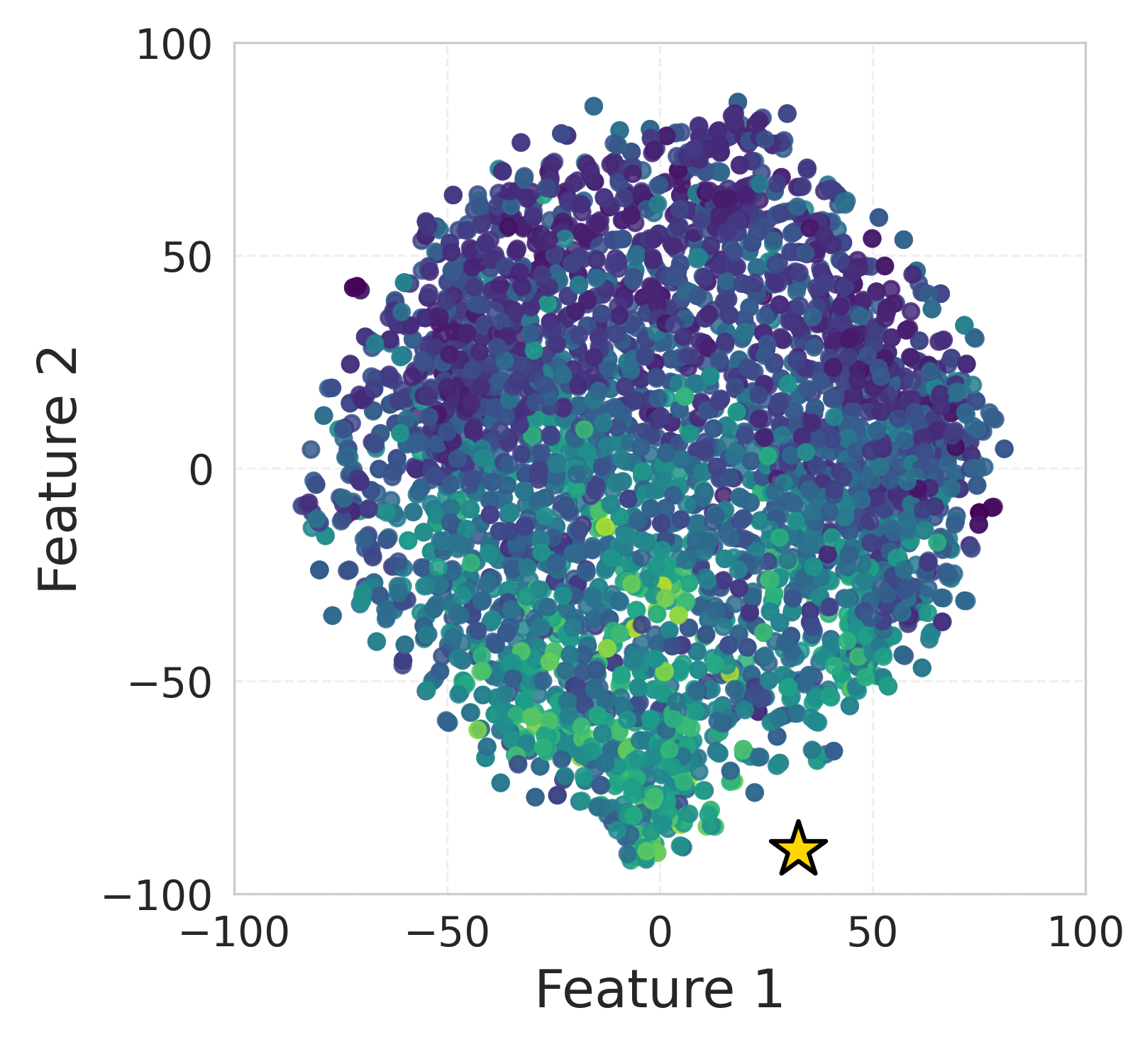}
            \caption{Ours-T} 
            \label{subfig:ours_t}
        \end{subfigure}
        \hspace{10pt}
        \begin{subfigure}[b]{0.39\linewidth}
            \centering
            \includegraphics[width=\linewidth]{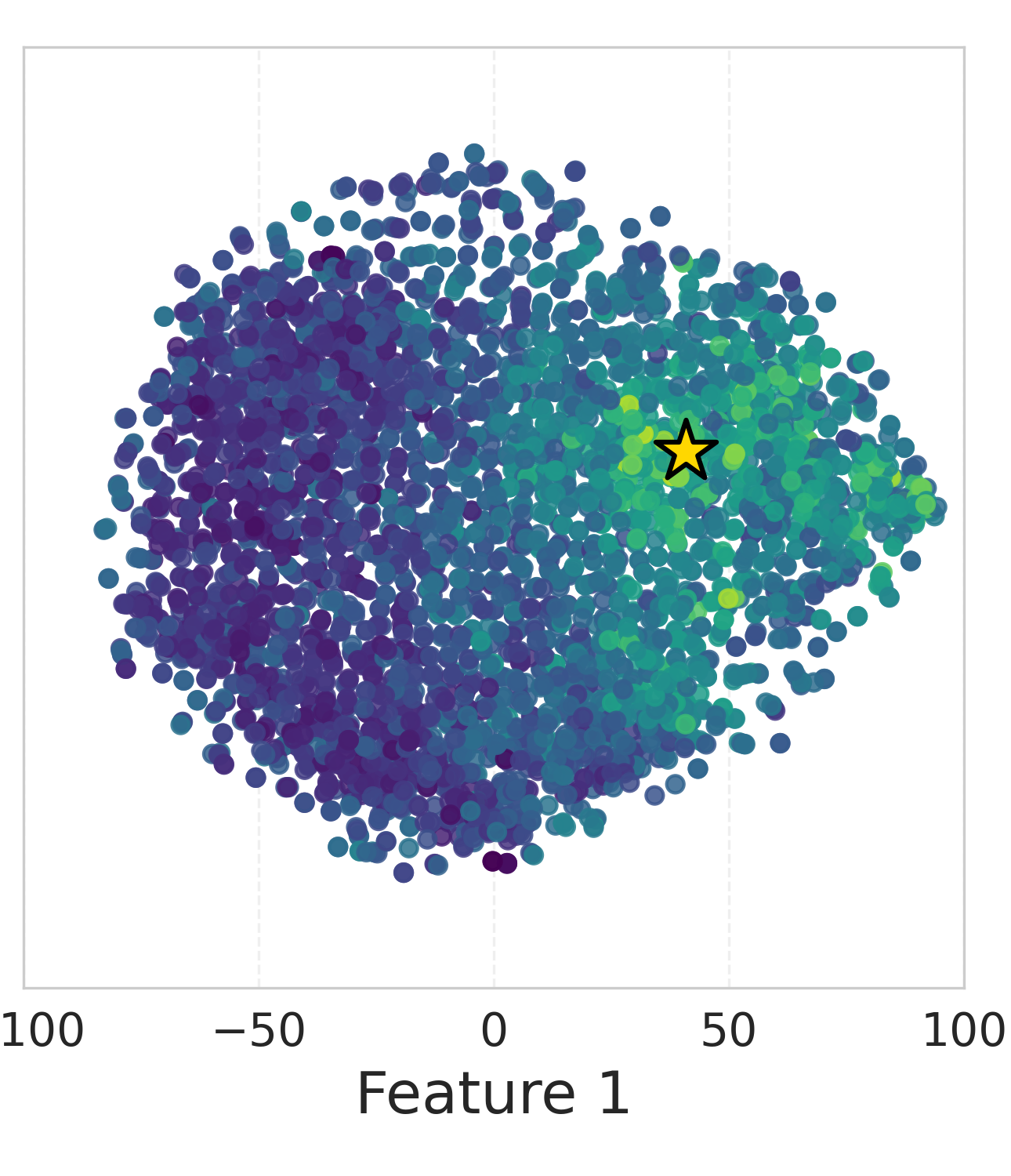}
            \caption{Ours-FT}
            \label{subfig:ours_ft}
        \end{subfigure}
        
    \end{minipage}
    \hspace{2mm}
    %
    %
    \begin{minipage}[c]{0.09\linewidth}
        \centering
        \includegraphics[width=\linewidth]{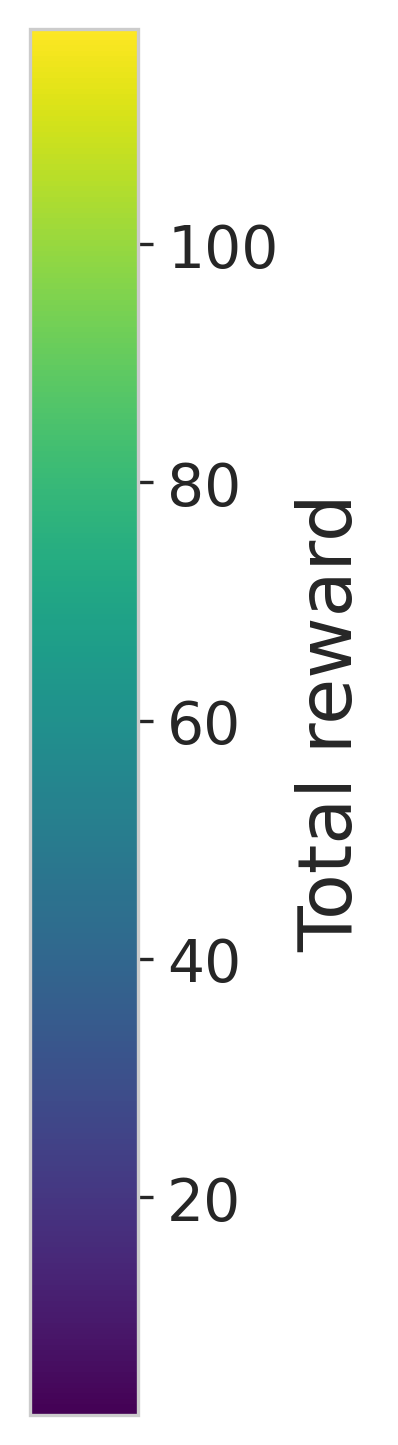}
        \vspace{-0.5mm}
    \end{minipage}

    \vspace{-0.3mm}
    
    \caption{
   \textbf{Reward-aligned latent geometry.}
    t-SNE visualization of the learned latent $\bm{z}_{\sigma}\equiv \mathbf{B}_{\omega}(\sigma)$ on the \textit{Walker-walk} task.
    \textbf{(a)} Test-time preference-based task specification without fine-tuning.
    \textbf{(b)} Preference-guided fine-tuning.
    The $\bm{z}_{\sigma}$s of the offline trajectory segments are colored by ground-truth returns, with brighter hues indicating higher returns.
    The final task specifications $\bm{z}^*_{\mathrm{CPTS}}$ and $\bm{z}^*_{\mathrm{PGFT}}$ are marked by a star in (a) and (b), respectively.}
    
    \label{fig:reward_gradient}
\end{figure}

\begin{figure}[h]
    \centering
    \hspace{20pt}
    \includegraphics[width=.9\linewidth]{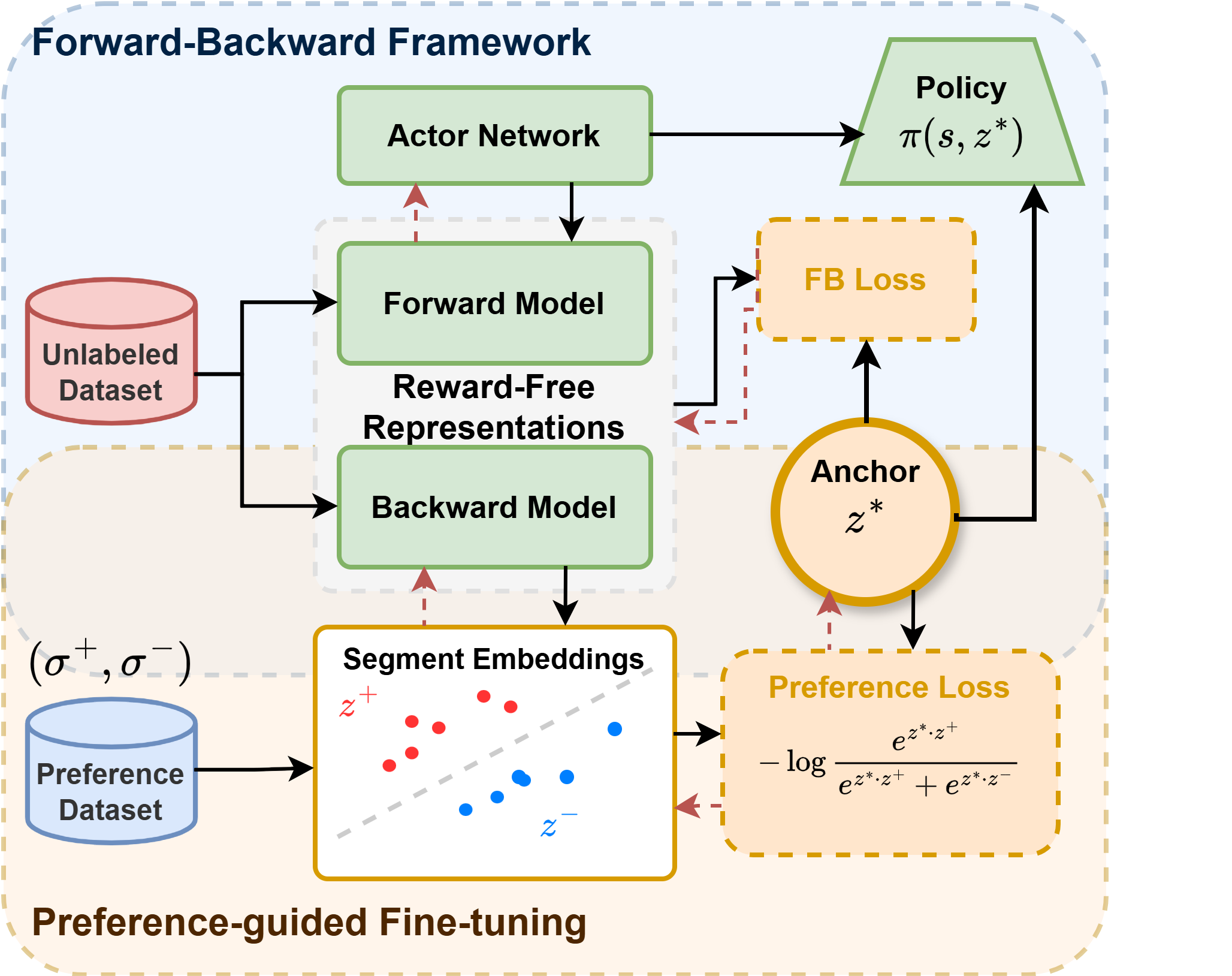}
    \caption{
        \textbf{The overall training pipeline of FB-PbRL.} The architecture comprises two integral parts:
        \textbf{(Top) Forward-Backward Framework:} Adopt FB decomposition to learn latent state-action representations and a latent-conditioned policy from offline data.
        \textbf{(Bottom) Preference-Guided Search and Fine-tuning:} Leverage segment embeddings and a contrastive objective over preference data to refine FB representations and optimize the anchor $z^*$, thereby conditioning the policy $\pi(s, z^*)$ for preference alignment.
    }
    \label{fig:framework}
    \vspace{-1em} 
\end{figure}
\textbf{Limitations of purely test-time task specification.} 
Despite that test-time task specification is well-motivated, its effectiveness can be limited in practice due to the design of FB pre-training.
Specifically, as pre-training is meant to learn general-purpose representations across tasks and is unaware of the specific downstream task at test time, the $\mathbf{F}_{\bar{\theta}}$ and $\mathbf{B}_{\bar{\omega}}$ are typically pre-trained under task vectors $\bm{z}$ sampled from a less informative prior, \eg standard normal $\bm{z} \sim \mathcal{N}(0, I_{d})$. As a result, the inferred $\bm{z}_{\mathrm{CPTS}}^*$ under this pre-trained $\mathbf{B}_{\bar{\omega}}$ can likely be distant from those $(\bm{z}_\sigma^+,\bm{z}_\sigma^-)$ induced by the preference dataset as in (\ref{eq:z+ and z-}). This phenomenon can be commonly observed in practice, \eg in~\Cref{subfig:ours_t}.

\textbf{Technique 2: Preference-guided fine-tuning (PG-FT).} 
To tackle the above limitation, we propose to fine-tune the FB models jointly with the task specification (denoted by $\bm{z}^*_{\mathrm{PGFT}}$). Instead of treating the FB's latent space as fixed, we adapt the $\mathbf{F}_{\theta}$ and $\mathbf{B}_{\omega}$ via fine-tuning guided by $\mathcal{L}_{\text{pref}}$ in (\ref{eq:z pref loss}) conditioned on the current $\bm{z}^*$  such that task-relevant directions are easier to identify and exploit. 

The effect of PG-FT is illustrated in Figure~\ref{fig:reward_gradient}.
We have two empirical observations: (i) \textit{Reward-aligned latent geometry}: PG-FT reshapes the latent geometry and produces a smoother reward landscape that effectively guides the anchor $\bm{z}^*$ towards a well-performing task representation. (ii) \textit{In-distribution latent $\bm{z}^*$}: The final $\bm{z}^*$ after fine-tuning is more likely to be ``in-distribution" as $\mathbf{F}_{\theta}$ and $\mathbf{B}_{\omega}$ are fine-tuned along with $\bm{z}^*$ under the guidance of $\mathcal{L}_{\text{pref}}$.


\subsection{FB-PbRL: Training Objectives and Implementation}

In this subsection, we are ready to put everything together and summarize the full FB-PbRL algorithm. The detailed architecture is presented in Figure~ \ref{fig:framework}. Below we first describe the main loss components used in FB-PbRL. 

\textbf{Measure Loss. } To learn the $\mathbf{F}_{\theta}$ and $\mathbf{B}_{\omega}$ networks, we leverage the standard measure loss in the FB framework~\citep{touati2023does}, \ie minimizing the Bellman residual defined by $\mathbf{F}_{\theta}$ and $\mathbf{B}_{\omega}$ as follows

\vspace{-5mm}
\begin{align}
\mathcal{L}_{\mathrm{m}}(\theta, &\omega; \bm{z})
:=
\;
\mathbb{E}_{\substack{
(s,a,s') \sim \mathcal{D} \\
(s^\dagger, a^\dagger) \sim \mathcal{D}
}}
\Big[
\big(
\mathbf{F}_{\theta}(s, a, \bm{z})^\top \mathbf{B}_{\omega}(s^\dagger, a^\dagger)
\nonumber\\
\;&
- \gamma \,
\mathbf{F}_{\hat{\theta}}\!\left(
s', {\pi}(s'), \bm{z}
\right)^\top
\mathbf{B}_{\hat{\omega}}(s^\dagger, a^\dagger)
\big)^2
\Big]
\nonumber\\
\;&
- 2\,
\mathbb{E}_{(s,a,s') \sim \mathcal{D}}
\Big[
\mathbf{F}_{\theta}(s, a, \bm{z})^\top
\mathbf{B}_{\omega}(s', a')
\Big].
\label{eq:measure_loss}
\end{align}
\vspace{-5mm}

\textbf{Orthonormality Loss.} To preserve the required structural property of the Backward representation (\ie $\mathbf{H}_{\mathbf{B}}\approx \mathbf{I}_d$), we impose an orthonormality regularization on $\mathbf{B}_\omega$ as in the standard FB framework, \ie
\begin{equation}
    \mathcal{L}_{\mathrm{ortho}}(\omega)
:=\Big\lVert\mathbb{E}_{(s,a)\sim\mathcal{D}}\big[{\mathbf{B}_{{\omega}}}(s,a){\mathbf{B}_{{\omega}}}(s,a)^\top\big]-\mathbf{I}_d \Big\rVert^2_{\text{F}},\label{eq:ortho loss}
\end{equation}
where $\lVert \cdot\rVert_\text{F}$ denote the Frobenius norm. Due to the page limit, we defer the simplified version of (\ref{eq:ortho loss}) to~\Cref{sec:alg}.

\textbf{Preference Loss.}
As discussed in Sections~\ref{sec:CPTS} and~\ref{sec:PGFT}, the contrastive preference loss $\mathcal{L}{\mathrm{pref}}$ serves two roles: (i) under CPTS, it is minimized to search for the task specification $\bm{z}$; and (ii) under PG-FT, its implicit dependence on $\mathbf{B}_{\omega}$ through $(\bm{z}_{\sigma}^+,\bm{z}_{\sigma}^-)$ enables fine-tuning of $\mathbf{B}_{\omega}$.
Standard FB pre-training assumes latent vectors $\bm{z}$ lie on a unit hypersphere, consistent with the normalization used in SimCLR-style contrastive learning. Let $\mathrm{sim}(\cdot,\cdot)$ denote cosine similarity. We define the preference loss as
\begin{equation}
\mathcal{L}_{\text{pref}}( \bm{z},\omega):= - \mathbb{E}_{ \mathcal{D}_{\mathrm{pref}}} \left[ \log \frac{e^{\mathrm{sim}(\bm{z}, \bm{z}^+_\sigma(\omega))}}{e^{\mathrm{sim}(\bm{z}, \bm{z}^+_\sigma(\omega))} + e^{\mathrm{sim}(\bm{z}, \bm{z}^-_\sigma(\omega))}} \right],
\label{eq:pref_loss}
\end{equation}
where the expectation is over sampled preference pairs $(\sigma^+,\sigma^-)$ from $\mathcal{D}_{\mathrm{pref}}$, and the dependence on $\mathbf{B}_\omega$ is through $\bm{z}^+_\sigma(\omega)$ and $\bm{z}^-_\sigma(\omega)$.

\textbf{Full FB-PbRL framework.} The proposed FB-PbRL method consists of two stages as follows.
\vspace{-2mm}
\begin{itemize}[leftmargin=*]
    \item \textbf{RFRL pre-training}: FB-PbRL adopts the measure loss in (\ref{eq:measure_loss}) and the orthonormality loss in (\ref{eq:ortho loss}) to pre-train the Forward and Backward representations.
    \vspace{-1mm}
    \item \textbf{Preference-guided search and fine-tuning}: In the fine-tuning stage, FB-PbRL alternates between two subroutines: (i) Search for a latent vector $\bm{z}^*$ based on the guidance of $\mathcal{L}_{\mathrm{pref}}(\bm{z};\omega)$, \eg via stochastic gradient descent. (ii) Conditioning on the current latent vector $\bm{z}^*$, fine-tune $\mathbf{F}_{\theta}$ and $\mathbf{B}_{\omega}$ based on $\mathcal{L}_{\mathrm{m}}(\theta, \omega; \bm{z}^*)$, $\mathcal{L}_{\text{ortho}}(\omega)$, and $\mathcal{L}_{\mathrm{pref}}(\omega;\bm{z}^*)$.
    One notable difference from FB pre-training is that $\bm{z}^*$ serves as a \textit{dynamically evolving anchor} such that the $\mathbf{F}_{\theta}$ and $\mathbf{B}_{\omega}$ can specialize its latent geometry around the specific behavior required by the target task.
\end{itemize}
\vspace{-2mm}


The FB-PbRL framework is illustrated in~\Cref{fig:framework}, with pseudocode summarized in Algorithm~\ref{alg:fb_pbrl_compact} and a detailed version in Algorithm~\ref{alg:fb_pbrl}. Note that general FB supports both state- and state–action-dependent backward representations; following the original implementation~\citep{touati2023does}, we adopt state-dependent representations in the experiments.

\begin{algorithm}[t]
\caption{FB-PbRL}
\label{alg:fb_pbrl_compact}
\begin{algorithmic}[1]
\STATE \textbf{Input:} Datasets $\mathcal{D}$ and $\mathcal{D}_{\mathrm{pref}}$, pre-trained ${\mathbf{F}_{\bar{\theta}}}$ and ${\mathbf{B}_{\bar{\omega}}}$, loss coefficients $\lambda$ and $\alpha$, and initial $\bm{z}^*\in \mathbb{R}^d$.

\FOR{$t = 1,2,\dots$}
    \STATE Sample a batch of transitions $\mathcal{B}_t$ from $\mathcal{D}$
    \STATE Update ${\mathbf{F}_{\bar{\theta}}}$ and ${\mathbf{B}_{\bar{\omega}}}$ by stochastic gradient on the loss $\mathcal{L}_{\mathrm{m}}(\bar{\theta}, \bar{\omega}; \bm{z}^*)+\lambda  \mathcal{L}_{\mathrm{ortho}}(\bar{\omega})$ in (\ref{eq:measure_loss})-(\ref{eq:ortho loss}) with $\mathcal{B}_t$

    \STATE Sample a batch of preferences $\mathcal{P}_t$ from $\mathcal{D}_{\mathrm{pref}}$
    \STATE Compute $\mathcal{L}_{\mathrm{pref}}(\bm{z}^*,\bar{\omega})$ as in (\ref{eq:pref_loss}) with $\mathcal{P}_t$

    \STATE Update ${\mathbf{B}_{\bar{\omega}}}$ and $\bm{z}^*$ by gradient on $\alpha\cdot\mathcal{L}_{\mathrm{pref}}(\bm{z}^*,\bar{\omega})$
    \STATE Update policy $\pi$ based on ${\mathbf{F}_{\bar{\theta}}}$, ${\mathbf{B}_{\bar{\omega}}}$, and $\bm{z}^*$
\ENDFOR
\end{algorithmic}
\end{algorithm}
\section{Experiments}
\label{sec:Experiments}

\begin{table*}[!t]
\caption{\textbf{Performance on DMC under the PbRL Protocol.} \textit{Ours-T} denotes FB-PbRL with test-time CPTS only, while \textit{Ours-FT} denotes the full FB-PbRL framework with fine-tuning. The yellow and gray shading indicate the best and second-best performances of each row.}
\label{tab:PbRL_results}
\vskip 0.1in
\centering
\begin{small}
\setlength{\tabcolsep}{3pt}
\renewcommand{\arraystretch}{0.92}
\scalebox{0.9}{
\begin{tabular*}{\textwidth}{c|l|@{\extracolsep{\fill}}ccccccc}
\toprule
\textbf{Domain} & \textbf{Task} & DPPO & OPPO & OPRL & CLARIFY & LIRE & Ours-T & Ours-FT \\
\midrule

\multirow{5}{*}{\rotatebox[origin=c]{90}{\textbf{Cheetah}}} 
 & Run 
 & 71.2$\pm$19.8 & 97.9$\pm$14.5 & 115.1$\pm$8.7 & 106.7$\pm$5.1 & 109.1$\pm$14.0 
 & \second{207.6$\pm$29.9} & \best{249.5$\pm$74.5} \\
 & Run Backward 
 & 74.4$\pm$17.9 & 113.1$\pm$8.0 & 99.9$\pm$7.2 & 93.6$\pm$11.0 & 135.6$\pm$28.1 
 & \second{160.7$\pm$44.2} & \best{333.9$\pm$28.5} \\
 & Walk 
 & 285.1$\pm$52.6 & 258.4$\pm$26.4 & 445.1$\pm$9.3 & 446.5$\pm$32.5 & 419.3$\pm$23.8 
 & \second{665.9$\pm$91.9} & \best{920.3$\pm$30.7} \\
 & Walk Backward 
 & 378.2$\pm$73.2 & 533.2$\pm$54.5 & 445.5$\pm$34.9 & 439.3$\pm$36.8 & 589.7$\pm$170.3 
 & \second{660.4$\pm$130.9} & \best{983.3$\pm$0.5} \\
 & 
 \textit{Average} 
 & 202.3 & 200.9 & 276.4 & 271.5 & 313.4 
 & \second{344.7} & \best{621.7} \\
\midrule

\multirow{5}{*}{\rotatebox[origin=c]{90}{\textbf{Walker}}} 
 & Walk 
 & 218.0$\pm$9.5 & 219.6$\pm$3.7 & 214.9$\pm$2.6 & 214.3$\pm$4.2 & 199.1$\pm$2.8 
 & \second{714.3$\pm$138.5} & \best{961.5$\pm$4.2} \\
 & Stand 
 & 403.1$\pm$11.3 & 413.4$\pm$6.9 & 436.5$\pm$6.5 & 423.1$\pm$6.0 & 394.5$\pm$1.8 
 & \second{677.7$\pm$38.2} & \best{980.3$\pm$3.4} \\
 & Run 
 & 91.8$\pm$3.1 & 93.6$\pm$0.9 & 96.4$\pm$2.1 & 94.8$\pm$1.9 & 88.8$\pm$0.6 
 & \second{282.8$\pm$28.0} & \best{503.7$\pm$14.4} \\
 & Flip 
 & 256.4$\pm$11.3 & 263.2$\pm$4.8 & 267.4$\pm$5.7 & 263.3$\pm$3.4 & 247.5$\pm$1.7 
 & \second{458.7$\pm$30.4} & \best{606.0$\pm$7.9} \\
 & 
 \textit{Average} 
 & 242.3 & 247.5 & 253.8 & 248.9 & 232.5 
 & \second{533.4} & \best{762.9} \\
\midrule

\multirow{5}{*}{\rotatebox[origin=c]{90}{\textbf{Quadruped}}} 
 & Run 
 & 204.5$\pm$39.2 & 395.5$\pm$9.4 & 457.8$\pm$7.2 & 444.7$\pm$8.7 & 246.8$\pm$20.3 
 & \second{503.4$\pm$30.1} & \best{589.9$\pm$36.3} \\
 & Walk 
 & 194.4$\pm$73.7 & 398.7$\pm$33.3 & 420.6$\pm$32.5 & 350.6$\pm$15.9 & 284.3$\pm$43.7 
 & \second{606.1$\pm$35.2} & \best{944.2$\pm$8.9} \\
 & Stand 
 & 458.2$\pm$57.9 & 779.9$\pm$29.1 & \second{917.9$\pm$21.7} & 901.0$\pm$17.6 & 523.7$\pm$58.6 
 & 886.1$\pm$28.3 & \best{991.1$\pm$0.3} \\
 & Jump 
 & 379.1$\pm$104.3 & 703.0$\pm$19.0 & \second{728.1$\pm$2.2} & 715.4$\pm$12.9 & 459.9$\pm$89.0 
 & 658.1$\pm$42.6 & \best{862.2$\pm$19.8} \\
 & 
 \textit{Average} 
 & 309.1 & 569.3 & 631.1 & 602.9 & 378.7 
 & \second{663.4} & \best{846.9} \\
\midrule

\multirow{5}{*}{\rotatebox[origin=c]{90}{\textbf{PointMass}}}
 & Top Left 
 & 52.4$\pm$37.4 & 59.8$\pm$9.2 & \second{558.1$\pm$71.9} & 536.6$\pm$68.0 & 160.7$\pm$21.2 
 & 70.2$\pm$76.3 & \best{926.6$\pm$30.8} \\
 & Top Right 
 & 4.1$\pm$1.0 & 19.4$\pm$3.5 & \second{399.7$\pm$125.7} & 287.2$\pm$41.5 & 106.1$\pm$33.4 
 & 77.8$\pm$76.3 & \best{597.8$\pm$148.5} \\
 & Bottom Left 
 & 3.9$\pm$1.2 & 11.9$\pm$3.8 & 338.8$\pm$29.2 & \second{369.0$\pm$131.7} & 95.2$\pm$16.0 
 & 116.7$\pm$120.2 & \best{752.8$\pm$67.0} \\
 & Bottom Right 
 & 4.8$\pm$1.6 & 5.5$\pm$1.2 & \second{53.2$\pm$35.8} & \best{78.2$\pm$29.9} & 47.2$\pm$12.8 
 & 11.7$\pm$20.2 & 6.2$\pm$6.1 \\
 & \textit{Average} 
 & 16.3 & 24.1 & \second{337.5} & 317.8 & 102.3 & 69.1 & \best{570.8} \\

\bottomrule
\end{tabular*}}
\end{small}
\end{table*}

\subsection{Experimental Setup}

\textbf{Tasks and Datasets.} To avoid the survival instinct issue~\citep{li2023survival} in D4RL~\citep{fu2020d4rl}, we primarily evaluate on the \textbf{DeepMind Control Suite (DMC)}~\citep{tassa2018deepmind}. We consider {16 tasks} across {four} domains: three locomotion domains (\textit{Cheetah}, \textit{Walker}, \textit{Quadruped}) and the \textit{Pointmass} goal-reaching domain. Following standard zero-shot RL practice, we use unsupervised RND-based datasets from ExORL~\citep{yaratsdon}, representing a challenging low-quality regime without reward supervision.
We further evaluate generalization on human preference datasets for robotic manipulation, including (i) the \textbf{Adroit} \textit{Pen} dataset from Preference Transformer~\citep{kim2023preference} and (ii) the \textbf{MetaWorld}~\citep{yu2020meta} \textit{Button-Press-Topdown} dataset from LiRE~\citep{choi2024listwise}. 

For DMC tasks, we use a scripted teacher~\citep{lee2021b, mu2025clarify} to generate synthetic preferences from ground-truth returns, with a tie condition to model human ambiguity when return differences are small (see Appendix~\ref{sec:implement_details}). For Pointmass, preferences are deterministic to avoid excessive ambiguity. Performance is measured by episodic return for DMC and Adroit, and success rate for MetaWorld. Additional details are in Appendix~\ref{sec:Task_Dataset}.

\textbf{Offline PbRL Baselines.} We compare FB-PbRL with representative offline PbRL methods spanning diverse design choices. \textbf{DPPO}~\citep{an2023direct} and \textbf{OPPO}~\citep{kang2023beyond} incorporate contrastive learning into PbRL, while \textbf{OPRL}~\citep{shinbenchmarks} and the state-of-the-art \textbf{CLARIFY}~\citep{mu2025clarify} use active query selection to improve label efficiency. These methods are allowed to actively query additional feedback from the offline dataset, consistent with their intended use. We also include \textbf{LiRE}~\citep{choi2024listwise}, which leverages listwise reward estimation for richer supervision.

\textbf{RFRL and Zero-Shot RL Baselines.} We additionally compare against representative RFRL methods that learn reward-free representations for zero-shot generalization. This includes \textbf{FB}~\citep{touati2021learning}, which forms the backbone of our framework, as well as geometry- and structure-aware methods such as \textbf{Laplacian}~\citep{wu2019laplacian} and \textbf{HILP}~\citep{park2024foundation}. We further evaluate recent approaches including \textbf{PSM}~\citep{agarwal2025proto} and \textbf{RLDP}~\citep{jajoo2025regularized}. All RFRL baselines require access to ground-truth rewards at test time for task specification, whereas FB-PbRL operates solely on preference data. We re-implement FB for consistency, and report results for other zero-shot methods from~\citep{jajoo2025regularized}. Implementation details are in Appendix~\ref{sec:implement_details}.


\textbf{Experiment Protocols.} We follow standard PbRL benchmarks using a fixed budget of {2,000} preference pairs (termed \textbf{PbRL Protocol}). To align with zero-shot RL baselines that access 10,000 reward-labeled transitions at test time, we also adopt this setup (termed \textbf{Zero-Shot RL Protocol}), where preferences are constructed from a subset of 400 trajectory segments (10,000 transitions), ensuring comparable supervision scale to~\citep{jajoo2025regularized}.
Unless stated otherwise, all the results are reported as mean ± std over 5 seeds.

\subsection{Main Results}
\textbf{Comparison with offline PbRL baselines.}
Table~\ref{tab:PbRL_results} compares FB-PbRL with offline PbRL baselines on DMC tasks, where low-quality offline datasets severely limit existing methods and prevent effective policy learning. In contrast, FB-PbRL outperforms nearly all baselines even with contrastive learning applied only at test time, indicating that its representations remain discriminative under severe distribution shifts. Full fine-tuning further improves performance, achieving the best results across tasks, with Ours-FT exceeding a score of 900 on over half of them. These results demonstrate that FB-PbRL effectively leverages preference supervision to overcome prior limitations.

\textbf{Comparison with zero-shot RFRL baselines.}
Table~\ref{tab:zeroshot_results} compares FB-PbRL, which relies solely on preferences, with RFRL baselines that access ground-truth rewards at test time. Despite lacking reward supervision, Ours-FT achieves the strongest overall performance, outperforming most baselines across tasks and exceeding them by over 200 returns on average in the \textit{quadruped} domain. These results show that preference-guided fine-tuning effectively reshapes the latent space, allowing FB-PbRL to adapt to task objectives without external rewards.

Despite that FB-PbRL achieves strong overall performance, we observe higher variance and lower returns on some PointMass tasks, particularly the \textit{Bottom Right} goal in Table~\ref{tab:PbRL_results}.
This behavior is mainly due to coverage imbalance in the RND PointMass dataset and limited preference information from 10k transitions with short trajectory segments, which together provide weaker supervision in sparsely visited regions.
Supporting visitation heatmaps and trajectory visualizations are provided in Appendix~\ref{app:pointmass_analysis}.

\begin{table*}[!t]
\caption{\textbf{Performance on DMC under the Zero-Shot RFRL Protocol.} \textit{Ours-FT} denotes the full FB-PbRL framework with fine-tuning. The yellow and gray shading indicate the best and second-best performances of each row. Baseline methods sample 10{,}000 transitions with ground-truth reward labels at test time, whereas Ours uses 2{,}000 preference pairs constructed from 10{,}000 transitions.}
\label{tab:zeroshot_results}
\vskip 0.1in
\centering
\begin{small}
\setlength{\tabcolsep}{3pt}
\renewcommand{\arraystretch}{0.92}
\scalebox{0.9}{
\begin{tabular*}{\textwidth}{c|l|@{\extracolsep{\fill}}cccccc}
\toprule
\textbf{Domain} & \textbf{Task} & Laplace & FB & HILP & PSM & RLDP & Ours-FT \\
\midrule
& & \multicolumn{5}{c}{\textit{Ground-truth reward}} 
& \textit{Preferences} \\
\midrule
\multirow{5}{*}{\rotatebox[origin=c]{90}{\textbf{Cheetah}}} 
 & Run & 96.3$\pm$35.7 & 169.1$\pm$15.1 & 68.2$\pm$47.1 & \second{244.4$\pm$80.0} & 236.3$\pm$20.8 & \best{338.8$\pm$47.1} \\
 & Run Backward & 106.4$\pm$29.4 & 153.3$\pm$34.1 & 38.0$\pm$25.2 & 296.4$\pm$20.1 & \second{322.1$\pm$39.3}  & \best{335.4$\pm$16.1} \\
 & Walk & 409.2$\pm$56.1 & 509.7$\pm$59.7 & 318.3$\pm$168.4 & \best{984.2$\pm$0.5} & 895.3$\pm$49.8 & \second{924.6$\pm$72.6} \\
 & Walk Backward & 654.3$\pm$219.8 & 710.5$\pm$140.1 & 349.6$\pm$236.3 & 979.0$\pm$7.7 & \best{984.8$\pm$0.9} & \second{982.8$\pm$1.0} \\
 & 
 \textit{Average} & 316.5 & 385.6 & 193.5 & \second{626.0} & 609.6 & \best{645.4} \\
\midrule

\multirow{5}{*}{\rotatebox[origin=c]{90}{\textbf{Walker}}} 
 & Walk & 190.5$\pm$168.5 & \second{823.6$\pm$11.3} & 399.7$\pm$39.3 & \best{891.4$\pm$46.8} & 790.9$\pm$67.6 & 787.0$\pm$37.5 \\
 & Stand & 243.7$\pm$151.4 & \second{922.1$\pm$6.8} & 607.1$\pm$165.3 & 872.6$\pm$38.8 & 877.7$\pm$45.0 & \best{983.2$\pm$7.5} \\
 & Run & 63.7$\pm$31.0 & \second{444.1$\pm$9.4} & 107.8$\pm$34.2 & 351.5$\pm$19.5 & 324.9$\pm$54.6 & \best{464.6$\pm$33.0} \\
 & Flip & 48.7$\pm$17.7 & \best{689.7$\pm$18.4} & 278.0$\pm$59.6 & \second{640.8$\pm$31.9} & 492.9$\pm$22.8 & 562.7$\pm$22.3 \\
 & 
 \textit{Average} & 136.7 & \best{719.9} & 348.1 & 689.1 & 621.6 & \second{699.4} \\
\midrule

\multirow{5}{*}{\rotatebox[origin=c]{90}{\textbf{Quadruped}}} 
 & Run & 413.0$\pm$54.0 & 397.4$\pm$5.8 & 205.4$\pm$47.9 & 431.8$\pm$44.7 & \second{457.4$\pm$74.7} & \best{575.0$\pm$18.8} \\
 & Walk & 494.6$\pm$62.5 & 529.8$\pm$45.7 & 218.5$\pm$86.7 & \second{604.0$\pm$73.7} & 465.4$\pm$185.3 & \best{889.4$\pm$31.7} \\
 & Stand & 854.5$\pm$41.5 & 720.2$\pm$14.7 & 409.5$\pm$97.6 & \second{842.9$\pm$82.2} & 794.9$\pm$43.3 & \best{990.1$\pm$0.5} \\
 & Jump & 642.8$\pm$114.2 & 599.1$\pm$15.4 & 325.5$\pm$93.1 & 596.4$\pm$94.2 & \second{733.3$\pm$55.3} & \best{850.9$\pm$14.5} \\
 & 
 \textit{Average} & 601.2 & 561.7 & 289.8 & \second{618.7} & 612.8 & \best{826.3} \\
\midrule

\multirow{5}{*}{\rotatebox[origin=c]{90}{\textbf{PointMass}}}
 & Top Left 
 & 713.5$\pm$58.9 & 489.7$\pm$155.7 & \best{944.5$\pm$12.9} & 831.4$\pm$69.5 & 890.4$\pm$60.8 
 & \second{928.1$\pm$22.4} \\
 & Top Right 
 & 581.1$\pm$214.8 & \second{761.2$\pm$30.2} & 96.0$\pm$166.3 & 730.3$\pm$58.1 & \best{795.6$\pm$21.1} & 483.5$\pm$174.2 \\
 & Bottom Left 
 & \second{689.1$\pm$37.1} & 107.2$\pm$83.2 & 192.3$\pm$177.5 & 451.4$\pm$73.5 & \best{805.2$\pm$20.4}  & 541.5$\pm$268.0 \\
 & Bottom Right 
 & 21.3$\pm$42.5 & \best{236.1$\pm$171.9} & 0.17$\pm$0.3 & 43.3$\pm$38.4 & \second{193.4$\pm$167.6} & 51.4$\pm$60.4 \\
 & \textit{Average}
 & 501.2 & 398.5 & 308.3 & \second{514.1} & \best{671.1} & 501.1 \\
\bottomrule
\end{tabular*}}
\end{small}
\end{table*}

\begin{figure}[t]
    \centering



    \begin{subfigure}[b]{0.32\linewidth}
        \centering
        \includegraphics[width=\textwidth]{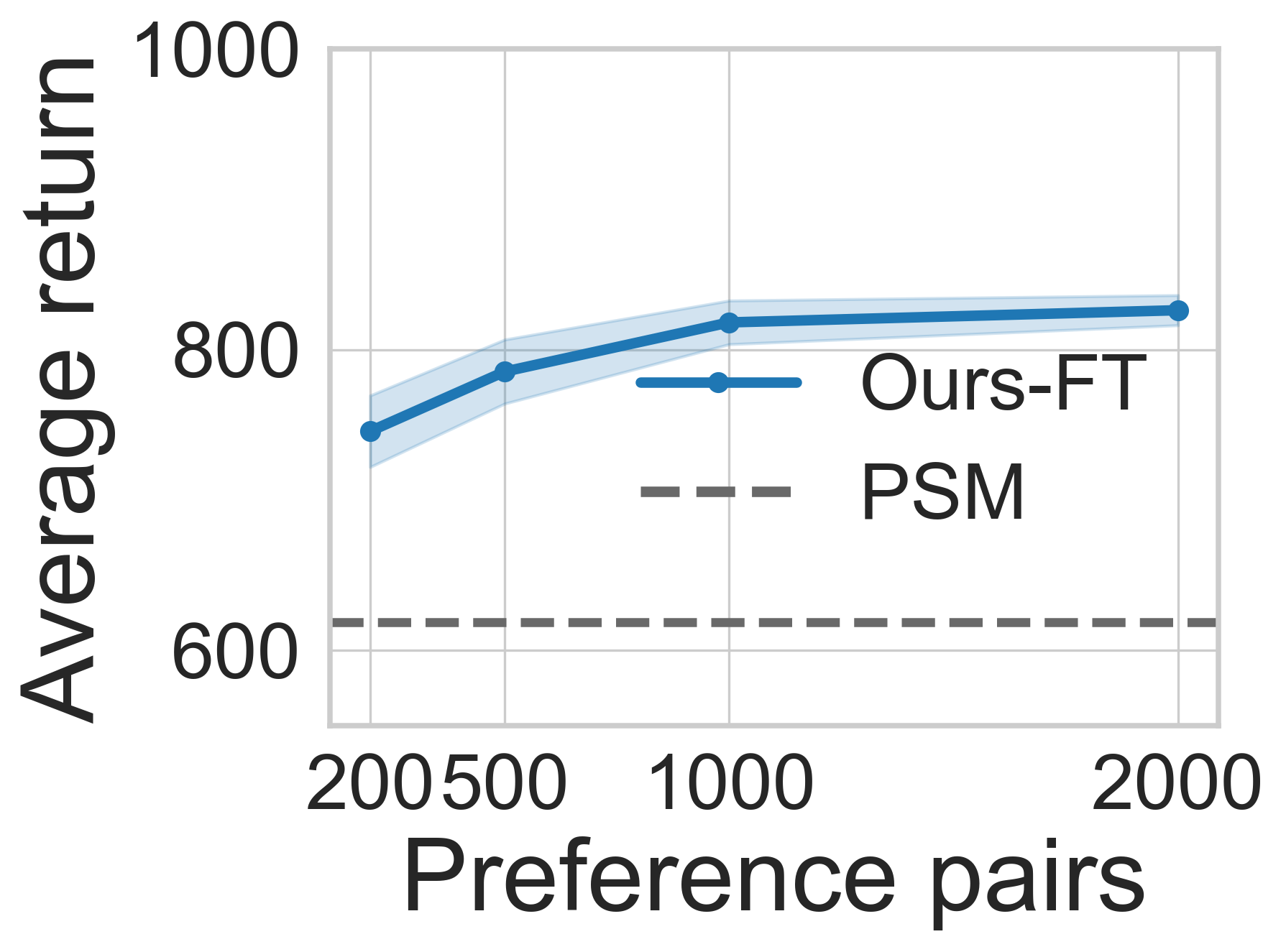}
        \caption{Limited feedback} 
        \label{subfig:robust_pairs}
    \end{subfigure}
      \hfill
    \begin{subfigure}[b]{0.32\linewidth}
        \centering
        \includegraphics[width=\textwidth]{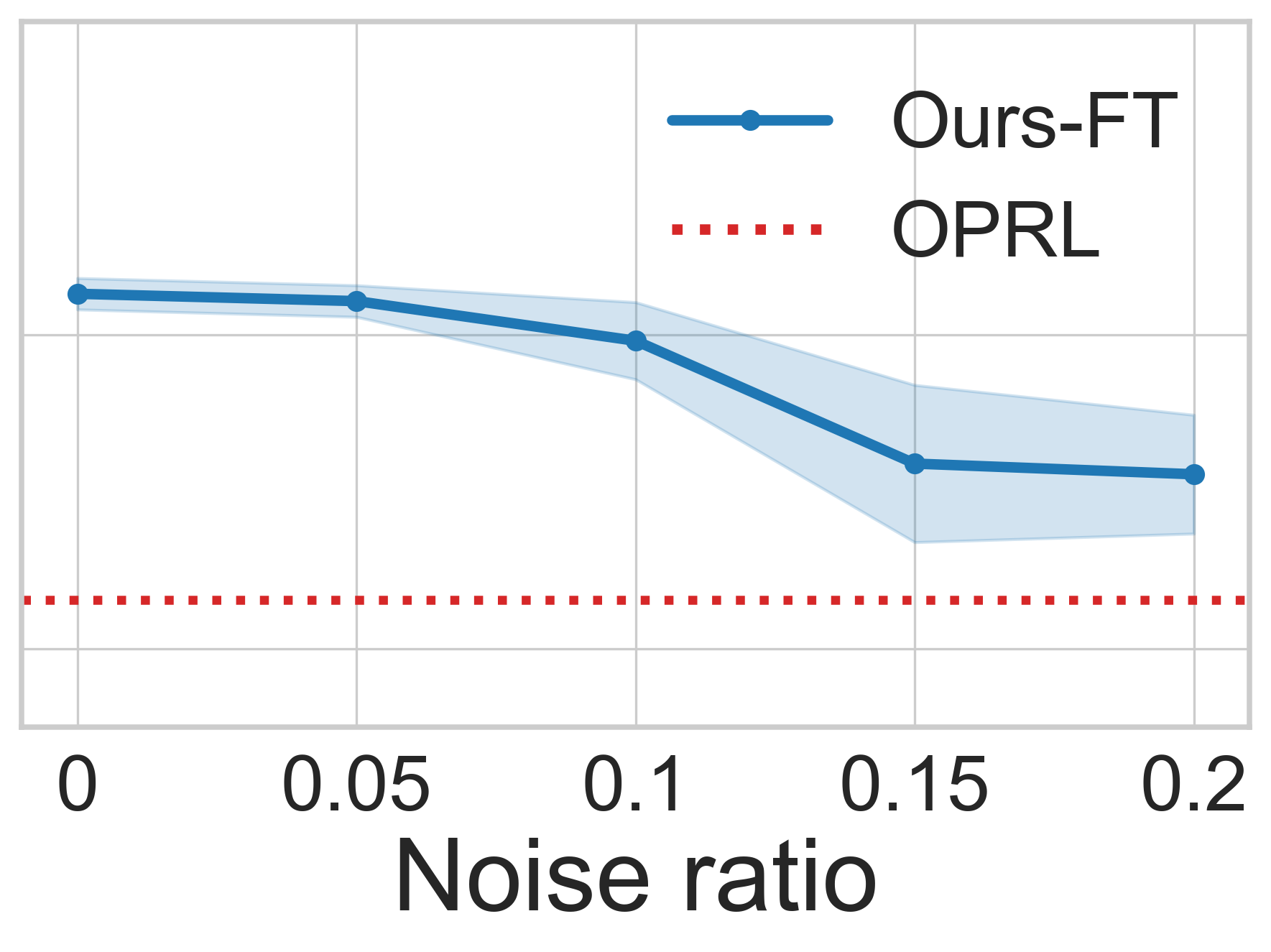}
        \caption{Noisy preference} 
        \label{subfig:robust_noise}
    \end{subfigure}
    \hfill
    \begin{subfigure}[b]{0.32\linewidth}
        \centering
        \includegraphics[width=\textwidth]{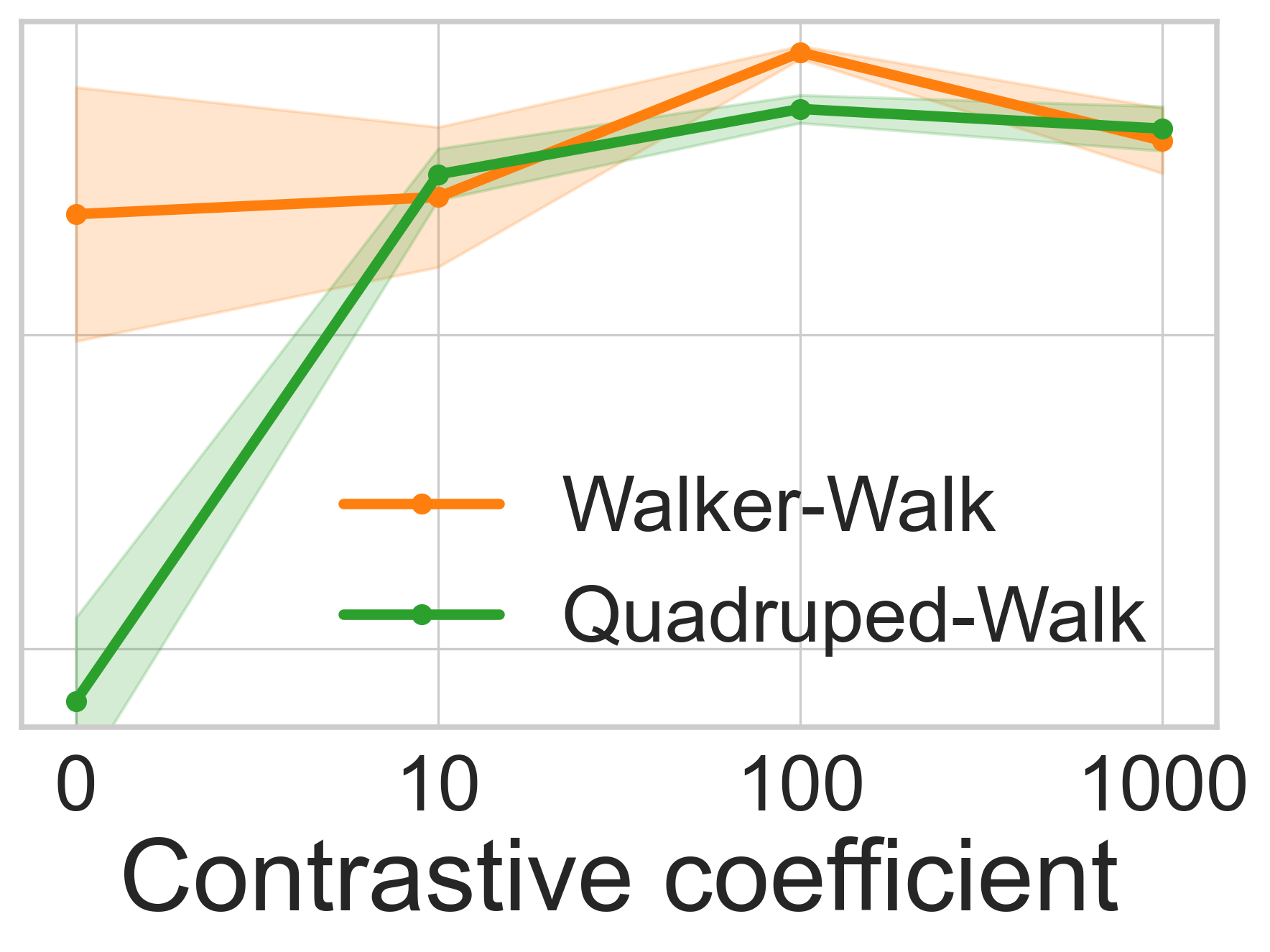}
        \caption{Sensitivity to $\alpha$}
        \label{subfig:robust_contrastive}
    \end{subfigure}
%
    \caption{\textbf{Robustness analysis of FB-PbRL.}
    \textbf{(a)} Performance scaling with the number of preference pairs under the \textbf{zero-shot RFRL protocol}; the dashed line denotes the strongest zero-shot RFRL baseline on \textit{Quadruped}.
    \textbf{(b)} Robustness to noisy preference annotations under the \textbf{PbRL protocol}, with the dashed line indicating the best offline PbRL baseline. Results in (a) and (b) are averaged over four \textit{Quadruped} tasks.
   \textbf{(c)} Sensitivity to the preference coefficient on \textit{Walker-Walk} and \textit{Quadruped-Walk}.
Shaded regions denote standard deviation over 5 seeds.}
    \label{fig:robustness}
\end{figure}

\textbf{Robustness to limited and noisy preferences.} (1) \textbf{Impact of limited feedback}: A key concern in PbRL is whether effective supervision can be obtained from few preferences. \Cref{subfig:robust_pairs} shows that FB-PbRL degrades gracefully as the number of preference pairs decreases. Reducing the budget from 2,000 to 200 pairs results in only about 10\% performance drop. Across all budgets, FB-PbRL consistently outperforms PSM, the strongest RFRL baseline in Table~\ref{tab:zeroshot_results}, and OPRL, the best offline PbRL baseline in most domains, demonstrating robustness under limited feedback. (2) \textbf{Robustness to noisy preferences}: To model imperfect human feedback, we introduce a noise ratio $\delta$ that flips preferences with probability $\delta$~\citep{lee2021b}. \Cref{subfig:robust_noise} shows FB-PbRL exhibits only a gradual performance decline as noise increases and continues to outperform all baselines even at $\delta=0.2$, indicating robustness under realistic supervision.

\textbf{Evaluation on robot arm manipulation with real human preferences.}
To complement scripted-teacher experiments, we evaluate FB-PbRL on real human-labeled preference datasets, including {Adroit} \textit{Pen} and {MetaWorld} \textit{Button-Press-Topdown}, which involve high-dimensional observations and sparse success signals. We compare against DPPO and LiRE, the only offline PbRL methods applicable to static datasets, excluding active-querying methods such as OPRL and CLARIFY. DPPO results on Adroit Pen are taken from the original paper.
As shown in Table~\ref{tab:human_manipulation}, FB-PbRL performs strongly across tasks. While DPPO slightly outperforms FB-PbRL on \textit{Pen-human}, our method remains competitive, with the gap likely due to limited offline data for representation pre-training. In contrast, LiRE performs substantially worse on Adroit due to its reliance on repeated trajectory segments. These show that FB-PbRL effectively utilizes static human preferences in high-dimensional manipulation tasks. We provide further analysis of human preference noise and synthetic noise ablations in Appendix~\ref{app:human_preference_noise}.

\begin{figure*}[t]
    \centering
    \includegraphics[width=\textwidth]{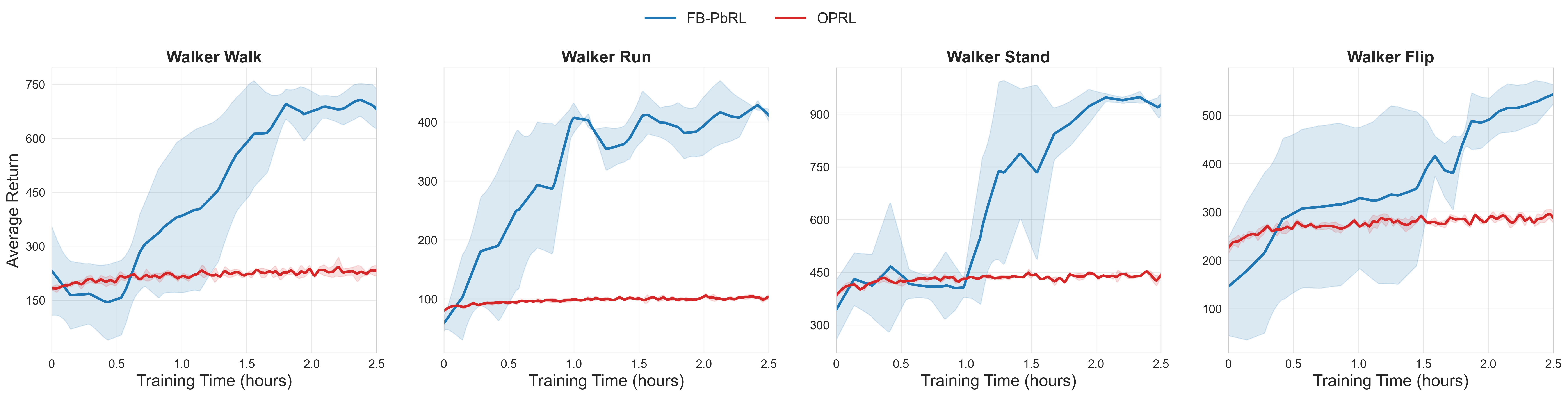}
    
    \caption{\textbf{Training efficiency over wall-clock time.}
    Training curves on the four \textit{Walker} tasks evaluated under equal wall-clock time budgets. 
    FB-PbRL demonstrates highly efficient task-specific fine-tuning, rapidly converging and surpassing the strongest offline PbRL baseline in approximately one hour of training time. 
    Shaded regions denote the standard deviation across 3 seeds.}
    \label{fig:wall_clock}
\end{figure*}

\subsection{Training efficiency and computational overhead} 

Beyond asymptotic performance, we investigate the practical training efficiency of our method. Although FB-PbRL involves an initial pre-training phase, it demonstrates exceptionally fast convergence during task-specific fine-tuning. As illustrated in Figure~\ref{fig:wall_clock}, when evaluated on the Walker domain, FB-PbRL reaches competitive performance within roughly 200k steps. Notably, it surpasses the strongest baseline in approximately one hour of fine-tuning wall-clock time, whereas baseline tend to plateau earlier. Therefore, despite a somewhat higher per-step computational cost, FB-PbRL achieves superior performance under equal time budgets and provides substantially faster convergence. A detailed breakdown of the total computational overhead, including the amortized pre-training cost, is provided in Appendix~\ref{app:computation_cost}.



\begin{table}[h]
\caption{\textbf{Performance on human preference datasets for manipulation}. We evaluate on the Adroit Pen tasks and the MetaWorld Button-Press-Topdown task, all with human-labeled preferences.}
\label{tab:human_manipulation}
\vskip 0.1in
\centering
\begin{small}
\renewcommand{\arraystretch}{0.95}
\setlength{\aboverulesep}{0pt}
\setlength{\belowrulesep}{0pt}
\setlength{\tabcolsep}{9pt} 

\scalebox{0.9}{
\begin{tabular*}{\columnwidth}{l|@{\extracolsep{\fill}}ccc}
\toprule
\textbf{Task} & DPPO & LiRE & Ours-FT \\
\midrule
Pen-human   & \textbf{76.3$\pm$14.4} & 4.5$\pm$1.8 & 70.2$\pm$9.8 \\
Pen-cloned  & 75.1$\pm$7.7 & 14.4$\pm$8.2 & \textbf{89.0$\pm$10.1} \\
\midrule
\makecell[l]{Button-Press\\Topdown} 
& 50.4$\pm$9.6 
& 70.0$\pm$8.8 
& \textbf{71.2$\pm$9.5} \\
\bottomrule
\end{tabular*}}
\end{small}
\vspace{-10pt}
\end{table}
\subsection{Ablation Study}
\vspace{-1mm}

\textbf{Efficacy of contrastive learning.}
We evaluate whether incorporating a reward model during fine-tuning is sufficient to align representations by comparing against FB-BT-FT, which integrates a pre-trained BT reward model into FB training. Further implementation details are in Appendix~\ref{sec:implement_details}. As shown in Table~\ref{tab:fb_bt_avg}, FB-BT-FT consistently underperforms FB-PbRL (Ours-FT), which directly applies contrastive learning to align the latent geometry with preference structure. This indicates that contrastive learning is beneficial for effective representation alignment, yielding superior generalization and control performance.

\begin{table}[t]
\caption{\textbf{Comparison with FB-BT-FT models.} We report the aggregated performance on \textit{Cheetah}, \textit{Walker}, and \textit{Quadruped}.}
\label{tab:fb_bt_avg}
\vskip 0.1in
\centering
\begin{small}
\renewcommand{\arraystretch}{0.95}
\setlength{\aboverulesep}{0pt}
\setlength{\belowrulesep}{0pt}
\setlength{\tabcolsep}{9pt} 
\scalebox{0.9}{
\begin{tabular*}{\columnwidth}{l|@{\extracolsep{\fill}}ccc}
\toprule
\textbf{Domain} & FB-BT-FT  & Ours-FT \\
\midrule
Cheetah   & 536.6$\pm$42.4 & \textbf{621.7$\pm$16.1}  \\
Walker    & 600.6$\pm$34.9  & \textbf{794.5$\pm$31.2} \\
Quadruped & 714.1$\pm$17.5 & \textbf{846.9$\pm$10.7} \\
\bottomrule
\end{tabular*}}
\end{small}
\vspace{-10pt}
\end{table}

\textbf{Coefficient $\alpha$ of preference loss.}
\Cref{subfig:robust_contrastive} evaluates sensitivity to the preference coefficient $\alpha$, where setting it to zero corresponds to no contrastive learning. FB-PbRL consistently outperforms this baseline across all coefficients, demonstrating the benefit of contrastive learning. Furthermore, as detailed in Appendix~\ref{app:hyperparameter_ablation}, our method exhibits stable performance across a wide range of values on all Walker tasks, indicating it is not highly sensitive to this hyperparameter A coefficient of 100 yields the strongest and most stable performance and is used as the default in all experiments.

\section{Related Work}
\vspace{-1mm}

\textbf{Offline PbRL with reward models.} A major line of work combines supervised reward models with offline RL~\citep{christiano2017deep,ouyang2022training}, with subsequent advances along three axes.
(i) \textit{Feedback efficiency}: LiRE~\citep{choi2024listwise} improves efficiency via ranked trajectory lists from ternary preferences, while CLARIFY~\citep{mu2025clarify} selects more informative queries by embedding preference signals into trajectories. When active querying is allowed, OPRL~\citep{shinbenchmarks} applies pool-based active learning to offline datasets.
(ii) \textit{Reward over-optimization}: STAR~\citep{bai2025star} introduces preference margin regularization, and DTR~\citep{tu2025dataset} leverages conditional sequence modeling to mitigate trajectory stitching. Distributionally robust formulations further yield provably efficient model-based~\citep{zhan2024provable} and adversarial policy optimization methods~\citep{kang2025adversarial}.
(iii) \textit{Data augmentation}: Data augmentation reduces labeling cost and improves scalability~\citep{kang2025cuda,park2022surf}, while SeqRank~\citep{hwang2023sequential} demonstrates improved feedback efficiency via sequential preference ranking.

\textbf{Offline PbRL without reward models.} Another line of work directly optimizes policies from preferences without explicit reward modeling. OPPO~\citep{kang2023beyond} learns latent-conditional policies via hindsight matching and contrastive preference modeling, while DPPO~\citep{an2023direct} adopts a contrastive scoring metric favoring preferred segments. CPL~\citep{hejna2024contrastive} combines contrastive learning with regret-based modeling, and Preference Transformer~\citep{kim2023preference} models non-Markovian preferences with transformers. IPL~\citep{hejna2023inverse} directly optimizes implicit rewards induced by Q-functions, while FTB~\citep{zhang2023flow} uses diffusion models guided by pairwise preferences to generate preferred trajectories.

Due to page limits, we defer discussion of online interactive PbRL and RFRL to~\Cref{app:related work}.
\vspace{-3mm}
\section{Conclusion}
\label{sec:conclusion}
\vspace{-1mm}

In this paper, we reinterpret offline PbRL through the lens of FB-based RFRL. We show that the standard preference loss in PbRL is closely connected to SimCLR-style contrastive learning under FB, motivating the FB-PbRL framework that integrates RFRL pre-training with preference-guided search and fine-tuning. Our results demonstrate that RFRL pre-trained representations provide compact and informative features for PbRL, and we hope this connection encourages further exploration of improving PbRL via RFRL.

While FB-PbRL demonstrates strong empirical performance, its effectiveness still depends on the coverage of the reward-free offline dataset and the reliability of preference supervision. Sparse dataset coverage or highly noisy preferences can weaken preference-guided adaptation, especially in underrepresented regions. One promising direction is to extend the framework to an online setting. The preliminary results in Appendix~\ref{app:online_pbrl} suggest that FB-PbRL can be extended to online PbRL by combining online FB representation learning~\citep{sun2025unsupervised} with preference-based task specification. In addition, although our main implementation focuses on FB representations, the underlying connection between PbRL and SimCLR-style contrastive learning is not restricted to FB; it can be extended to other RFRL methods with latent linear reward parameterizations, as discussed in Appendix~\ref{app:rfrl_simclr_generalization}.

\section*{Acknowledgment}
This research is partially supported by the National Science and Technology Council (NSTC) of Taiwan under Grant Numbers 114-2628-E-A49-002 and 114-2634-F-A49-002-MBK. 
This work is also partially supported by the Center for Intelligent Team Robotics and Human-Robot Collaboration under the ``Top Research Centers in Taiwan Key Fields Program'' of the Ministry of Education (MOE), Taiwan. 
We also thank the National Center for High-performance Computing (NCHC) for providing computational and storage resources.
\section*{Impact Statement}
Preference-based reinforcement learning (PbRL) offers a promising alternative to hand-designed reward functions by learning directly from human feedback, but current offline approaches require substantial labeled preferences. This work introduces an efficient framework that connects reward-free representation learning with PbRL, significantly reducing reliance on costly human annotations. By improving preference efficiency, the proposed method can lower the barrier to deploying RL systems in domains where expert feedback is scarce or expensive, such as robotics, healthcare, and scientific experimentation.

At the same time, reducing the amount of human feedback does not eliminate the need for careful oversight. Learned representations may encode biases present in offline data, and misaligned preferences could still lead to unintended behaviors if deployed without safeguards. We hope this work encourages further research on combining representation learning with preference-based feedback while maintaining transparency, robustness, and responsible human-in-the-loop practices.

\renewcommand{\refname}{References}
\renewcommand{\bibname}{References}
\bibliography{reference}

\begin{thebibliography}{75}
\providecommand{\natexlab}[1]{#1}
\providecommand{\url}[1]{\texttt{#1}}
\expandafter\ifx\csname urlstyle\endcsname\relax
  \providecommand{\doi}[1]{doi: #1}\else
  \providecommand{\doi}{doi: \begingroup \urlstyle{rm}\Url}\fi

\bibitem[Afsar et~al.(2022)Afsar, Crump, and Far]{afsar2022reinforcement}
Afsar, M.~M., Crump, T., and Far, B.
\newblock Reinforcement learning based recommender systems: A survey.
\newblock \emph{ACM Computing Surveys}, 55\penalty0 (7):\penalty0 1--38, 2022.

\bibitem[Agarwal et~al.(2025)Agarwal, Sikchi, Stone, and Zhang]{agarwal2025proto}
Agarwal, S., Sikchi, H., Stone, P., and Zhang, A.
\newblock Proto successor measure: Representing the behavior space of an {RL} agent.
\newblock In \emph{International Conference on Machine Learning (ICML)}, 2025.

\bibitem[An et~al.(2023)An, Lee, Zuo, Kosaka, Kim, and Song]{an2023direct}
An, G., Lee, J., Zuo, X., Kosaka, N., Kim, K.-M., and Song, H.~O.
\newblock Direct preference-based policy optimization without reward modeling.
\newblock \emph{Advances in Neural Information Processing Systems (NeurIPS)}, 36:\penalty0 70247--70266, 2023.

\bibitem[Bagatella et~al.(2026)Bagatella, Pirotta, Touati, Lazaric, and Tirinzoni]{bagatella2026tdjepa}
Bagatella, M., Pirotta, M., Touati, A., Lazaric, A., and Tirinzoni, A.
\newblock {TD}-{JEPA}: Latent-predictive representations for zero-shot reinforcement learning.
\newblock In \emph{International Conference on Learning Representations (ICLR)}, 2026.

\bibitem[Bai et~al.(2025)Bai, Zhao, Zhang, Cui, Zhang, Han, Wen, Yang, et~al.]{bai2025star}
Bai, F., Zhao, R., Zhang, H., Cui, S., Zhang, S., Han, L., Wen, Y., Yang, Y., et~al.
\newblock {STAR}: Efficient preference-based reinforcement learning via dual regularization.
\newblock In \emph{Advances in Neural Information Processing Systems (NeurIPS)}, 2025.

\bibitem[Barreto et~al.(2017)Barreto, Dabney, Munos, Hunt, Schaul, van Hasselt, and Silver]{barreto2017successor}
Barreto, A., Dabney, W., Munos, R., Hunt, J.~J., Schaul, T., van Hasselt, H.~P., and Silver, D.
\newblock Successor features for transfer in reinforcement learning.
\newblock In \emph{Advances in Neural Information Processing Systems (NeurIPS)}, 2017.

\bibitem[Bobrin et~al.(2026)Bobrin, Zisman, Nikulin, Kurenkov, and Dylov]{bobrin2026zeroshot}
Bobrin, M., Zisman, I., Nikulin, A., Kurenkov, V., and Dylov, D.~V.
\newblock Zero-shot adaptation of behavioral foundation models to unseen dynamics.
\newblock In \emph{International Conference on Learning Representations (ICLR)}, 2026.

\bibitem[Borsa et~al.(2019)Borsa, Barreto, Quan, Mankowitz, van Hasselt, Munos, Silver, and Schaul]{borsa2019universal}
Borsa, D., Barreto, A., Quan, J., Mankowitz, D.~J., van Hasselt, H., Munos, R., Silver, D., and Schaul, T.
\newblock Universal successor features approximators.
\newblock In \emph{International Conference on Learning Representations (ICLR)}, 2019.

\bibitem[Bradley \& Terry(1952)Bradley and Terry]{bradley1952rank}
Bradley, R.~A. and Terry, M.~E.
\newblock Rank analysis of incomplete block designs: I. the method of paired comparisons.
\newblock \emph{Biometrika}, 39\penalty0 (3/4):\penalty0 324--345, 1952.

\bibitem[Burda et~al.(2019)Burda, Edwards, Storkey, and Klimov]{burdaexploration}
Burda, Y., Edwards, H., Storkey, A., and Klimov, O.
\newblock Exploration by random network distillation.
\newblock In \emph{International Conference on Learning Representations (ICLR)}, 2019.

\bibitem[Cetin et~al.(2025)Cetin, Touati, and Ollivier]{cetin2025finer}
Cetin, E., Touati, A., and Ollivier, Y.
\newblock Finer behavioral foundation models via auto-regressive features and advantage weighting.
\newblock In \emph{Reinforcement Learning Conference (RLC)}, 2025.

\bibitem[Chen et~al.(2020)Chen, Kornblith, Norouzi, and Hinton]{chen2020simple}
Chen, T., Kornblith, S., Norouzi, M., and Hinton, G.
\newblock A simple framework for contrastive learning of visual representations.
\newblock In \emph{International Conference on Machine Learning (ICML)}, pp.\  1597--1607, 2020.

\bibitem[Chen et~al.(2026)Chen, Hung, Wu, Hong, and Hsieh]{chen2026reward}
Chen, Y.-T., Hung, W., Wu, B.-S., Hong, Z.-W., and Hsieh, P.-C.
\newblock A reward-free viewpoint on multi-objective reinforcement learning.
\newblock In \emph{International Conference on Learning Representations (ICLR)}, 2026.

\bibitem[Cheng et~al.(2024)Cheng, Xiong, Dai, Miao, Lv, and Wang]{cheng2024rime}
Cheng, J., Xiong, G., Dai, X., Miao, Q., Lv, Y., and Wang, F.-Y.
\newblock {RIME}: Robust preference-based reinforcement learning with noisy preferences.
\newblock In \emph{International Conference on Machine Learning (ICML)}, 2024.

\bibitem[Choi et~al.(2024)Choi, Jung, Ahn, and Moon]{choi2024listwise}
Choi, H., Jung, S., Ahn, H., and Moon, T.
\newblock Listwise reward estimation for offline preference-based reinforcement learning.
\newblock In \emph{International Conference on Machine Learning (ICML)}, pp.\  8651--8671. PMLR, 2024.

\bibitem[Christiano et~al.(2017)Christiano, Leike, Brown, Martic, Legg, and Amodei]{christiano2017deep}
Christiano, P.~F., Leike, J., Brown, T., Martic, M., Legg, S., and Amodei, D.
\newblock Deep reinforcement learning from human preferences.
\newblock \emph{Advances in Neural Information Processing Systems (NeurIPS)}, 30, 2017.

\bibitem[Dayan(1993)]{dayan1993improving}
Dayan, P.
\newblock Improving generalization for temporal difference learning: The successor representation.
\newblock \emph{Neural computation}, 1993.

\bibitem[Dubail et~al.(2025)Dubail, Stojanovic, and Proutiere]{dubail2025shift}
Dubail, B., Stojanovic, S., and Proutiere, A.
\newblock Shift before you learn: Enabling low-rank representations in reinforcement learning.
\newblock In \emph{Advances in Neural Information Processing Systems (NeurIPS)}, 2025.

\bibitem[Farebrother et~al.(2025)Farebrother, Pirotta, Tirinzoni, Munos, Lazaric, and Touati]{farebrother2025temporal}
Farebrother, J., Pirotta, M., Tirinzoni, A., Munos, R., Lazaric, A., and Touati, A.
\newblock Temporal difference flows.
\newblock In \emph{International Conference on Machine Learning (ICML)}, 2025.

\bibitem[Frans et~al.(2024)Frans, Park, Abbeel, and Levine]{frans2024unsupervised}
Frans, K., Park, S., Abbeel, P., and Levine, S.
\newblock Unsupervised zero-shot reinforcement learning via functional reward encodings.
\newblock In \emph{International Conference on Machine Learning (ICML)}, pp.\  13927--13942, 2024.

\bibitem[Fu et~al.(2020)Fu, Kumar, Nachum, Tucker, and Levine]{fu2020d4rl}
Fu, J., Kumar, A., Nachum, O., Tucker, G., and Levine, S.
\newblock D4rl: Datasets for deep data-driven reinforcement learning.
\newblock \emph{arXiv preprint arXiv:2004.07219}, 2020.

\bibitem[Guo et~al.(2025)Guo, Chen, Hsieh, Ho, Huang, Wu, Wu, et~al.]{guo2025learning}
Guo, J.-T., Chen, Y.-C., Hsieh, P.-C., Ho, K.-H., Huang, P.-W., Wu, T.-R., Wu, I., et~al.
\newblock Learning human-like {RL} agents through trajectory optimization with action quantization.
\newblock \emph{Advances in Neural Information Processing Systems (NeurIPS)}, 38:\penalty0 83534--83565, 2025.

\bibitem[Hejna \& Sadigh(2023)Hejna and Sadigh]{hejna2023inverse}
Hejna, J. and Sadigh, D.
\newblock Inverse preference learning: preference-based rl without a reward function.
\newblock In \emph{Advances in neural information processing systems (NeurIPS)}, pp.\  18806--18827, 2023.

\bibitem[Hejna et~al.(2024)Hejna, Rafailov, Sikchi, Finn, Niekum, Knox, and Sadigh]{hejna2024contrastive}
Hejna, J., Rafailov, R., Sikchi, H., Finn, C., Niekum, S., Knox, W.~B., and Sadigh, D.
\newblock Contrastive preference learning: Learning from human feedback without reinforcement learning.
\newblock In \emph{International Conference on Learning Representations (ICLR)}, 2024.

\bibitem[Hu et~al.(2024)Hu, Li, Zhan, Jia, and Zhang]{hu2024query}
Hu, X., Li, J., Zhan, X., Jia, Q.-S., and Zhang, Y.-Q.
\newblock Query-policy misalignment in preference-based reinforcement learning.
\newblock In \emph{International Conference on Learning Representations (ICLR)}, 2024.

\bibitem[Huang et~al.(2024)Huang, Hsieh, Ho, and Wu]{huang2024ppo}
Huang, N.-C., Hsieh, P.-C., Ho, K.-H., and Wu, I.-C.
\newblock {PPO-clip attains global optimality: Towards deeper understandings of clipping}.
\newblock In \emph{Proceedings of the AAAI Conference on Artificial Intelligence (AAAI)}, volume~38, pp.\  12600--12607, 2024.

\bibitem[Hugessen et~al.(2025)Hugessen, Wiltzer, Neary, Zhang, and Berseth]{hugessen2025zero}
Hugessen, A., Wiltzer, H., Neary, C., Zhang, A., and Berseth, G.
\newblock Zero-shot constraint satisfaction with forward-backward representations.
\newblock In \emph{Workshop on Reinforcement Learning Beyond Rewards@ Reinforcement Learning Conference 2025}, 2025.

\bibitem[Hwang et~al.(2023)Hwang, Lee, Kee, Kim, Lee, and Oh]{hwang2023sequential}
Hwang, M., Lee, G., Kee, H., Kim, C.~W., Lee, K., and Oh, S.
\newblock Sequential preference ranking for efficient reinforcement learning from human feedback.
\newblock \emph{Advances in Neural Information Processing Systems (NeurIPS)}, 36:\penalty0 49088--49099, 2023.

\bibitem[Jajoo et~al.(2025)Jajoo, Sikchi, Agarwal, Zhang, Niekum, and White]{jajoo2025regularized}
Jajoo, P., Sikchi, H., Agarwal, S., Zhang, A., Niekum, S., and White, M.
\newblock Regularized latent dynamics prediction is a strong baseline for behavioral foundation models.
\newblock In \emph{Workshop on Reinforcement Learning Beyond Rewards@ Reinforcement Learning Conference 2025}, 2025.

\bibitem[Jeen et~al.(2024)Jeen, Bewley, and Cullen]{jeen2024zero}
Jeen, S., Bewley, T., and Cullen, J.
\newblock Zero-shot reinforcement learning from low quality data.
\newblock \emph{Advances in Neural Information Processing Systems (NeurIPS)}, 37:\penalty0 16894--16942, 2024.

\bibitem[Jeen et~al.(2025)Jeen, Bewley, and Cullen]{jeen2025zero}
Jeen, S., Bewley, T., and Cullen, J.
\newblock Zero-shot reinforcement learning under partial observability.
\newblock In \emph{Reinforcement Learning Conference (RLC)}, 2025.

\bibitem[Kang \& Oh(2025)Kang and Oh]{kang2025adversarial}
Kang, H. and Oh, M.-h.
\newblock Adversarial policy optimization for offline preference-based reinforcement learning.
\newblock In \emph{International Conference on Learning Representations (ICLR)}, 2025.

\bibitem[Kang et~al.(2025)Kang, Jeong, and Yun]{kang2025cuda}
Kang, S., Jeong, J., and Yun, S.-Y.
\newblock {CUDA}: Capturing uncertainty and diversity in preference feedback augmentation.
\newblock In \emph{2nd Workshop on Models of Human Feedback for AI Alignment at International Conference on Machine Learning (ICML)}, 2025.

\bibitem[Kang et~al.(2023)Kang, Shi, Liu, He, and Wang]{kang2023beyond}
Kang, Y., Shi, D., Liu, J., He, L., and Wang, D.
\newblock Beyond reward: Offline preference-guided policy optimization.
\newblock In \emph{International Conference on Machine Learning (ICML)}, pp.\  15753--15768, 2023.

\bibitem[Khalil et~al.(2017)Khalil, Dai, Zhang, Dilkina, and Song]{khalil2017learning}
Khalil, E., Dai, H., Zhang, Y., Dilkina, B., and Song, L.
\newblock Learning combinatorial optimization algorithms over graphs.
\newblock \emph{Advances in Neural Information Processing Systems (NeurIPS)}, 30, 2017.

\bibitem[Kim et~al.(2023)Kim, Park, Shin, Lee, Abbeel, and Lee]{kim2023preference}
Kim, C., Park, J., Shin, J., Lee, H., Abbeel, P., and Lee, K.
\newblock Preference transformer: Modeling human preferences using transformers for rl.
\newblock In \emph{International Conference on Learning Representations (ICLR)}, 2023.

\bibitem[Kostrikov et~al.(2022)Kostrikov, Nair, and Levine]{kostrikovoffline}
Kostrikov, I., Nair, A., and Levine, S.
\newblock Offline reinforcement learning with implicit q-learning.
\newblock In \emph{International Conference on Learning Representations (ICLR)}, 2022.

\bibitem[Kumar et~al.(2020)Kumar, Zhou, Tucker, and Levine]{kumar2020conservative}
Kumar, A., Zhou, A., Tucker, G., and Levine, S.
\newblock Conservative q-learning for offline reinforcement learning.
\newblock In \emph{Advances in neural information processing systems (NeurIPS)}, volume~33, pp.\  1179--1191, 2020.

\bibitem[Lee et~al.(2021{\natexlab{a}})Lee, Smith, Dragan, Abbeel, et~al.]{lee2021b}
Lee, K., Smith, L., Dragan, A., Abbeel, P., et~al.
\newblock B-pref: Benchmarking preference-based reinforcement learning.
\newblock In \emph{Advances in Neural Information Processing Systems (NeurIPS)}, 2021{\natexlab{a}}.

\bibitem[Lee et~al.(2021{\natexlab{b}})Lee, Smith, and Abbeel]{lee2021pebble}
Lee, K., Smith, L.~M., and Abbeel, P.
\newblock {PEBBLE}: Feedback-efficient interactive reinforcement learning via relabeling experience and unsupervised pre-training.
\newblock In \emph{International Conference on Machine Learning (ICML)}, pp.\  6152--6163, 2021{\natexlab{b}}.

\bibitem[Li et~al.(2023)Li, Misra, Kolobov, and Cheng]{li2023survival}
Li, A., Misra, D., Kolobov, A., and Cheng, C.-A.
\newblock Survival instinct in offline reinforcement learning.
\newblock \emph{Advances in neural information processing systems (NeurIPS)}, 36:\penalty0 62062--62120, 2023.

\bibitem[Liang et~al.(2022)Liang, Shu, Lee, and Abbeel]{liang2022reward}
Liang, X., Shu, K., Lee, K., and Abbeel, P.
\newblock Reward uncertainty for exploration in preference-based reinforcement learning.
\newblock In \emph{International Conference on Learning Representations (ICLR)}, 2022.

\bibitem[Liu et~al.(2025)Liu, Xu, Wu, Yang, and He]{liu2025multi}
Liu, Z., Xu, J., Wu, X., Yang, J., and He, L.
\newblock Multi-type preference learning: Empowering preference-based reinforcement learning with equal preferences.
\newblock In \emph{IEEE International Conference on Robotics and Automation (ICRA)}, pp.\  1163--1169, 2025.

\bibitem[Luan et~al.(2025)Luan, Mu, Yang, XU, and Jia]{luan2025stair}
Luan, Y., Mu, N., Yang, Y., XU, B., and Jia, Q.-S.
\newblock {STAIR}: Addressing stage misalignment through temporal-aligned preference reinforcement learning.
\newblock In \emph{Advances in Neural Information Processing Systems (NeurIPS)}, 2025.

\bibitem[Mu et~al.(2025)Mu, Hu, Hu, Yang, XU, and Jia]{mu2025clarify}
Mu, N., Hu, H., Hu, X., Yang, Y., XU, B., and Jia, Q.-S.
\newblock {CLARIFY}: Contrastive preference reinforcement learning for untangling ambiguous queries.
\newblock In \emph{International Conference on Machine Learning (ICML)}, 2025.

\bibitem[Ouyang et~al.(2022)Ouyang, Wu, Jiang, Almeida, Wainwright, Mishkin, Zhang, Agarwal, Slama, Ray, et~al.]{ouyang2022training}
Ouyang, L., Wu, J., Jiang, X., Almeida, D., Wainwright, C., Mishkin, P., Zhang, C., Agarwal, S., Slama, K., Ray, A., et~al.
\newblock Training language models to follow instructions with human feedback.
\newblock \emph{Advances in Neural Information Processing Systems (NeurIPS)}, 35:\penalty0 27730--27744, 2022.

\bibitem[Park et~al.(2022)Park, Seo, Shin, Lee, Abbeel, and Lee]{park2022surf}
Park, J., Seo, Y., Shin, J., Lee, H., Abbeel, P., and Lee, K.
\newblock {SURF}: Semi-supervised reward learning with data augmentation for feedback-efficient preference-based reinforcement learning.
\newblock In \emph{International Conference on Learning Representations (ICLR)}, 2022.

\bibitem[Park et~al.(2024)Park, Kreiman, and Levine]{park2024foundation}
Park, S., Kreiman, T., and Levine, S.
\newblock Foundation policies with {Hilbert} representations.
\newblock In \emph{International Conference on Machine Learning (ICML)}, pp.\  39737--39761, 2024.

\bibitem[Pirotta et~al.(2024)Pirotta, Tirinzoni, Touati, Lazaric, and Ollivier]{pirotta2024fast}
Pirotta, M., Tirinzoni, A., Touati, A., Lazaric, A., and Ollivier, Y.
\newblock Fast imitation via behavior foundation models.
\newblock In \emph{International Conference on Learning Representations (ICLR)}, 2024.

\bibitem[Schulman et~al.(2017)Schulman, Wolski, Dhariwal, Radford, and Klimov]{schulman2017proximal}
Schulman, J., Wolski, F., Dhariwal, P., Radford, A., and Klimov, O.
\newblock Proximal policy optimization algorithms.
\newblock \emph{arXiv preprint arXiv:1707.06347}, 2017.

\bibitem[Shin et~al.(2023)Shin, Dragan, and Brown]{shinbenchmarks}
Shin, D., Dragan, A., and Brown, D.~S.
\newblock Benchmarks and algorithms for offline preference-based reward learning.
\newblock \emph{Transactions on Machine Learning Research (TMLR)}, 2023.

\bibitem[Silver et~al.(2016)Silver, Huang, Maddison, Guez, Sifre, Van Den~Driessche, Schrittwieser, Antonoglou, Panneershelvam, Lanctot, et~al.]{silver2016mastering}
Silver, D., Huang, A., Maddison, C.~J., Guez, A., Sifre, L., Van Den~Driessche, G., Schrittwieser, J., Antonoglou, I., Panneershelvam, V., Lanctot, M., et~al.
\newblock {Mastering the game of Go with deep neural networks and tree search}.
\newblock \emph{nature}, 529\penalty0 (7587):\penalty0 484--489, 2016.

\bibitem[Skalse et~al.(2022)Skalse, Howe, Krasheninnikov, and Krueger]{skalse2022defining}
Skalse, J., Howe, N., Krasheninnikov, D., and Krueger, D.
\newblock Defining and characterizing reward gaming.
\newblock \emph{Advances in Neural Information Processing Systems (NeurIPS)}, 35:\penalty0 9460--9471, 2022.

\bibitem[Sun et~al.(2025)Sun, Tu, Li, Liu, Chen, Chen, Zhao, et~al.]{sun2025unsupervised}
Sun, J., Tu, S., Li, H., Liu, X., Chen, Y., Chen, K., Zhao, D., et~al.
\newblock Unsupervised zero-shot reinforcement learning via dual-value forward-backward representation.
\newblock In \emph{International Conference on Learning Representations (ICLR)}, 2025.

\bibitem[Tassa et~al.(2018)Tassa, Doron, Muldal, Erez, Li, Casas, Budden, Abdolmaleki, Merel, Lefrancq, et~al.]{tassa2018deepmind}
Tassa, Y., Doron, Y., Muldal, A., Erez, T., Li, Y., Casas, D. d.~L., Budden, D., Abdolmaleki, A., Merel, J., Lefrancq, A., et~al.
\newblock Deepmind control suite.
\newblock \emph{arXiv preprint arXiv:1801.00690}, 2018.

\bibitem[Tirinzoni et~al.(2025)Tirinzoni, Touati, Farebrother, Guzek, Kanervisto, Xu, Lazaric, and Pirotta]{tirinzoni2025zero}
Tirinzoni, A., Touati, A., Farebrother, J., Guzek, M., Kanervisto, A., Xu, Y., Lazaric, A., and Pirotta, M.
\newblock Zero-shot whole-body humanoid control via behavioral foundation models.
\newblock In \emph{International Conference on Learning Representations (ICLR)}, 2025.

\bibitem[Touati \& Ollivier(2021)Touati and Ollivier]{touati2021learning}
Touati, A. and Ollivier, Y.
\newblock Learning one representation to optimize all rewards.
\newblock \emph{Advances in Neural Information Processing Systems (NeurIPS)}, 34:\penalty0 13--23, 2021.

\bibitem[Touati et~al.(2023)Touati, Rapin, and Ollivier]{touati2023does}
Touati, A., Rapin, J., and Ollivier, Y.
\newblock Does zero-shot reinforcement learning exist?
\newblock In \emph{International Conference on Learning Representations (ICLR)}, 2023.

\bibitem[Tu et~al.(2025)Tu, Sun, Zhang, Zhang, Liu, Chen, and Zhao]{tu2025dataset}
Tu, S., Sun, J., Zhang, Q., Zhang, Y., Liu, J., Chen, K., and Zhao, D.
\newblock In-dataset trajectory return regularization for offline preference-based reinforcement learning.
\newblock In \emph{Proceedings of the AAAI Conference on Artificial Intelligence (AAAI)}, volume~39, pp.\  20929--20937, 2025.

\bibitem[Vinyals et~al.(2019)Vinyals, Babuschkin, Czarnecki, Mathieu, Dudzik, Chung, Choi, Powell, Ewalds, Georgiev, et~al.]{vinyals2019grandmaster}
Vinyals, O., Babuschkin, I., Czarnecki, W.~M., Mathieu, M., Dudzik, A., Chung, J., Choi, D.~H., Powell, R., Ewalds, T., Georgiev, P., et~al.
\newblock {Grandmaster level in StarCraft II using multi-agent reinforcement learning}.
\newblock \emph{nature}, 575\penalty0 (7782):\penalty0 350--354, 2019.

\bibitem[Wiltzer et~al.(2024)Wiltzer, Farebrother, Gretton, Tang, Barreto, Dabney, Bellemare, and Rowland]{wiltzer2024distributional}
Wiltzer, H., Farebrother, J., Gretton, A., Tang, Y., Barreto, A., Dabney, W., Bellemare, M.~G., and Rowland, M.
\newblock A distributional analogue to the successor representation.
\newblock In \emph{International Conference on Machine Learning (ICML)}, pp.\  52994--53016, 2024.

\bibitem[Wu et~al.(2019)Wu, Tucker, and Nachum]{wu2019laplacian}
Wu, Y., Tucker, G., and Nachum, O.
\newblock The {Laplacian} in {RL}: Learning representations with efficient approximations.
\newblock In \emph{International Conference on Learning Representations (ICLR)}, 2019.

\bibitem[Yarats et~al.(2022)Yarats, Brandfonbrener, Liu, Laskin, Abbeel, Lazaric, and Pinto]{yaratsdon}
Yarats, D., Brandfonbrener, D., Liu, H., Laskin, M., Abbeel, P., Lazaric, A., and Pinto, L.
\newblock Don't change the algorithm, change the data: Exploratory data for offline reinforcement learning.
\newblock In \emph{Generalizable Policy Learning in the Physical World Workshop (ICLR)}, 2022.

\bibitem[Ye et~al.(2024)Ye, Xiong, Zhang, Dong, Jiang, and Zhang]{ye2024online}
Ye, C., Xiong, W., Zhang, Y., Dong, H., Jiang, N., and Zhang, T.
\newblock Online iterative reinforcement learning from human feedback with general preference model.
\newblock \emph{Advances in Neural Information Processing Systems (NeurIPS)}, 37:\penalty0 81773--81807, 2024.

\bibitem[Yu et~al.(2020)Yu, Quillen, He, Julian, Hausman, Finn, and Levine]{yu2020meta}
Yu, T., Quillen, D., He, Z., Julian, R., Hausman, K., Finn, C., and Levine, S.
\newblock Meta-world: A benchmark and evaluation for multi-task and meta reinforcement learning.
\newblock In \emph{Conference on robot learning (CoRL)}, pp.\  1094--1100, 2020.

\bibitem[Yuan et~al.(2024)Yuan, Jianye, Ma, Dong, Liang, Liu, Feng, Zhao, and ZHENG]{yuanuni2024}
Yuan, Y., Jianye, H., Ma, Y., Dong, Z., Liang, H., Liu, J., Feng, Z., Zhao, K., and ZHENG, Y.
\newblock Uni-rlhf: Universal platform and benchmark suite for reinforcement learning with diverse human feedback.
\newblock In \emph{International Conference on Learning Representations (ICLR)}, 2024.

\bibitem[Zhan et~al.(2024)Zhan, Uehara, Kallus, Lee, and Sun]{zhan2024provable}
Zhan, W., Uehara, M., Kallus, N., Lee, J.~D., and Sun, W.
\newblock Provable offline preference-based reinforcement learning.
\newblock In \emph{International Conference on Learning Representations (ICLR)}, 2024.

\bibitem[Zhang et~al.(2020)Zhang, Song, Cao, Zhang, Tan, and Chi]{zhang2020learning}
Zhang, C., Song, W., Cao, Z., Zhang, J., Tan, P.~S., and Chi, X.
\newblock Learning to dispatch for job shop scheduling via deep reinforcement learning.
\newblock \emph{Advances in Neural Information Processing Systems (NeurIPS)}, 33:\penalty0 1621--1632, 2020.

\bibitem[Zhang et~al.(2023)Zhang, Sun, Ye, Liu, Zhang, and Yu]{zhang2023flow}
Zhang, Z., Sun, Y., Ye, J., Liu, T.-S., Zhang, J., and Yu, Y.
\newblock Flow to better: Offline preference-based reinforcement learning via preferred trajectory generation.
\newblock In \emph{International Conference on Learning Representations (ICLR)}, 2023.

\bibitem[Zheng et~al.(2026)Zheng, Park, Levine, and Eysenbach]{zheng2026intentionconditioned}
Zheng, C., Park, S., Levine, S., and Eysenbach, B.
\newblock Intention-conditioned flow occupancy models.
\newblock In \emph{International Conference on Learning Representations (ICLR)}, 2026.

\bibitem[Zheng et~al.(2018)Zheng, Zhang, Zheng, Xiang, Yuan, Xie, and Li]{zheng2018drn}
Zheng, G., Zhang, F., Zheng, Z., Xiang, Y., Yuan, N.~J., Xie, X., and Li, Z.
\newblock {DRN: A deep reinforcement learning framework for news recommendation}.
\newblock In \emph{Proceedings of the World Wide Web Conference}, pp.\  167--176, 2018.

\bibitem[Zheng et~al.(2025)Zheng, Teyssier, Zheng, Luo, and Zhan]{zheng2025towards}
Zheng, K., Teyssier, L., Zheng, Y., Luo, Y., and Zhan, X.
\newblock Towards robust zero-shot reinforcement learning.
\newblock In \emph{Advances in Neural Information Processing Systems (NeurIPS)}, 2025.

\bibitem[Zhu et~al.(2023)Zhu, Jordan, and Jiao]{zhu2023principled}
Zhu, B., Jordan, M., and Jiao, J.
\newblock Principled reinforcement learning with human feedback from pairwise or k-wise comparisons.
\newblock In \emph{International Conference on Machine Learning (ICML)}, pp.\  43037--43067. PMLR, 2023.

\bibitem[Zhu et~al.(2024)Zhu, Wang, Han, Du, and Gupta]{zhu2024distributional}
Zhu, C., Wang, X., Han, T., Du, S.~S., and Gupta, A.
\newblock Distributional successor features enable zero-shot policy optimization.
\newblock \emph{Advances in Neural Information Processing Systems (NeurIPS)}, 37:\penalty0 124612--124639, 2024.

\bibitem[Zhu et~al.(2025)Zhu, Liu, Gu, Yuan, Ge, Wei, Fang, Hu, and An]{zhuimproving}
Zhu, Y., Liu, J., Gu, P., Yuan, Y., Ge, Z., Wei, W., Fang, Z., Hu, Y., and An, B.
\newblock Improving reward models with proximal policy exploration for preference-based reinforcement learning.
\newblock In \emph{Advances in Neural Information Processing Systems (NeurIPS)}, 2025.

\end{thebibliography}
\bibliographystyle{icml2026}

\newpage
\appendix
\onecolumn
\section*{Appendix}
\addcontentsline{toc}{section}{Appendices}

\startcontents[appendix]
\printcontents[appendix]{l}{1}{\setcounter{tocdepth}{2}}

\newpage
\section{Pseudo Code of FB-PbRL}
\label{sec:alg}

Algorithm~\ref{alg:fb_pbrl} presents the complete pseudocode of the proposed FB-PbRL framework, integrating the standard Forward-Backward (FB) representation learning with SimCLR contrastive preference learning. 
\begin{itemize}
    \item Lines 4-5 correspond to the standard FB training procedure, where transition batches are sampled from the offline dataset and used to optimize the measure loss and orthonormality regularization.
    \item Lines 7-19 describe the anchor-based contrastive learning module for preference supervision. Preference pairs are sampled from the preference dataset and encoded into the FB latent space using the backward representation. Ordered latent pairs are constructed based on preference labels, while ties are handled symmetrically by adding both orderings. The resulting set of ordered pairs is then used to compute a SimCLR preference loss, where $\mathrm{sim}(\cdot, \cdot)$ denotes the cosine similarity between two vectors. This objective encourages the learnable anchor latent vector to align with preferred trajectories and repel non-preferred ones.
    \item Lines 21-25 perform parameter updates. The FB networks are updated using the combined FB objective, while the backward network and the anchor latent vector are additionally optimized with respect to the contrastive preference loss. The policy is updated using the standard TD3-actor objective.
    \item Finally, the anchor latent vector is normalized to lie on a hypersphere to stabilize optimization. 
\end{itemize}

\begin{algorithm}[!t]
\caption{FB-PbRL}
\label{alg:fb_pbrl}
\begin{algorithmic}[1]

\STATE {\bfseries Input:} Datsets: Offline dataset $\mathcal{D}$, preference dataset $\mathcal{D}_{\mathrm{pref}}$; 

Hyperparameters: Transition batch size $I$, Preference batch size $J$, discount $\gamma$, Polyak coefficient $\zeta \in [0, 1)$, orthonormality weight $\lambda \in \mathbb{R}_{+}$, preference weight $\alpha \in \mathbb{R}_{+}$, learning rates $\mu_{\mathrm{FB}}, \mu_{\pi}, \mu_{z}$; 

Pre-trained networks ${\mathbf{F}_{\bar{\theta}}} , \mathbf{B}_{\bar{\omega}}, \pi_{\bar{{\phi}}}$ and targets networks $\mathbf{F}_{\hat{\theta}}, \mathbf{B}_{\hat{\omega}}, \pi_{\hat{\phi}}$; 
Initial anchor $\bm{z}^*\in \mathbb{R}^d$.

\FOR{$t = 1,2,\dots$}

\STATE {\color{myblue}{// Standard FB Framework}}
\STATE Sample transition batch $\mathcal{B}_t = \{(s_i, a_i, s'_i)\}_{i=1}^{I} \sim \mathcal{D}$
\STATE Compute measure loss $\mathcal{L}_{\mathrm{m}}(\bar{\theta}, \bar{\omega}; \bm{z}^*)$ (~\Cref{eq:measure_loss}) and orthonormality loss $\mathcal{L}_{\mathrm{ortho}}(\bar{\omega})$ (~\Cref{eq:norm_loss}) using $\mathcal{B}_t$

\STATE {\color{myblue}{// Anchor-based Contrastive Learning}}
\STATE Sample a batch of preferences 
    $\{(\sigma^{(1)}_j, \sigma^{(2)}_j, y_j)\}_{j=1}^{J} \sim \mathcal{D}_{\mathrm{pref}}$

    \STATE Initialize preference pair set $\mathcal{P} \gets \emptyset$

    \FOR{$j = 1$ \textbf{to} $J$}
        \STATE $\bm{z}^{(i)}_j \gets  \mathbf{B}_{\bar{\omega}}(\sigma^{(i)}_{j}) \quad \text{for } i \in \{1,2\}$

        \IF{$y_j = +1$}
            \STATE $\bm{z}^+ \gets \bm{z}^{(1)}_j, \quad \bm{z}^- \gets \bm{z}^{(2)}_j$
            \STATE $\mathcal{P} \gets \mathcal{P} \cup \{(\bm{z}^+, \bm{z}^-)\}$
        \ELSIF{$y_j = -1$}
            \STATE $\bm{z}^+ \gets \bm{z}^{(2)}_j, \quad \bm{z}^- \gets \bm{z}^{(1)}_j$
            \STATE $\mathcal{P} \gets \mathcal{P} \cup \{(\bm{z}^+, \bm{z}^-)\}$
        \ELSE
            \STATE $\mathcal{P} \gets \mathcal{P} \cup \{(\bm{z}^{(1)}_j, \bm{z}^{(2)}_j), (\bm{z}^{(2)}_j, \bm{z}^{(1)}_j)\}$
        \ENDIF
    \ENDFOR

    \STATE $\mathcal{L}_{\mathrm{pref}}(\bm{z}^*,\bar{\omega}) \gets
    - \frac{1}{|\mathcal{P}|}
    \sum_{(\bm{z}^+, \bm{z}^-) \in \mathcal{P}}
    \log
    \frac{
        \exp(\mathrm{sim}(\bm{z}^*, \bm{z}^+) / \tau)
    }{
        \exp(\mathrm{sim}(\bm{z}^*, \bm{z}^+) / \tau)
        +
        \exp(\mathrm{sim}(\bm{z}^*, \bm{z}^-) / \tau)
    }$

\STATE {\color{myblue}{// Update networks}}
\STATE $\mathcal{L}_{\pi}(\bar{{\phi}}) = \mathbb{E}_{s \sim \mathcal{B}_t} \left[ - \mathbf{F}_{\bar{\theta}}(s, \pi_{\bar{{\phi}}}(s); \bm{z}^*)^\top \bm{z}^* \right]$

\STATE $(\bar{\theta},\bar{\omega}) \leftarrow (\bar{\theta},\bar{\omega})
- \mu_{\mathrm{FB}} \nabla_{\bar{\theta},\bar{\omega}}
\big(\mathcal{L}_{\mathrm{m}} + \lambda \mathcal{L}_{\mathrm{ortho}}\big)$
\STATE $\bar{\omega} \leftarrow \bar{\omega} - \mu_{\mathrm{FB}} \nabla_{\bar{\omega}}
\big(\alpha \mathcal{L}_{\mathrm{pref}}\big)$
\STATE $\bar{{\phi}} \leftarrow \bar{{\phi}} - \mu_{\pi} \nabla_{\bar{{\phi}}} \mathcal{L}_{\mathrm{\pi}}$
\STATE $\bm{z}^* \leftarrow \bm{z}^*  - \mu_{\bm{z}} \nabla_{\bm{z}^* } \mathcal{L}_{\mathrm{pref}}$
\STATE $\bm{z}^* \leftarrow \sqrt{d_{\bm{z}}}  \dfrac{\bm{z}^*}{\lVert \bm{z}^*  \rVert_2} $
\STATE {\color{myblue}{// Update target networks via Polyak averaging}}
\STATE $\hat{\theta} \leftarrow \zeta \hat{\theta} + (1-\zeta)\bar{\theta}$
\STATE $\hat{\omega} \leftarrow \zeta \hat{\omega} + (1-\zeta)\bar{\omega}$
\STATE $\hat{\phi} \leftarrow \zeta \hat{\phi} + (1-\zeta)\bar{{\phi}}$

\ENDFOR

\end{algorithmic}
\end{algorithm}

Below we describe all the loss functions used in the FB-PbRL framework.

\textbf{Measure Loss. } The measure loss $\mathcal{L}_{\mathrm{m}}(\theta, \bar{\omega}; \bm{z}^*)$ minimizes the Bellman residual between the forward representation $\mathbf{F}_{\theta}$ and the backward representation $\mathbf{B}_{\omega}$, effectively enforcing Bellman consistency for the implicit successor measures. By projecting these measures onto a learned basis, the objective disentangles the environment's transition dynamics from any specific reward function. This structural separation is crucial as it provides a task-agnostic foundation, enabling the agent to generalize zero-shot to new tasks by understanding the environment's causal structure independent of immediate goals.

\vspace{-5mm}
\begin{align}
\mathcal{L}_{\mathrm{m}}(\theta, &\omega; \bm{z})
:=
\;
\mathbb{E}_{\substack{
(s,a,s') \sim \mathcal{D} \\
(s^\dagger, a^\dagger) \sim \mathcal{D}
}}
\Big[
\big(
\mathbf{F}_{\theta}(s, a, \bm{z})^\top \mathbf{B}_{\omega}(s^\dagger, a^\dagger) - \gamma \,
\mathbf{F}_{\hat{\theta}}\!\left(
s', {\pi}(s'), \bm{z}
\right)^\top
\mathbf{B}_{\hat{\omega}}(s^\dagger, a^\dagger)
\big)^2
\Big]
\nonumber \\
& \quad \quad \quad
- 2\,
\mathbb{E}_{(s,a,s') \sim \mathcal{D}}
\Big[
\mathbf{F}_{\theta}(s, a, \bm{z})^\top
\mathbf{B}_{\omega}(s', a')
\Big].
\label{eq:measure_loss_appendix}
\end{align}

\textbf{Orthonormality Loss.} To ensure the learned backward representation $\mathbf{B}_\omega$ forms a stable and non-degenerate basis, we impose an orthonormality regularization loss. This constraint acts as a crucial regularizer that prevents representational collapse by encouraging the basis functions to be distinct and mutually orthogonal in expectation. By enforcing the covariance of the backward embeddings to approximate the identity matrix, we ensure that the successor measures are projected onto a diverse and well-conditioned feature space.
\begin{equation}
\begin{aligned}
\mathcal{L}_{\mathrm{ortho}}(\omega)
&:=\Big\lVert\mathbb{E}_{(s,a)\sim\mathcal{D}}\big[{\mathbf{B}_{{\omega}}}(s,a){\mathbf{B}_{{\omega}}}(s,a)^\top\big]-\mathbf{I}_d \Big\rVert^2_{\text{F}}\\
\;&
=\mathbb{E}_{
\substack{
(s,a) \sim \mathcal{D} \\
(s^\dagger, a^\dagger) \sim \mathcal{D}
}
}
\Big[
(\mathbf{B}_\omega(s, a)^\top \mathbf{B}_\omega(s^\dagger, a^\dagger))^2
- \|\mathbf{B}_\omega(s, a)\|_2^2
- \|\mathbf{B}_\omega(s^\dagger, a^\dagger)\|_2^2
\Big].
\end{aligned}
\label{eq:norm_loss}
\end{equation}

\textbf{Preference Loss.}
The contrastive preference loss $\mathcal{L}_{\mathrm{pref}}$ serves a dual optimization role. First, with respect to the task anchor $\bm{z}$, the loss updates the vector to align with preferred trajectory embeddings, effectively searching for the latent direction that best represents the task. Second, regarding the backward model $\mathbf{B}_\omega$, the loss shapes the latent space geometry, encouraging the representation to map preferred behaviors closer to the task anchor.
Standard FB pre-training assumes latent vectors $\bm{z}$ lie on a unit hypersphere, consistent with the normalization used in SimCLR-style contrastive learning. Let $\mathrm{sim}(\cdot,\cdot)$ denote cosine similarity. We define the preference loss as
\begin{equation}
\mathcal{L}_{\text{pref}}( \bm{z},\omega):= - \mathbb{E}_{ \mathcal{D}_{\mathrm{pref}}} \left[ \log \frac{e^{\mathrm{sim}(\bm{z}, \bm{z}^+\sigma(\omega))}}{e^{\mathrm{sim}(\bm{z}, \bm{z}^+\sigma(\omega))} + e^{\mathrm{sim}(\bm{z}, \bm{z}^-\sigma(\omega))}} \right],
\label{eq:pref_loss_appendix}
\end{equation}
where the expectation is over sampled preference pairs $(\sigma^+,\sigma^-)$ from $\mathcal{D}_{\mathrm{pref}}$, and the dependence on $\mathbf{B}_\omega$ is through $\bm{z}^+_\sigma(\omega)$ and $\bm{z}^-_\sigma(\omega)$.

\textbf{Policy Loss.}
To address continuous control under FB, we train a task-conditioned policy $\pi_\phi$ within an actor-critic framework to maximize the expected return for the inferred task $\bm{z}$. Specifically, the Q-function for any given task $\bm{z}$ is recovered via the inner product $Q(s, a; \bm{z}) = \mathbf{F}_\theta(s, a, \bm{z})^\top \bm{z}$. Accordingly, the policy is optimized by minimizing the negative expected Q-values:
\begin{equation}
\mathcal{L}_{\pi}(\phi; \bm{z}) = \mathbb{E}_{s \sim \mathcal{D}} \left[ - \mathbf{F}_\theta(s, \pi_\phi(s), \bm{z})^\top \bm{z} \right].
\label{eq:policy_loss}
\end{equation}
This objective drives the agent to select actions that align with the geometric direction specified by $\bm{z}$.
\newpage

\section{Further Theoretical Analysis and Generalizations}

\subsection{Connecting RFRL with SimCLR Contrastive Learning}
\label{app:rfrl_simclr}


We propose to leverage the FB representations pre-trained via RFRL in preference-based learning. Recall from~\Cref{sec:background} that we consider a general class of preference models 
\begin{equation}
    P_{\bm{\psi}}(\sigma^+ \succ \sigma^-)=\mu\big(R_{\bm{\psi}}(\sigma^+)-R_{\bm{\psi}}(\sigma^-)\big),
\end{equation}
where $\mu:\mathbb{R}\rightarrow[0,1]$ is a monotonically increasing function and $R_{\bm{\psi}}(\sigma)$ denotes the total expected reward associated with a trajectory $\sigma$ under the reward function $r_{\bm{\psi}}$.

Then, we can rewrite the preference loss as
\begin{align}
\mathcal{L}_{\mathrm{pref}}(\psi) = -\mathbb{E}_{(\sigma^+,\sigma^-)\sim \mathcal{D}_{\mathrm{pref}}} \left[ \log \mu\left(\frac{R_\psi(\sigma^+) - R_\psi(\sigma^-)}{\tau}\right)\right]
\end{align}

By the reward parameterization in~\Cref{eq:linear reward} and definitions in~\Cref{eq:B omega,eq:z+ and z-},

\begin{align}
\mathcal{L}_{\mathrm{pref}}(\psi) = -\mathbb{E} \left[ \log \mu \left( \frac{k}{\tau} (\bm{z}_{\sigma^+}-\bm{z}_{\sigma^-})^\top \psi\right) \right]
\end{align}

where $k$ emerges since $\bm{z}_\sigma:= \frac{1}{k}\sum_{t=1}^k \mathbf{B}_{\bar{\omega}}(s_t,a_t)$ is defined as the average backward embedding over a $k$-step segment.

Substituting $\psi=\mathbf{H}_{\mathbf{B}}^{-1} \bm{z}$ (~\Cref{eq:z H psi}),

\begin{align}
\mathcal{L}_{\mathrm{pref}}(z) = -\mathbb{E} \left[ \log \frac{ \exp(\frac{k}{\tau} \bm{z}^\top \mathbf{H}_{\mathbf{B}}^{-1} \bm{z}_{\sigma^+}) }{ \exp(\frac{k}{\tau} \bm{z}^\top \mathbf{H}_{\mathbf{B}}^{-1} \bm{z}_{\sigma^+}) + \exp(\frac{k}{\tau} \bm{z}^\top \mathbf{H}_{\mathbf{B}}^{-1} \bm{z}_{\sigma^-}) } \right]
\end{align}

Here we set the temperature $\tau = k$. Thus, the segment length $k$ acts as a temperature parameter and is absorbed into the logits, yielding~\Cref{eq:z pref loss}.

Moreover, in the standard FB pre-training, we usually enforce two additional conditions: (i) We apply orthonomality for the backward representation and have $\mathbf{H}_{\mathbf{B}}=\mathbf{I}_d$. (ii) The FB framework presumes that the latent vector $\bm{z}$ is on the unit ball.
By using these two properties, we conclude that the preference loss exactly matches the SimCLR contrastive loss as
\begin{equation}
        \mathcal{L}_{\text{pref}}(\bm{z})=-\mathbb{E}\bigg[\log\bigg(\frac{\exp(\bm{z}^\top \bm{z}^+_\sigma)}{\exp(\bm{z}^\top \bm{z}^+_\sigma)+\exp(\bm{z}^\top \bm{z}^-_\sigma)}\bigg)\bigg].\label{eq:simclr loss}
\end{equation}

\subsection{Theoretical Analysis of FB-PbRL}
\label{app:theory}

We can directly extend the FB analysis of \citet{touati2021learning} to PbRL, showing that the learned $F,B$ together with test-time search for $\bm{z}$ (\Cref{eq:z pref loss}) yield near-optimal control under standard conditions.

Let $F(s,a,z), B(s,a)$ be pre-trained representations such that for any reward $r$, $\pi_{z_r}$ is $\epsilon_{FB}$-optimal: 
\begin{align}
    \lVert V^{\pi_{z_r}} - V^*_r \rVert_{\infty}\le \epsilon_{FB},
\end{align}
where $z_r:=\mathbb{E}[r(s,a) B(s,a)]$, $\pi_{z_r}=\arg\max_{a}\ F(s,a,z_r)^\top z_r$, and $V^\ast_r$ is the optimal value for $r$. 

Let $r_\psi$ be the true test-time reward (with optimal value $V^*$), $\hat{r}$ an estimate, and $\pi_{\hat{z}_r}=\arg\max_{a}\ F(s,a,\hat{z}_r)^\top \hat{z}_r$. 

Define for each k-step preference pair $(\sigma^+,\sigma^-)$
\begin{align}
    \phi(\sigma^+,\sigma^-)
    :=
    \sum_{i=1}^{k}
    B(s_i^+,a_i^+)
    -
    B(s_i^-,a_i^-),
\end{align} 
and 
\begin{align}
    \Sigma_\mathcal{D}:=\frac{1}{|D|}\sum_{(\sigma^+,\sigma^-)\in D} \phi \phi^\top \succeq 0,
\end{align}
with $\lambda_{\min}(\Sigma_D)$ its smallest eigenvalue.

\begin{proposition}
For any $\lambda>0$, with probability at least $1-\delta$, 
\begin{equation}
\lVert V^{\pi_{\hat{z}_r}}-V^*\rVert_{\infty} = \mathcal{O}\left(\frac{1}{(\lambda_{\min}(\Sigma_D)+\lambda)}\sqrt{\frac{d\log (1/\delta)}{|D|}+\lambda}+\epsilon_{FB}\right).
\end{equation}
\end{proposition}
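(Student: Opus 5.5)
We decompose the error into a representation part and an estimation part. Let $\hat{\bm{\psi}}$ be the minimizer of the regularized preference loss over linear rewards $r_{\bm{\psi}} = B^\top \bm{\psi}$, namely $\mathcal{L}_{\mathrm{pref}}(\bm{\psi}) + \frac{\lambda}{2}\lVert\bm{\psi}\rVert^2$. Set $\hat r := r_{\hat{\bm{\psi}}}$ and $\hat z_r := \mathbb{E}[\hat r\, B] = \mathbf{H}_{\mathbf{B}}\hat{\bm{\psi}}$. By the reparametrization of \Cref{app:rfrl_simclr}, this is the same as the CPTS search in $\bm{z}$ when $\mathbf{H}_{\mathbf{B}}$ is invertible (it equals $\mathbf{I}_d$ under orthonormality).

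The triangle inequality gives
\begin{equation*}
\lVert V^{\pi_{\hat z_r}}-V^*\rVert_\infty \le \lVert V^{\pi_{\hat z_r}}_{r_\psi}-V^{\pi_{\hat z_r}}_{\hat r}\rVert_\infty + \lVert V^{\pi_{\hat z_r}}_{\hat r}-V^*_{\hat r}\rVert_\infty + \lVert V^*_{\hat r}-V^*_{r_\psi}\rVert_\infty .
\end{equation*}
The middle term is at most $\epsilon_{FB}$, by the assumed $\epsilon_{FB}$-optimality applied to the reward $\hat r$. The first and third terms are each bounded by $\frac{1}{1-\gamma}\lVert \hat r - r_\psi\rVert_\infty$. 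Since $\sup_{s,a}\lVert B(s,a)\rVert$ is bounded, this is $\mathcal{O}(\lVert\hat{\bm\psi}-\bm\psi\rVert_2)$. The problem therefore reduces to a parameter-estimation bound for a linear Bradley–Terry model with features $\phi(\sigma^+,\sigma^-)$.

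For that bound we follow the standard MLE analysis for BT/logistic models, as in Zhu et al.~(2023).

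\textbf{Strong convexity.} Assume $\lVert\bm\psi\rVert$ and $\lVert\phi\rVert$ are bounded. Then the sigmoid derivative along the segment between $\bm\psi$ and $\hat{\bm\psi}$ is at least a constant $\kappa>0$. Hence the regularized empirical loss is $(\kappa\lambda_{\min}(\Sigma_D)+\lambda)$-strongly convex, which gives
\begin{equation*}
(\kappa\lambda_{\min}(\Sigma_D)+\lambda)\,\lVert\hat{\bm\psi}-\bm\psi\rVert^2 \le \langle \nabla \ell(\bm\psi), \bm\psi-\hat{\bm\psi}\rangle + \lambda\langle\bm\psi,\bm\psi-\hat{\bm\psi}\rangle .
\end{equation*}

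\textbf{Gradient concentration.} The gradient at the true parameter is $\frac{1}{|D|}\sum (\mu(\phi^\top\bm\psi)-y)\,\phi$. This is a mean of independent, zero-mean, bounded vectors. A vector Bernstein/Hoeffding bound plus a union bound (or a covering argument over $d$ directions) gives, with probability $1-\delta$,
\begin{equation*}
\lVert\nabla\ell(\bm\psi)\rVert \lesssim \sqrt{d\log(1/\delta)/|D|}.
\end{equation*}

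\textbf{Combining.} The two steps together yield
\begin{equation*}
\lVert\hat{\bm\psi}-\bm\psi\rVert \lesssim \frac{\sqrt{d\log(1/\delta)/|D|}+\lambda\lVert\bm\psi\rVert}{\lambda_{\min}(\Sigma_D)+\lambda}.
\end{equation*}
Bounding the numerator by $\sqrt{d\log(1/\delta)/|D|+\lambda}$, using $\sqrt a+\sqrt b\le\sqrt{2(a+b)}$ and $\lambda\lesssim\sqrt\lambda$ for $\lambda\le 1$, produces the stated form. A slightly looser self-normalized version measured in the $(\Sigma_D+\lambda I)$ norm also gives this form directly.

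\textbf{Main obstacle.} The hardest step is the strong-convexity lower bound with an explicit curvature constant $\kappa$. It needs a priori boundedness of $\hat{\bm\psi}$, which the ridge term or a projection onto a norm ball supplies. It also needs well-specification: the true preferences must follow BT with a reward in the span of $B$, the same linear realizability assumption as in \eqref{eq:linear reward}.

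A secondary subtlety concerns the test-time search. It is done over $\bm z$ on the sphere with cosine similarity, so we must argue it coincides with $\hat{\bm\psi}$ up to the invertible map $\mathbf{H}_{\mathbf{B}}$ and a scaling absorbed in $\tau$. Under $\mathbf{H}_{\mathbf{B}}=\mathbf{I}_d$ the two searches are the same, and any deviation of $\mathbf{H}_{\mathbf{B}}$ from the identity enters only through constants.
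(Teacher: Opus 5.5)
Your first step is the paper's. The paper cites Proposition~10 of Touati and Ollivier for the same bound $\lVert V^{\pi_{\hat z_r}}-V^*\rVert_\infty\le\frac{2}{1-\gamma}\lVert r_\psi-\hat r\rVert_\infty+\epsilon_{FB}$, which you re-derive with the three-term triangle inequality.

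The routes diverge in bounding $\lVert r_\psi-\hat r\rVert_\infty$. The paper quotes the MLE lemma of Zhu et al.\ for the self-normalized bound $\lVert\bm{\psi}-\hat{\bm{\psi}}\rVert_{\Sigma_D+\lambda I}\le C_0\sqrt{d\log(1/\delta)/|D|+\lambda}$. It identifies the SimCLR minimizer with $\mathbf{H}_{\mathbf{B}}\hat{\bm{\psi}}$ and finishes with the dual-norm Cauchy--Schwarz step through $\max_{s,a}\lVert B(s,a)\rVert_{(\Sigma_D+\lambda I)^{-1}}$. You instead prove a Euclidean bound from scratch, using the curvature $\kappa\lambda_{\min}(\Sigma_D)+\lambda$ and concentration of $\nabla\ell(\bm{\psi})$.

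The paper's route is shorter. It outsources the curvature constant and the scale-free concentration to the cited lemma, and it keeps a finer, direction-dependent coverage quantity. However, finishing it with $\lVert B\rVert_{(\Sigma_D+\lambda I)^{-1}}\le\lVert B\rVert_2/\sqrt{\lambda_{\min}(\Sigma_D)+\lambda}$ gives the prefactor $1/\sqrt{\lambda_{\min}(\Sigma_D)+\lambda}$. That implies the displayed $1/(\lambda_{\min}(\Sigma_D)+\lambda)$ only when $\lambda_{\min}(\Sigma_D)+\lambda\le 1$.

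Your route, with $\sup\lVert\phi\rVert$ and $\kappa$ absorbed into the constant, produces the displayed prefactor itself. So your closing remark that the self-normalized version ``gives this form directly'' is off: it gives the square-root form. Your restriction to $\lambda\le 1$ costs nothing. For a fixed estimator the right-hand side tends to $\mathcal{O}(\epsilon_{FB})$ as $\lambda\to\infty$, so the display cannot hold uniformly in $\lambda$ anyway.

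Two points in your Step~2 need tightening.

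\textbf{The estimator is not CPTS.} Your $\lambda$ is a ridge parameter, so the estimator you analyse changes with $\lambda$. It is also not the unpenalized CPTS minimizer, contrary to your claim that the two searches coincide. Under $\bm{z}=\mathbf{H}_{\mathbf{B}}\bm{\psi}$ the ridge becomes a penalty $\frac{\lambda}{2}\lVert\mathbf{H}_{\mathbf{B}}^{-1}\bm{z}\rVert^2$, which CPTS does not have.

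\textbf{Ridge gives non-uniform curvature.} Ridge alone only yields $\lVert\hat{\bm{\psi}}\rVert\le\sqrt{2\log 2/\lambda}$. Then $\kappa$ decays like $e^{-c/\sqrt{\lambda}}$, and the $\mathcal{O}$ is no longer uniform in $\lambda$.

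Your projection option, used \emph{without} the ridge term, fixes both. The unit-ball-constrained MLE is the estimator Zhu et al.\ analyse, and its $\kappa$ does not depend on $\lambda$. Its variational inequality gives $\kappa\lambda_{\min}(\Sigma_D)\lVert\hat{\bm{\psi}}-\bm{\psi}\rVert\le\lVert\nabla\ell(\bm{\psi})\rVert$. Combine this with $\lVert\hat{\bm{\psi}}-\bm{\psi}\rVert\le 2$ and the elementary inequality $\min\{u/x,2\}\le 4(u+\lambda)/(x+\lambda)$. That recovers your final bound for every $\lambda\in(0,1]$ with a single fixed estimator.
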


\begin{proof}[Proof sketch]

\textbf{Step 1:} By Proposition 10 in \citet{touati2021learning}, we have:
\begin{equation}
\lVert V^{\pi_{\hat{z}_r}}-V^*\rVert_\infty \le \frac{2 \lVert r_{\psi} - \hat{r}\rVert_\infty}{1-\gamma}+\epsilon_{FB}.
\end{equation}

\textbf{Step 2:} Bound $\lVert r_{\psi}-\hat{r}\rVert_\infty$. With $r_{\psi}=B(s,a)^\top\psi$ as in \Cref{eq:linear reward} (without loss of generality, let $\lVert\psi\rVert_2\le 1$), Lemma 5.1 in \citet{zhu2023principled} gives that MLE via the preference loss in \Cref{eq:cross entropy} can learn $\hat{\psi}$ that approximates the true $\psi$. That is, for any $\lambda>0$, with probability at least $1-\delta$,
\begin{equation}
\lVert \psi-\hat{\psi}\rVert_{\Sigma_D+\lambda I}\le C_0\sqrt{\frac{d \log (1/\delta)}{|D|}+\lambda},
\end{equation}
where $C_0$ is a constant. 

Since the SimCLR objective in \Cref{eq:z pref loss} is a reparameterization of \Cref{eq:cross entropy}, its solution satisfies $z_{\text{SimCLR}}=H_B \hat{\psi}$ (essentially finding $\hat{\psi}$). Thus, by Cauchy-Schwarz inequality and the property $|a^\top b|=|(a^\top U^{-1/2})(U^{1/2}b)|$ for $U\succ 0$, we have:
\begin{align}
\lVert r_{\psi}-r_{\hat{\psi}}\rVert_\infty &= \max_{(s,a)}\ | B(s,a)^\top(\psi-\hat{\psi})| \nonumber \\
&\leq \max_{(s,a)} \lVert B(s,a)\rVert_{(\Sigma_D +\lambda I)^{-1}}\cdot \lVert \psi-\hat{\psi} \rVert_{\Sigma_D+\lambda I}.
\end{align}

Together, this bound demonstrates that representation sufficiency reduces to preference data coverage $\lambda_{\min}(\Sigma_D)$ and estimation error, yielding near-optimal downstream control.
\end{proof}

\subsection{Generalization to Other RFRL Methods}
\label{app:rfrl_simclr_generalization}

The SimCLR connection derived in Section~\ref{sec:method} is not specific to FB. Rather, it follows from a more general latent linear reward structure that is common in reward-free representation learning.

We illustrate this point using Hilbert representations, or HILP~\citep{park2024foundation}. HILP pre-trains a representation $\phi(s)$ and defines a latent reward over transitions as
\begin{equation}
    r_z(s,s') = \tilde{\phi}(s,s')^\top z,
    \qquad
    \tilde{\phi}(s,s') := \phi(s') - \phi(s),
\end{equation}
where $z$ specifies the downstream task. A latent-conditioned policy $\pi(a \mid s,z)$ is then trained to optimize rewards of this form. Thus, test-time adaptation can be viewed as searching for a latent vector $z$ that best matches the target task.

In the PbRL setting, plugging $r_z(s,s')$ into the preference loss (\Cref{eq:cross entropy}) and following the same derivation as \Cref{eq:z pref loss} yields a SimCLR-type objective:
\begin{equation}
\mathcal{L}_{\mathrm{pref,HILP}}(z) = -\mathbb{E} \left[ \log \left(\frac{ \exp(z^\top \tilde{\phi}^{+}) }{\exp(z^\top \tilde{\phi}^{+}) +  \exp(z^\top \tilde{\phi}^{-})}\right) \right],
\end{equation}
where \begin{equation}
    \tilde{\phi}^{+}
    :=
    \frac{1}{k}\sum_{t=1}^{k}
    \tilde{\phi}(s_t^+,s_{t+1}^+),
    \qquad
    \tilde{\phi}^{-}
    :=
    \frac{1}{k}\sum_{t=1}^{k}
    \tilde{\phi}(s_t^-,s_{t+1}^-).
\end{equation}
Then the preference loss can be written as
\begin{equation}
    \mathcal{L}_{\mathrm{pref,HILP}}(z)
    =
    -\mathbb{E}
    \left[
    \log
    \frac{
    \exp(z^\top \tilde{\phi}^{+})
    }{
    \exp(z^\top \tilde{\phi}^{+})
    +
    \exp(z^\top \tilde{\phi}^{-})
    }
    \right].
\end{equation}

This shows the connection arises from the latent reward structure, rather than FB-specific properties, and thus extends to other RFRL methods with similar parameterizations.

\section{Additional Related Work}
\label{app:related work}
\subsection{Online Interactive PbRL}
\label{app:related work:online PbRL}
Another line of works in PbRL focuses on designing online interactive RL algorithms with preference feedback. 
To begin with, PEBBLE~\citep{lee2021pebble} proposed to combine unsupervised pre-training and off-policy learning to improve both the sample efficiency and feedback efficiency of PbRL. Subsequently, the PEBBLE framework was extensively benchmarked in various robotic tasks by~\citep{lee2021b}.
Moreover, online interactive PbRL has been studied in the following aspects: (i) \textit{Feedback efficiency}: To improve sample efficiency via exploration, RUNE~\citep{liang2022reward} design an intrinsic reward by measuring the novelty via disagreement across ensemble of learned reward models. By identifying the potential misalignment between the seemingly informative queries and policy learning, QPA~\citep{hu2024query} proposed the idea of policy-aligned queries to improve both sample and feedback efficiency. \citet{zhuimproving} introduced Proximal Policy Exploration (PPE) to encourage exploration in undersampled policy-proximal regions and strike a balance between in-distribution and out-of-distribution queries.
(2) \textit{Noisy preferences}: Given noisy preference feedback, RIME~\citep{cheng2024rime} proposed a selection-based discriminator method to dynamically filter out noise and ensure robust training.
(3) \textit{Equal preferences}: \citep{liu2025multi} introduced Multi-Type Preference Learning (MTPL), which optimizes the model by promoting similar predicted rewards if the behaviors
of two trajectories are annotated as equal preferences.
Additionally, online PbRL has been extended beyond the standard single-stage setting. In multi-stage control tasks, for instance, STAIR~\citep{luan2025stair} identified the stage alignment issue and proposed a two-step approach that first learns a stage approximation and then prioritizes pairwise comparisons within the same stage. More recently,~\citet{guo2025learning} utilized human preference feedback as a way to measure the human-likeness of RL agents.

\subsection{Reward-Free Representation Learning}
\label{app:related work:RFRL}

The work most closely related to ours in RFRL includes~\citep{barreto2017successor,borsa2019universal,touati2021learning,touati2023does}. A common reward-free formulation assumes rewards are linear combinations of known features and aims to learn policies that generalize across reward functions. Within this framework, \citet{barreto2017successor} introduced successor features (SFs), which encode occupancy measures and generalize the successor representation~\citep{dayan1993improving}, enabling efficient policy evaluation and generalized policy improvement. \citet{borsa2019universal} further improved SFs by decoupling policy learning from task specification, and later work extended SFs to distributional settings~\citep{zhu2024distributional}.

A key limitation of SF-based methods is their reliance on hand-crafted features. To overcome this, \citet{touati2021learning} proposed FB representations, which learn latent features from reward-free offline data via low-rank factorization of optimal policy occupancies. FB representations have been validated in both discrete and continuous control~\citep{touati2021learning,touati2023does} and extended to a wide range of settings, including offline RL with low-quality data~\citep{jeen2024zero,zheng2025towards}, online unsupervised RL~\citep{sun2025unsupervised}, constrained and partially observable RL~\citep{hugessen2025zero,jeen2025zero}, RL with autoregressive features or unseen dynamics~\citep{cetin2025finer,bobrin2026zeroshot}, multi-objective RL~\citep{chen2026reward}, and imitation learning~\citep{pirotta2024fast,tirinzoni2025zero}.
Building on these advances, we reinterpret preference-based RL (PbRL) from a reward-free perspective and connect FB representations to SimCLR-style contrastive learning to improve feedback efficiency.

In addition to SF- and FB-based methods, RFRL has been studied from various other perspectives. \citet{frans2024unsupervised} proposed to learn functional reward encoding (RFE), which encodes the state-reward samples of any arbitrary tasks into latent representations using a transformer-based variational auto-encoder. \citet{agarwal2025proto} introduced Proto Successor Measure, which serves as a basis set of representations such that any visitation distribution can be represented as an affine combination of this basis. \citet{wu2019laplacian} learned reward-free representations by considering the MDP graph Laplacian induced by some behavior policy.
HILPs~\citep{park2024foundation} proposed to learn a distance-preserving mapping that maps temporally similar states to spatially similar latent states such that a latent-conditioned policy can be trained to span this latent space by directional movements and then used to solve any downstream tasks. \citet{dubail2025shift} showed that low-rank structure could naturally occur in the shifted successor measure, which was defined to capture the environment dynamics after bypassing a number of initial transitions. In contrast, \citep{wiltzer2024distributional} focused on the distributional successor measure, which can be learned by minimizing a two-level
maximum mean discrepancy. Moreover, to predict the occupancy measure from reward-free data,
several recent works leveraged flow matching to learn predictive probabilistic models~\citep{farebrother2025temporal,zheng2026intentionconditioned}.
TD-JEPA~\citep{bagatella2026tdjepa} learned latent-predictive representations from reward-free data via temporal difference methods and trained encoders that could capture a low-rank factorization of long-term transition dynamics. 

\section{Task and Dataset}
\label{sec:Task_Dataset}
\subsection{Task}

\subsubsection{DMControl Environments}
We evaluate our method on 4 standard continuous-control domains from the DeepMind Control Suite \citep{tassa2018deepmind}, including both locomotion and goal-reaching tasks.

\textbf{Cheetah. } Cheetah is a planar bipedal locomotion environment with a 17-dimensional continuous state space and a 6-dimensional continuous action space, where each action lies in $[-1, 1]$. The reward is linearly proportional to the forward velocity $v$. We consider four task variants: \emph{Run} and \emph{Walk}, which encourage fast and moderate forward locomotion respectively, as well as \emph{Run Backward} and \emph{Walk Backward}, which reverse the direction of the velocity objective.

\textbf{Walker.} A planar walker with a 24-dimensional state space and a 6-dimensional continuous action space bounded in $[-1, 1]$. The environment requires coordinated leg motion and torso balance. We consider multiple task variants, including \emph{Stand}, which encourages maintaining an upright torso, \emph{Walk} and \emph{Run}, which reward increasing levels of forward velocity, and \emph{Flip}, which encourages angular momentum.

\textbf{Quadruped.} Quadruped models a four-legged robot has a 78-dimensional continuous state space and a 12-dimensional continuous action space, with each action bounded in $[-1, 1]$.  We evaluate four task variants: \emph{Stand}, which encourages stable posture, \emph{Walk} and \emph{Run}, which promote forward locomotion at different speeds, and \emph{Jump}, which requires coordinated leg thrusts to achieve vertical motion.

\textbf{Pointmass.} PointMass is a two-dimensional continuous navigation environment consisting of a four-room maze. 
The state space is 4-dimensional, comprising the position and velocity of the point mass, and the action space is 2-dimensional, corresponding to bounded continuous controls applied in the plane. The initial state is sampled from the top-left room. 
We evaluate four goal-reaching tasks, where the objective is to move the point mass to a goal located at the center of each room.

\subsubsection{Manipulation Environment.}
In addition to locomotion and navigation tasks, we evaluate our method on robotic manipulation environments from MetaWorld and Adroit.

\textbf{Button-Press-Topdown.}
We focus on \emph{Button-Press-Topdown}, in which a robotic arm is required to press a button from a top-down configuration. During data collection, the button position is randomized across episodes, introducing variation in target location while preserving the underlying task structure.

\textbf{Pen.}
We further evaluate on the Adroit \emph{Pen} manipulation task, which involves controlling a high-dimensional anthropomorphic hand to reorient a pen to a target pose. We consider two offline datasets: \emph{Pen-Human}, collected from human demonstrations, and \emph{Pen-Cloned}, generated by a behavior-cloned policy. These tasks are characterized by sparse rewards, making them particularly challenging for offline preference-based reinforcement learning methods.

\subsection{Dataset}
We use offline datasets from ExORL \citep{yaratsdon}, which consists of datasets collected by several unsupervised reinforcement learning algorithms on the DeepMind Control Suite. In this work, we select datasets generated by Random Network Distillation (RND) \citep{burdaexploration}, following prior work on reward-free and zero-shot reinforcement learning \citep{touati2023does, park2024foundation, agarwal2025proto, jajoo2025regularized}, we use the first 5{,}000 trajectories from each dataset, where each trajectory contains 1{,}000 transitions, resulting in a fixed offline dataset of 5 million transitions per environment.

\textbf{Preference dataset from scripted teacher.}
We construct preference datasets using a scripted teacher based on the ground-truth environment rewards. In the first setting, we randomly sample two trajectories from the offline dataset and extract a continuous segment of length $H=200$ from each trajectory. Preferences are assigned by comparing the cumulative rewards of the two segments. If the absolute return difference satisfies
$|\Delta R| < \epsilon \cdot H \cdot r_{\mathrm{avg}}$.
the pair is labeled as a tie. Here, $r_{\mathrm{avg}}$ denotes the average reward of the offline dataset, and we set the threshold coefficient $\epsilon = 0.05$. Using this procedure, we construct a total of 2{,}000 preference pairs. For PointMass tasks, we do not apply the tie criterion. Since PointMass is a goal-reaching task and the low-quality offline dataset, the reward signal is extremely sparse. As a result, even with a very small $\epsilon$, the above criterion would produce an overwhelming fraction of tie labels, making preference supervision uninformative. We therefore assign deterministic preferences for PointMass by directly comparing the cumulative returns of the two segments.

To ensure a fair comparison with zero-shot reinforcement learning methods that operate on 10{,}000 transitions, we introduce a second preference construction setting. Specifically, we first sample 400 trajectory segments of length $H=25$ from the offline dataset, resulting in exactly 10{,}000 transitions. Preference labels are then generated by pairing these segments using the same scripted teacher and tie threshold, yielding 2{,}000 preference pairs in total.

\textbf{Noisy preference setting.}
To evaluate robustness to imperfect preference annotations, we additionally consider a noisy preference setting by injecting synthetic noise into the preference labels \citep{lee2021b}.
Specifically, for each preference pair, we independently corrupt the original label with probability $p$.
If the original label indicates a strict preference, it is randomly replaced by either the inverted preference or a tie. Conversely, if the original label indicates a tie, it is converted into a strict preference for either segment with equal probability.

\textbf{Preference dataset from human feedback.}
For manipulation tasks, we additionally evaluate on human preference datasets without modification, following the original dataset of prior work. 
For MetaWorld, we use the human preference dataset released with LiRE \citep{choi2024listwise}, consisting of 200 preference pairs with a segment length of 25.
For Adroit \emph{Pen}, we use the human preference dataset from Preference Transformer \citep{kim2023preference}, which contains 100 preference pairs with a segment length of 100.
In all cases, we directly use the original datasets and codebases without any relabeling or additional preprocessing.

\section{Experimental Details}
\subsection{Computation Resource}
For all our experiments, we run on RTX 6000 Ada generation, RTX 3080, RTX 3090, and RTX 4080 super GPU. For FB pre-training, it took around 20 hours to train. For FB-PbRL fine-tuning, it took around 10 hours.
\subsection{Implementation Details}
\label{sec:implement_details}

\textbf{DPPO. } We implement this baseline algorithm based on the open-source code of DPPO \citep{an2023direct}. The preference predictor provided in the official DPPO repository, which follows a GPT-2 based transformer architecture inherited from the Preference Transformer framework. The hyperparameters used are listed in Table \ref{tab:DPPO}.

\begin{table}[h]
    \centering
    \caption{\textbf{Hyperparameters of DPPO}}
    \begin{tabular}{c l l}
        \toprule
        & \textbf{Hyperparameter} & \textbf{Value} \\
        \midrule
        \multirow{6}{*}{\textbf{Preference predictor}} 
            & Optimizer & AdamW \\
            & Learning rate & 1e-4 \\
            & Hidden dim & 256 \\
            & Hidden layers & 1 \\
            & $\nu$ & 1.0 \\
            & $m$ & 20 \\
        \midrule
        \multirow{6}{*}{\textbf{Policy optimization}} 
            & Optimizer & Adam \\
            & Learning rate & 3e-4 \\
            & Hidden dim & 256 \\
            & Hidden layers & 2 \\
            & $\lambda$ & 0.1 \\
            & Dropout & 0.25 \\
        \bottomrule
    \end{tabular}
    \label{tab:DPPO}
\end{table}

\textbf{OPPO. } We implement this baseline using the official open-source code of OPPO~\citep{kang2023beyond}, and the corresponding hyperparameters are summarized in Table~\ref{tab:OPPO}.
Regarding the dimensionality of the latent task representation $z$, we additionally experimented with increasing its dimension to match that used in our method.
However, we did not observe any consistent performance improvement across tasks.
Therefore, we report results using the original OPPO configuration.

\begin{table}[h]
    \centering
    \caption{\textbf{Hyperparameters of OPPO}}
    \begin{tabular}{c l l}
        \toprule
        & \textbf{Hyperparameter} & \textbf{Value} \\
        \midrule
        \multirow{4}{*}{\textbf{Offline HIM}} 
            & $\alpha$ & 0.5 \\
            & $\beta$  & 0.1 \\
            & Optimizer & AdamW \\
            & Training steps & 1e{5} \\
        \midrule
        \multirow{4}{*}{$\boldsymbol{z^*}$ \textbf{searching}} 
            & Optimizer & AdamW \\
            & Learning rate & 1e-3 \\
            & Weight decay & 1e-4 \\
            & Latent dim & 16 \\
        \midrule
        \multirow{7}{*}{\textbf{Transformer}} 
            & Learning rate & 1e-4 \\
            & Batch size & 256 \\
            & Hidden dim & 128 \\
            & Hidden layers & 3 \\
            & Activation function & ReLU \\
            & Dropout & 0.1 \\
            & Grad norm clip & 0.25 \\
        \bottomrule
    \end{tabular}
    \label{tab:OPPO}
\end{table}

\textbf{CLARIFY \& OPRL. } We implement these two baseline algorithm based on the open-source code of CLARIFY \citep{mu2025clarify}. CLARIFY and OPRL are two-step PbRL methods. In these methods, the reward model is first trained, followed by offline RL using the trained reward model. The reward models used in CLARIFY and OPRL share the same structure. We use IQL as the default offline RL algorithm. The hyperparameters for reward learning and policy learning are provided in Table \ref{tab:CLARIFY_OPRL}.

\begin{table}[h]
    \centering
    \caption{\textbf{Hyperparameters of CLARIFY \& OPRL}}
    \begin{tabular}{c l l}
        \toprule
        & \textbf{Hyperparameter} & \textbf{Value} \\
        \midrule
        \multirow{10}{*}{\textbf{Reward model}} 
            & Structure & Transformer \\
            & Optimizer & Adam \\
            & Learning rate & 3e-4 \\
            & Batch size & 64 \\
            & Hidden dim & 256 \\
            & Hidden layers & 3 \\
            & Activation function & ReLU \\
            & Final activation & Tanh \\
            & \# of ensembles & 3 \\
            & Reward from the ensemble models & Average \\
        \midrule
        \multirow{7}{*}{\textbf{Contrastive learning}} 
            & Latent dim & 16 \\
            & Optimizer & AdamW \\
            & Learning rate & 1e-3 \\
            & Weight decay & 1e-4 \\
            & $\lambda_{amb}$ & 0.1 \\
            & $\lambda_{quad}$ & 1 \\
            & $\lambda_{norm}$ & 0.1 \\
        \midrule
        \multirow{9}{*}{\textbf{IQL}} 
            & Optimizer & Adam \\
            & Hidden dim & 256 \\
            & Hidden layers & 2 \\
            & Activation function & ReLU \\
            & Learning rate & 3e-4 \\
            & Batch size & 256 \\
            & Discount factor & 0.99 \\
            & $\beta$ & 3.0 \\
            & $\tau$ & 0.7 \\
        \bottomrule
    \end{tabular}
    \label{tab:CLARIFY_OPRL}
\end{table}

\textbf{LiRE. } We implement this baseline algorithm based on the open-source code of LiRE~\citep{choi2024listwise}.
The hyperparameter configuration largely follows that of CLARIFY and OPRL (Table~\ref{tab:CLARIFY_OPRL}),
as these methods share a similar architecture consisting of a reward model and an IQL policy.
The main differences are summarized in Table~\ref{tab:LiRE}.
In addition, LiRE employs a linear score function in the reward model to compute preference scores.

\begin{table}[h]
    \centering
    \caption{\textbf{Hyperparameters specific to LiRE}}
    \begin{tabular}{l l}
        \toprule
        \textbf{Hyperparameter} & \textbf{Value} \\
        \midrule
        Reward model score function & Linear \\
        Learning rate & 1e-3 \\
        Batch size & 256 \\
        RLT feedback limit $Q$ & 100 \\
        \bottomrule
    \end{tabular}
    \label{tab:LiRE}
\end{table}

\textbf{FB-PbRL and FB-BT-FT. } We implement our algorithm based on the open-source code of FB \citep{tirinzoni2025zero} to pre-train FB model. For the Adroit \emph{Pen} task, to address performance stability, we adopt the TD-JEPA \citep{bagatella2026tdjepa} architecture by incorporating FlowBC and auxiliary encoders into FB to enhance long-term representation learning. For FB-PbRL, we pre-train a single FB model for each domain (e.g., \textit{Walker}) using the corresponding offline dataset for 3 million steps to ensure computational efficiency. This pre-trained model is then shared across all downstream tasks within that domain, significantly reducing the computational cost of multi-task evaluation.
During this pre-training phase, we follow established protocols \citep{touati2023does} to sample the latent variable $\bm{z}$ using a mixed strategy: 50\% of samples are drawn uniformly from a hypersphere of radius $\sqrt{d}$, while the remaining 50\% are encoded from state samples as $\bm{z} = \mathbf{B}(s)$ with $s \sim \mathcal{D}$. For fine-tuning stage, we treat the task embedding $\bm{z}$ as a learnable parameter that represents the target policy. In each iteration, we sample a batch of preference pairs to update $\bm{z}$ via the contrastive preference objective. This optimized $\bm{z}$ is then used to compute the standard FB losses and update the policy, effectively reshaping the latent space to satisfy the specific constraints of the downstream task.

For FB-BT-FT, The BT model is implemented on the open-source code of Uni-RLHF \citep{yuanuni2024}. For the Adroit \emph{Pen} task, we observed that the vanilla FB model fails to perform well. To address this, we adopted a TD-JEPA–style approach by introducing the concepts of \textbf{FlowBC} and additional encoders to enhance long-term representation learning. For fine-tuning FB with the BT model, we first trained a BT reward model using 2,000 pairs of preference data.  This learned reward then guides the FB training for the final 2 million steps (following 1 million steps of warm-up). We utilize the formulation:
\begin{equation}
\bm{z}_r = \mathbb{E}_{(s, a) \sim \mathcal{D}} \big[ \mathbf{B}_\omega(s, a)\, R_{BT}(s, a) \big],
\end{equation}
and use high quality BT reward to get a z which will reflect the preference of preference dataset so then better than random sample, have the affect of guidance. Following the formulation, the high-quality BT reward enables the model to sample $z$ that better reflects the preferences encoded in the dataset, effectively providing a form of guidance beyond random sampling. Furthermore, we replaced the intrinsic reward in the Q-function update with the BT reward to better align the learning signal with the preference-based objective:

\begin{equation}
\mathcal{L}_{\mathrm{Q}}(\theta)
=
\mathbb{E}_{\substack{
(s,a,s') \sim \mathcal{D}
}}
\Big[
\big(
\mathbf{F}_{\theta}(s, a, \bm{z})^\top \bm{z}
- ( R_{BT}(s, a) + \gamma \,
\mathbf{F}_{\hat{\theta}}\!\left(
s', \pi_{\phi}(s'), \bm{z}
\right)^\top \bm{z})
\big)^2
\Big]
\\
\label{eq:q_loss}
\end{equation}

In the original FB implementation, the coefficient of this Q-function loss is typically set to zero, meaning that the Q-loss is not utilized during training. In contrast, our formulation explicitly incorporates the BT reward into this objective, enabling the value function to learn from preference-aligned feedback rather than relying solely on intrinsic signals. For FB-PbRL, we fine-tune the pre-trained FB model using preference dataset. The hyperparameters used are listed in Table \ref{tab:main}.

\begin{table}[h]
    \centering
    \caption{\textbf{Hyperparameters of FB reward learning and policy learning}}
    \begin{tabular}{c l l}
        \toprule
        & \textbf{Hyperparameter} & \textbf{Value} \\
        \midrule
        \multirow{10}{*}{\textbf{Reward model}} 
            & Structure & MLP \\
            & Optimizer & Adam \\
            & Learning rate & 3e-4 \\
            & Batch size & 64 \\
            & Hidden dim & 256 \\
            & Hidden layers & 3 \\
            & Activation function & ReLU \\
            & Final activation & Tanh \\
            & \# of ensembles & 3 \\
            & Reward from the ensemble models & Average \\
        \midrule
        \multirow{18}{*}{\textbf{FB}} 
            & Pre-train steps & 3e6 \\
            & Latent dimension 
                & 100 (Walker, PointMass), 50 (others) \\
            & Forward model hidden dim & 1024 \\
            & Backward model hidden dim & 256 \\
            & Actor hidden dim & 1024 \\
            & Forward model hidden layers & 1 \\
            & Backward model hidden layers & 2 \\
            & Actor hidden layers & 1 \\
            & Batch size & 1024 \\
            & Learning rate (Forward) 
                & 1e-4 \\
            & Learning rate (Backward) 
                & 1e-6 (PointMass), 1e-4 (others) \\
            & Learning rate (Actor)
                & 1e-6 (PointMass), 1e-4 (others) \\
            & Discount factor
                & 0.99 (PointMass), 0.98 (others) \\
            & $\zeta$ & 0.01 \\
            & $\lambda$ & 1 \\
            & $\bm{z}$ train-goal ratio & 0.5 \\
            & BC coefficient (for \emph{Pen}) & 0.3 \\
            & Left encoder hidden dim (for \emph{Pen}) & 512 \\
        \midrule
        \multirow{4}{*}{\textbf{FB-BT-FT}} 
            & Warmup steps & 1e6 \\
            & Fine-tuning steps & 2e6 \\
            & Learning rate & 1e-4 \\
            & Q loss coefficient & 0.1 \\
        \midrule
        \multirow{5}{*}{\textbf{FB-PbRL}} 
            & Fine-tuning steps & 1e6 \\
            & Learning rate($\bm{z}$) & 1e-5 \\
            & Sample preference batch size & 256 \\
            & Preference coefficient & 100 \\
            & $\beta$ & 1.0 \\
        \bottomrule
    \end{tabular}
    \label{tab:main}
\end{table}

\section{Additional Experiment Results}
\subsection{Ablation Study of Segment Size}

Segment size is an important design choice in constructing preference datasets.
Intuitively, longer segments may provide richer context and could enable the FB representation to encode task-relevant information more accurately in the latent space.
Based on this intuition, we initially expected larger segment sizes to yield improved performance.

However, empirical results in Table~\ref{tab:segment_size} indicate that FB-based methods are relatively insensitive to the choice of segment size.
Across Walker tasks, segment sizes of 200 and 25 achieve comparable performance in most cases.
One possible explanation is that locomotion tasks often exhibit repetitive and temporally consistent behaviors, such that even shorter segments are sufficient to capture the essential dynamics and preference-relevant information.

\begin{table}[h]
    \centering
    \caption{\textbf{Effect of segment size on performance (Walker tasks).}
    Segment size 200 is used as the default setting.
    Results for other segment sizes are averaged over 3 random seeds.}
    \label{tab:segment_size}
    \vskip 0.1in
    \begin{small}
    \setlength{\tabcolsep}{5pt}
    \renewcommand{\arraystretch}{0.95}
    \begin{tabular}{c|l|cccc}
        \toprule
        \textbf{Domain} & \textbf{Task} & \textbf{200 (default)} & \textbf{100} & \textbf{50} & \textbf{25} \\
        \midrule
        \multirow{5}{*}{\rotatebox[origin=c]{90}{\textbf{Walker}}} 
          & walk  & \best{961.5$\pm$4.2} & \second{950.5$\pm$12.1} & 936.5$\pm$26.1 & 873.5$\pm$18.1 \\
          & stand & 980.3$\pm$3.4 & \second{982.4$\pm$1.0}  & \best{987.0$\pm$3.1}  & 977.5$\pm$1.1 \\
          & run   & \best{503.7$\pm$14.4} & \second{497.2$\pm$12.2} & 491.4$\pm$31.3 & 495.0$\pm$24.2 \\
          & flip  & 606.0$\pm$7.9 & \best{805.9$\pm$34.1} & 664.8$\pm$54.3 & \second{758.1$\pm$23.8} \\
          & \textit{Average} & 762.9 & \best{809.0} & 769.9 & \second{776.0} \\
        \bottomrule
    \end{tabular}
    \end{small}
\end{table}

\subsection{Comparison Between SimCLR Loss and Margin Loss }
\label{sec:margin_loss}
We compare our SimCLR contrastive objective with a margin-based contrastive loss commonly used in PbRL methods such as OPPO \citep{kang2023beyond} and CLARIFY \citep{mu2025clarify}. 

For the margin loss, we follow the design adopted in CLARIFY \citep{mu2025clarify} and implement a triplet margin loss with $\ell_2$ distance. Specifically, we construct positive and negative samples based on preference supervision and apply a standard triplet margin loss with margin $m=1.0$.
In addition, we include an auxiliary triplet loss with zero margin to handle tied or ambiguous preference cases.

Empirically in Table~\ref{tab:margin_vs_simclr}, across all three evaluated domains, the SimCLR-based objective consistently outperforms the margin-based loss.
We further observe that the margin-based loss exhibits unstable training behavior in several environments, leading to high variance and degraded performance.
These results suggest that the SimCLR loss is better suited for structuring the FB latent space.

\begin{table}[h]
    \centering
    \caption{\textbf{Performance comparison between margin-based loss and SimCLR loss.}
    Results are averaged over 5 random seeds (Mean $\pm$ Std).
    }
    \label{tab:margin_vs_simclr}
    \vskip 0.1in
    \begin{small}
    \setlength{\tabcolsep}{3pt}
    \renewcommand{\arraystretch}{0.95}
    \begin{tabular}{c|l|cc}
        \toprule
        \textbf{Domain} & \textbf{Task} & \textbf{Margin Loss} & \textbf{SimCLR Loss} \\
        \midrule
        
        \multirow{5}{*}{\rotatebox[origin=c]{90}{\textbf{Cheetah}}} 
          & Run         & 223.2$\pm$10.1 & \textbf{249.5$\pm$74.5} \\
          & Run Backward    & 333.4$\pm$25.3 & \textbf{333.9$\pm$28.5} \\
          & Walk        & 807.6$\pm$60.3 & \textbf{920.3$\pm$30.7} \\
          & Walk Backward   & 981.6$\pm$1.4  & \textbf{983.3$\pm$0.5} \\
          & \textit{Average} & 586.5 & \textbf{621.7} \\
        \midrule

        \multirow{5}{*}{\rotatebox[origin=c]{90}{\textbf{Walker}}} 
          & Walk        & 843.2$\pm$98.2 & \textbf{961.5$\pm$4.2} \\
          & Stand       & 933.2$\pm$72.6 & \textbf{980.3$\pm$3.4} \\
          & Run         & \textbf{511.3$\pm$12.6} & 503.7$\pm$14.4 \\
          & Flip        & 574.4$\pm$85.0 & \textbf{606.0$\pm$7.9} \\
          & \textit{Average} & 715.5 & \textbf{762.9} \\
        \midrule

        \multirow{5}{*}{\rotatebox[origin=c]{90}{\textbf{Quadruped}}} 
          & Run         & 525.5$\pm$18.5 & \textbf{589.9$\pm$36.3} \\
          & Walk        & 752.3$\pm$94.6 & \textbf{944.2$\pm$8.9} \\
          & Stand       & 979.9$\pm$6.8  & \textbf{991.1$\pm$0.3} \\
          & Jump        & 757.5$\pm$15.9 & \textbf{862.2$\pm$19.8} \\
          & \textit{Average} & 753.8 & \textbf{846.9} \\
        \bottomrule
    \end{tabular}
    \end{small}
\end{table}

\subsection{Limited and Noisy Preference on \textit{Walker}}
We further investigate the robustness of our method under limited and noisy preference supervision on the \textit{walker} domain. It is important to note that these two settings follow different data construction protocols.
For the limited-preference experiments, we adopt the zero-shot RFRL protocol.
In contrast, the noisy-preference experiments follow the PbRL protocol, which results in a different initial performance scale.

Overall, the \textit{walker} domain exhibits robustness to both limited and noisy preference supervision that is consistent with the trends observed in \textit{quadruped}.
Under limited preferences, performance degrades gradually as the number of preference pairs decreases.
When fewer than 500 preference pairs are available, our method falls slightly below the PSM baseline; however, the performance gap remains within 10\% across all evaluated settings, indicating a graceful degradation behavior. In the presence of noisy preferences, performance on the \textit{walker} domain remains largely stable as the noise ratio increases, with a noticeable degradation only emerging at a noise ratio of 0.2.
This suggests that the learned representation and preference-guided finetuning procedure can effectively tolerate moderate levels of label noise in this domain as well.

Taken together, these results demonstrate that the robustness properties observed in the main text are not specific to a single domain.
Instead, similar behavior is also observed on \textit{walker}, providing additional evidence for the robustness of our work.

\begin{figure*}[t]
    \centering

    %
    %
    \includegraphics[width=0.33\textwidth]{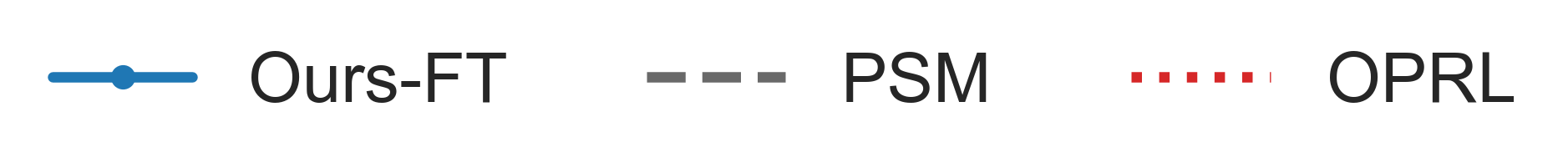}

    \vspace{-0.1cm}
    
    %
    
    %
    %
    \begin{subfigure}[b]{0.32\textwidth} 
        \centering
        \includegraphics[width=\textwidth]{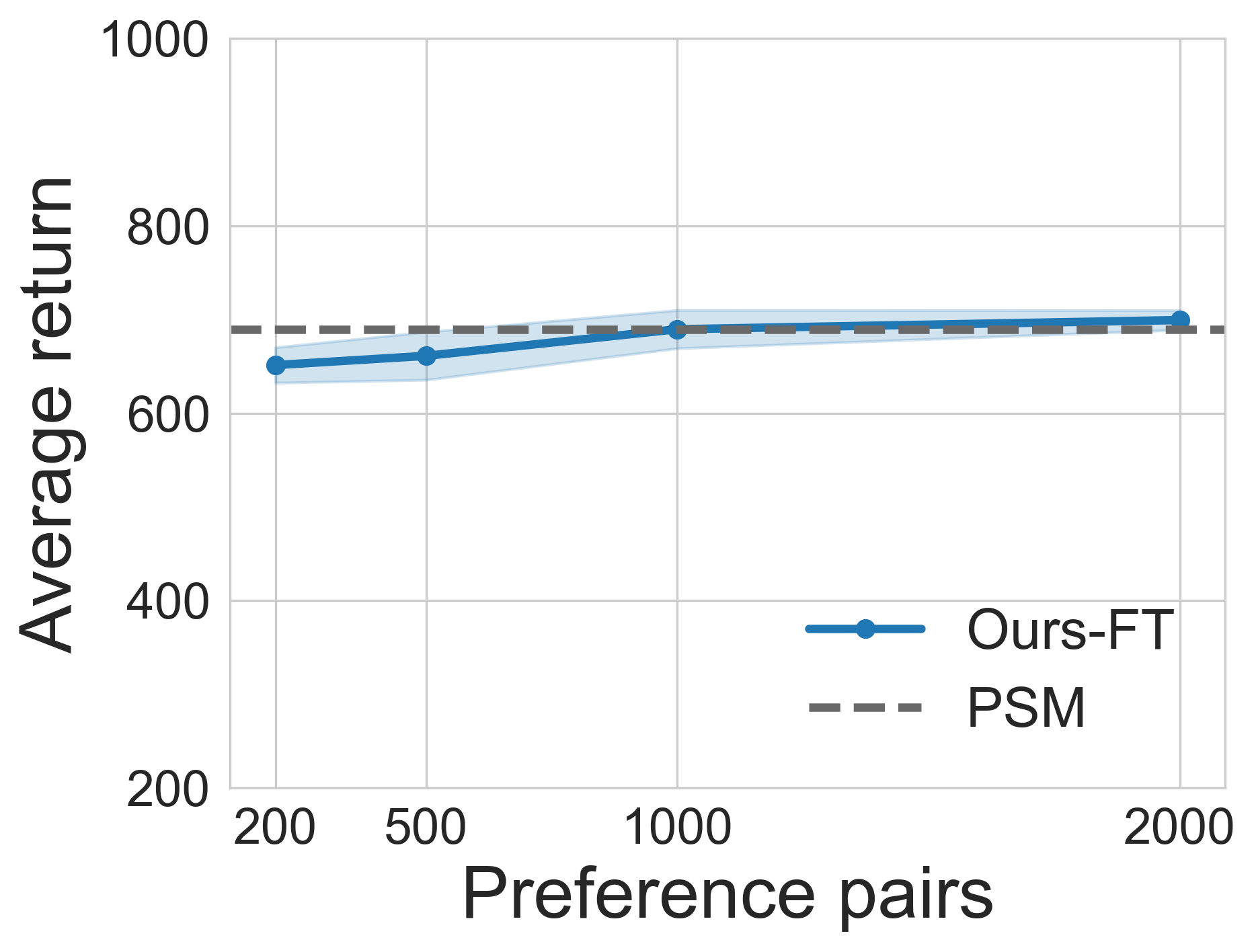}
        \caption{Limited feedback} 
        \label{subfig:robust_pairs_walker}
    \end{subfigure}%
    \hspace{0.05\textwidth} %
    \begin{subfigure}[b]{0.32\textwidth}
        \centering
        \includegraphics[width=\textwidth]{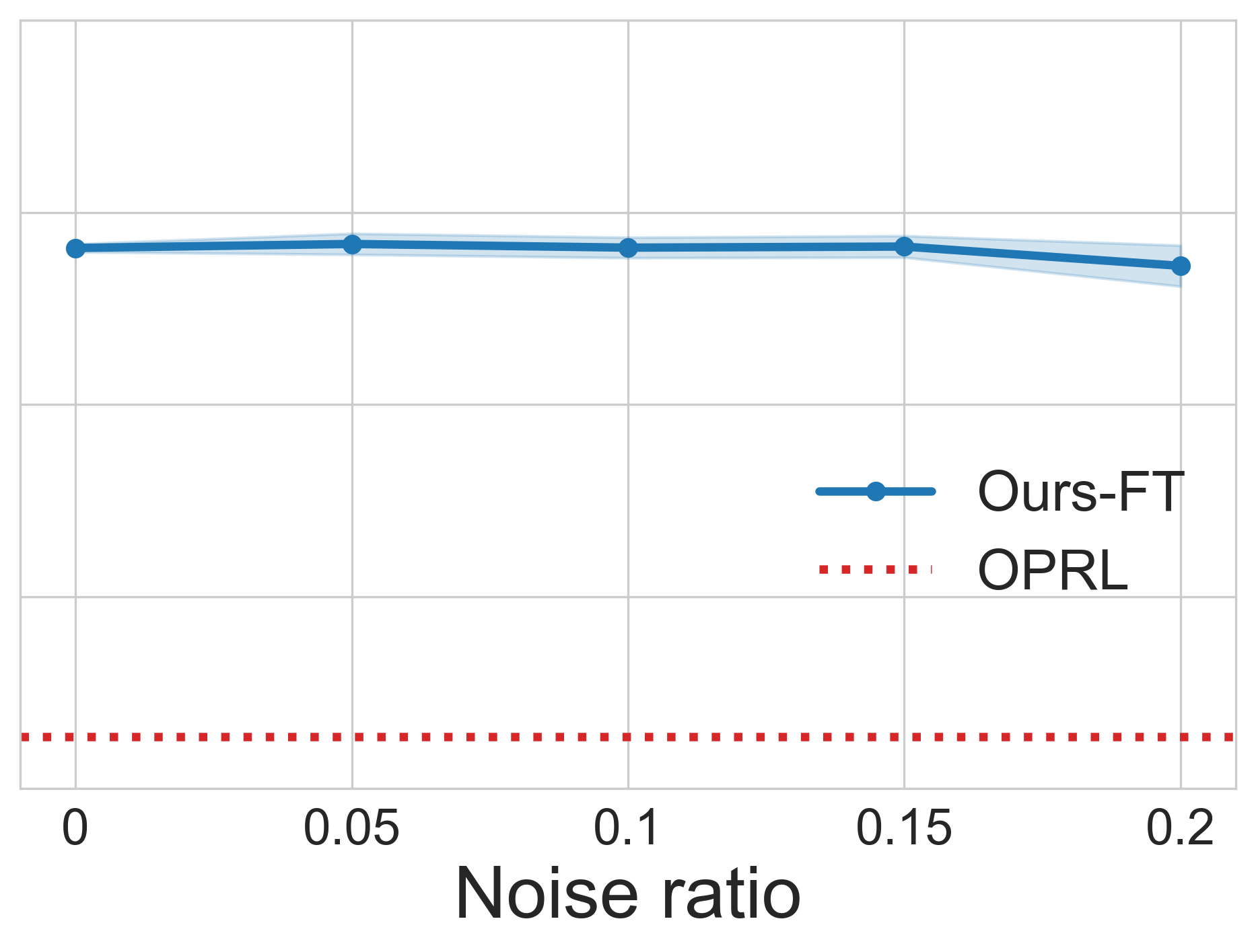}
        \caption{Noisy preference} 
        \label{subfig:robust_noise_walker}
    \end{subfigure}

    \caption{
    \textbf{Robustness analysis of FB-PbRL in the \textit{Walker} domain.} 
    \textbf{(a)} Performance scaling with respect to the number of preference pairs. 
    \textbf{(b)} Robustness against noisy preference annotations. 
    Results in (a) and (b) represent the average performance averaged over four tasks in the \textit{Walker} domain; the dashed lines indicate the best performance achieved by offline PbRL and zero-shot RFRL baselines.
    In all plots, shaded regions indicate the standard deviation across 5 seeds.
    }
    \label{fig:robustness_walker}
\end{figure*}

\subsection{Linear Realizability Rewards under Backward Model} As stated in Equation~\ref{eq:linear reward}, we assume the linear realizability of the reward function under the backward representation. This implies that the learned latent space is sufficiently rich to express arbitrary downstream task rewards via linear projections. To empirically validate this hypothesis, we conduct an experiment to quantify how accurately ground-truth rewards can be reconstructed from frozen backward features. Specifically, We sample 10,000 transitions from the offline RND dataset and solve a linear regression problem to obtain the optimal linear coefficient vector $\mathbf{w}$ that minimizes the Mean Squared Error (MSE):\begin{equation}\min_{\mathbf{w}} \mathbb{E}_{(s, a) \sim \mathcal{D}} \left[ \left( r(s, a) - \mathbf{B}_\omega(s, a)^\top \mathbf{w} \right)^2 \right].\end{equation}.

Table~\ref{tab:linear_realizability} reports the MSE across four domains: \textit{Cheetah}, \textit{Walker}, \textit{Quadruped}, and \textit{Pointmass}. The results demonstrate extremely low reconstruction errors across all tasks. Notably, in the \textit{Pointmass} domain, the MSE is negligible, indicating a near-perfect recovery of the reward landscape. Even in high-dimensional locomotion tasks like \textit{Quadruped} and \textit{Cheetah}, the low MSE values confirm that the unsupervised FB objective has successfully learned a basis that covers the underlying structure of these tasks. These findings strongly support Equation~\ref{eq:linear reward}, confirming that our pre-trained model yields a versatile task-agnostic representation capable of adapting to downstream tasks via linear operations.

\subsection{Empirical Invertibility of \texorpdfstring{$\mathbf{H}_{\mathbf{B}}$}{HB}}
\label{app:hb_invertibility}

In our framework, the invertibility of $\mathbf{H}_{\mathbf{B}}$ is inherently maintained by the FB objective, eliminating the need for singularity handling. The auxiliary orthonormality loss regularizes $\mathbf{H}_{\mathbf{B}}$ to approximate the identity matrix ($I_d$). This actively prevents feature collapse and pushes $\mathbf{H}_{\mathbf{B}}$ away from singularity. We empirically validated this stability following the feature rank protocol introduced by~\citep{touati2023does}. By evaluating $\mathbf{H}_{\mathbf{B}}$ during fine-tuning across all four walker tasks, we confirm all eigenvalues remained strictly above the $10^{-4}$ threshold. Thus, $\mathbf{H}_{\mathbf{B}}$ successfully maintains full rank and strict invertibility in practice.

\subsection{Computational Overhead}
\label{app:computation_cost}

To explicitly quantify the computational overhead, Table~\ref{tab:compute_time} details the total computation time required for the 4 tasks in the Walker domain. Our framework separates an amortized representation pre-training phase from task-specific fine-tuning. Because the FB pre-training is performed only once per domain and subsequently reused across all downstream tasks, its effective pre-training cost scales favorably. This provides a distinct advantage over standard end-to-end PbRL methods that require training a new reward model from scratch for every task. 

\begin{table}[h]
    \centering
    \caption{\textbf{Total computation time (in hours) for 4 tasks in the Walker domain.} Evaluated using NVIDIA RTX 3090 GPUs. FB-PbRL benefits from amortized pre-training, which is performed only once per domain and reused across all tasks.}
    \label{tab:compute_time}
    \vskip 0.1in
    \begin{small}
    \setlength{\tabcolsep}{5pt}
    \renewcommand{\arraystretch}{0.95}
    \begin{tabular}{l|cccc}
        \toprule
        \textbf{Phase} & \textbf{OPRL} & \textbf{CLARIFY} & \textbf{LiRE} & \textbf{FB-PbRL (Ours)} \\
        \midrule
        Pre-training / Reward Model Training & 24 & 24 & - & 18.5 \\
        Fine-tuning / Offline RL (1M steps) & 20 & 20 & 18 & 50 \\
        \bottomrule
    \end{tabular}
    \end{small}
\end{table}

\subsection{Extended Sensitivity Analysis of Hyperparameters}
\label{app:hyperparameter_ablation}

\textbf{Sensitivity to preference coefficient $\alpha$.} We additionally evaluate all four tasks within the Walker domain across a wide range of $\alpha$ values. As shown in Table~\ref{tab:alpha_ablation}, FB-PbRL demonstrates highly stable performance across different magnitudes of the coefficient. This indicates that our method is not overly sensitive to this specific hyperparameter. We select $\alpha=100$ as the default value across all experiments since it consistently yields the strongest results.

\begin{table}[h]
    \centering
    \caption{\textbf{Effect of preference coefficient $\alpha$ on performance (Walker tasks).} The method exhibits stable performance across different scales of $\alpha$, with $\alpha=100$ yielding the strongest results across all tasks. Results are averaged over 3 random seeds.}
    \label{tab:alpha_ablation}
    \vskip 0.1in
    \begin{small}
    \setlength{\tabcolsep}{8pt}
    \renewcommand{\arraystretch}{0.95}
    \begin{tabular}{l|ccc}
        \toprule
        \textbf{Task} & $\alpha=10$ & $\bm{\alpha=100}$ \textbf{(default)} & $\alpha=1000$ \\
        \midrule
        walk  & 888.0 & \textbf{961.5} & 923.9 \\
        stand & 970.9 & \textbf{980.3} & 978.2 \\
        run   & 491.5 & \textbf{503.7} & 469.3 \\
        flip  & 584.1 & \textbf{606.0} & 578.7 \\
        \bottomrule
    \end{tabular}
    \end{small}
\end{table}

\textbf{Sensitivity to orthonormality coefficient $\lambda$.} For the orthonormality coefficient $\lambda$, we set $\lambda=1$ following the original FB pre-training setup \citep{touati2023does}. To validate this choice within our preference-based framework, we additionally evaluate $\lambda \in \{0.3, 3\}$ on the \textit{Walker-walk} and \textit{Quadruped-walk} tasks. As presented in Table~\ref{tab:lambda_ablation}, the results show that $\lambda=1$ consistently achieves the best performance, confirming the prior findings and validating our default setting.

\begin{table}[h]
    \centering
    \caption{\textbf{Effect of orthonormality coefficient $\lambda$ on performance.} We compare different values of $\lambda$, demonstrating that the default setting of $\lambda=1$ consistently achieves the best performance. Results are averaged over 3 random seeds.}
    \label{tab:lambda_ablation}
    \vskip 0.1in
    \begin{small}
    \setlength{\tabcolsep}{8pt}
    \renewcommand{\arraystretch}{0.95}
    \begin{tabular}{l|ccc}
        \toprule
        \textbf{Task} & $\lambda=0.3$ & $\bm{\lambda=1}$ \textbf{(default)} & $\lambda=3$ \\
        \midrule
        Walker-walk & 898.6 & \textbf{961.5} & 890.6 \\
        Quadruped-walk & 932.0 & \textbf{944.2} & 925.5 \\
        \bottomrule
    \end{tabular}
    \end{small}
\end{table}


\begin{table}[h]
\centering
\caption{\textbf{Linear realizability of ground-truth rewards.} Mean Squared Error (MSE) between the true environment rewards and the rewards predicted via linear regression on the frozen backward representations.}
\label{tab:linear_realizability}
\vskip 0.1in
\begin{small}
\setlength{\tabcolsep}{3pt}
\renewcommand{\arraystretch}{1.1}
\begin{tabular}{c|l|c}
\toprule
\textbf{Domain} & \textbf{Task} & \textbf{MSE} \\
\midrule

\multirow{4}{*}{\rotatebox[origin=c]{90}{\textbf{Cheetah}}} 
 & Run & 0.0029 \\
 & Run Backward & 0.0031 \\
 & Walk & 0.0576 \\
 & Walk Backward & 0.0555 \\
\midrule

\multirow{4}{*}{\rotatebox[origin=c]{90}{\textbf{Walker}}} 
 & Walk & 0.0186 \\
 & Stand & 0.0236 \\
 & Run & 0.0018 \\
 & Flip & 0.0277 \\
\midrule

\multirow{4}{*}{\rotatebox[origin=c]{90}{\textbf{Quadruped}}} 
 & Run & 0.0061 \\
 & Walk & 0.0516 \\
 & Stand & 0.0135 \\
 & Jump & 0.0162 \\
\midrule

\multirow{4}{*}{\rotatebox[origin=c]{90}{\textbf{Pointmass}}} 
 & Top Left & 0.0036 \\
 & Top Right & 0.0015 \\
 & Bottom Left & 0.0010 \\
 & Bottom Right & 0.0004 \\
\bottomrule
\end{tabular}
\end{small}
\end{table}

\subsection{Further Analysis on PointMass Tasks}
\label{app:pointmass_analysis}
In this section, we provide a detailed analysis of the performance characteristics observed in the \textit{PointMass} domain, specifically addressing the performance gap in the \textit{Bottom Right} task (Table~\ref{tab:PbRL_results}) and the variance reported in Table~\ref{tab:zeroshot_results}.

\textbf{Dataset Distribution and Coverage Imbalance.} The performance of FB-PbRL on specific goals is closely tied to the state coverage of the offline dataset. As visualized in the visitation heatmap (Figure~\ref{subfig:visitation_all}), the RND PointMass dataset exhibits a severe \textbf{coverage imbalance}, where transitions are heavily concentrated in the top-left region of the maze. Conversely, the bottom-right region contains significantly fewer samples (Figure~\ref{subfig:visitation_low}). This distributional sparsity leads to insufficient supervision during the representation learning phase, making it inherently more challenging to resolve task objectives in these underrepresented areas compared to well-sampled regions.

\textbf{Informativeness of Preference Signals.} The zero-shot evaluation protocol poses an additional challenge by restricting the preference budget to only 10k transitions with a short segment length of $H=25$. Figure~\ref{subfig:trajectories} visualizes sample segments from this setting. The trajectories are notably short and fragmented, which often fails to capture sufficient state transitions to convey a clear directional signal toward the goal. The combination of data scarcity in the goal region and the weak preference signals from short segments accounts for the increased variance and the difficulty in achieving peak performance in certain navigation tasks.

\begin{figure*}[t]
    \centering
    \begin{subfigure}[b]{0.34\textwidth}
        \centering
        \includegraphics[width=\textwidth]{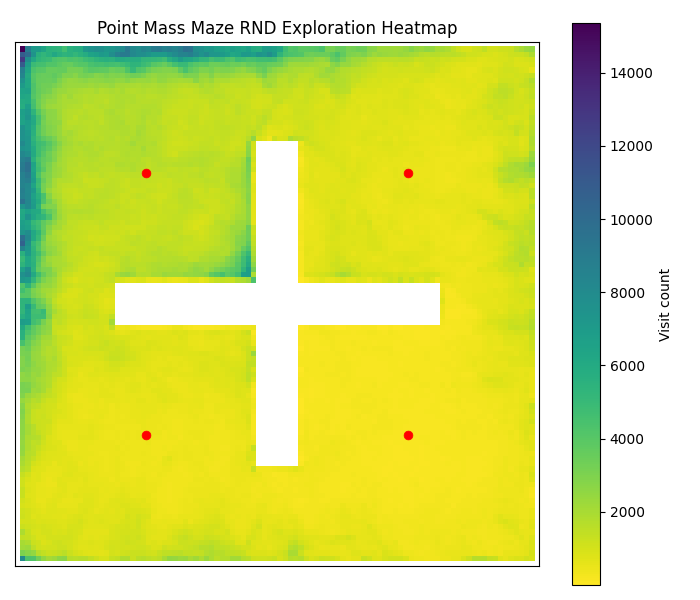}
        \caption{Overall Visitation Density}
        \label{subfig:visitation_all}
    \end{subfigure}
    \hfill
    \begin{subfigure}[b]{0.34\textwidth}
        \centering
        \includegraphics[width=\textwidth]{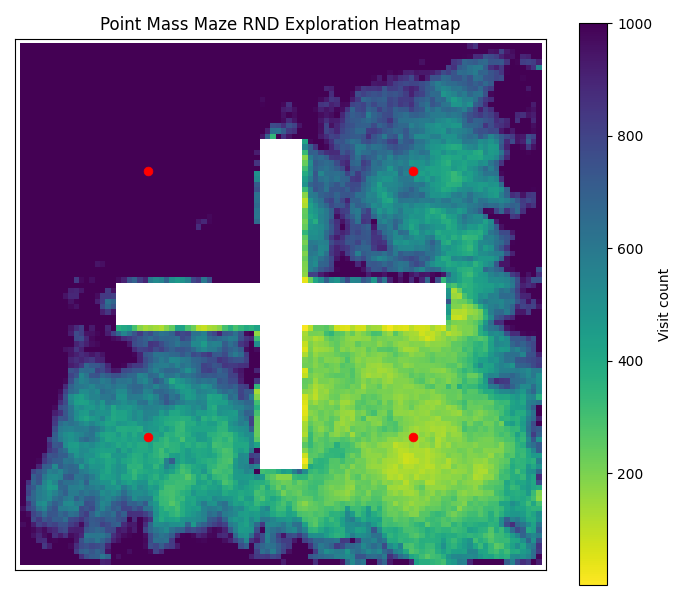}
        \caption{Sparsely Visited States}
        \label{subfig:visitation_low}
    \end{subfigure}
    \hfill
    \begin{subfigure}[b]{0.28\textwidth}
        \centering
        \includegraphics[width=\textwidth]{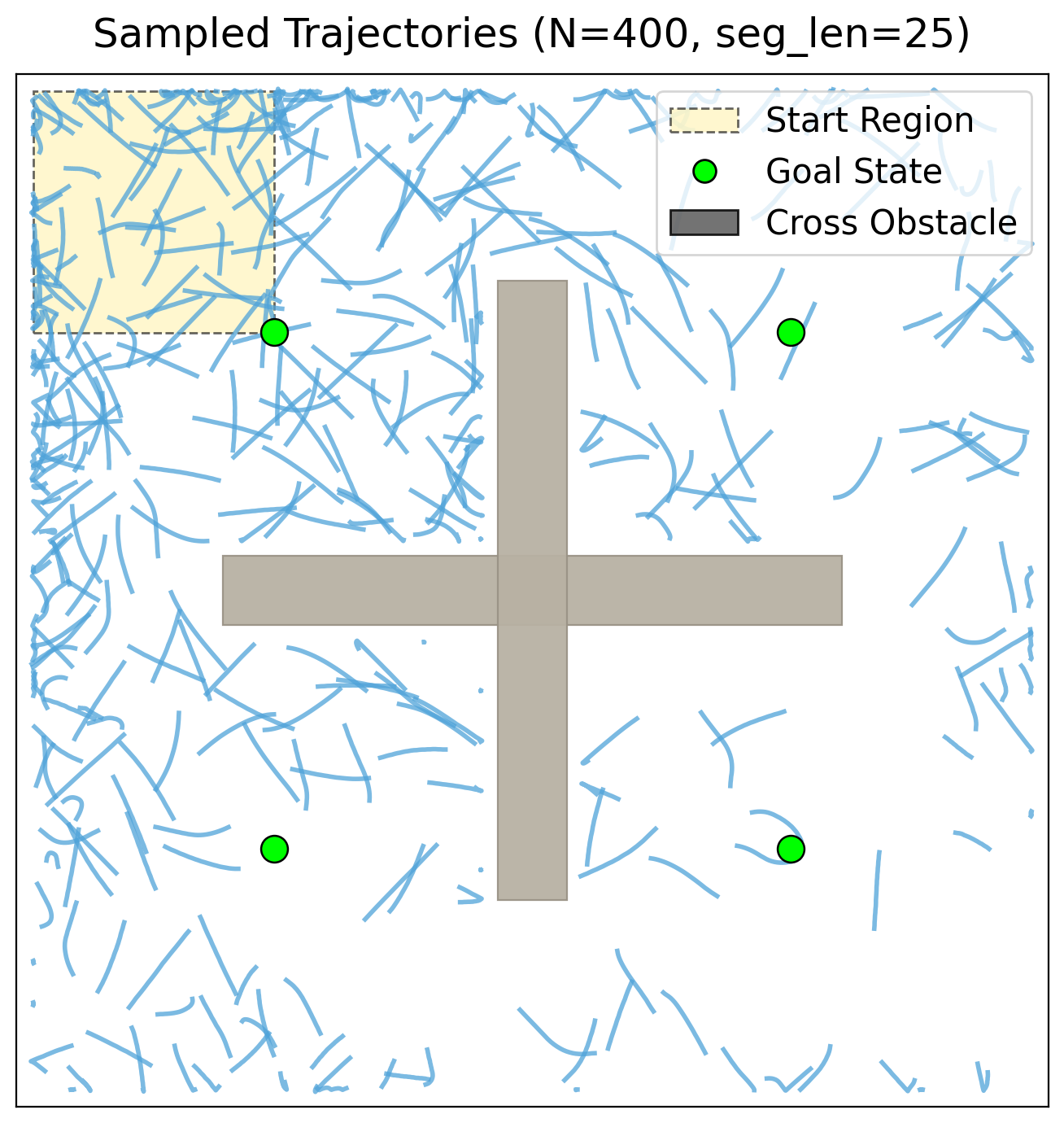}
        \caption{Fragmented Segments ($H=25$)}
        \label{subfig:trajectories}
    \end{subfigure}

    \caption{\textbf{Dataset coverage and in the PointMass domain.} 
    The layout of the maze is centered around a cross-shaped obstacle (white/empty region), with the dots representing the four target goal states. 
    \textbf{(a)} The state visitation heatmap of the RND PointMass dataset reveals a heavy bias toward the top-left region where the agent is initialized. 
    \textbf{(b)} Analysis of sparsely visited states shows that the \textit{bottom-right} goal area is severely underrepresented in the offline data, leading to a performance bottleneck. 
    \textbf{(c)} Visualization of trajectories with a short horizon of $H=25$; these fragmented segments provide weaker preference signals, contributing to the higher variance observed in policy performance.}
    \label{fig:pointmass_diagnostic}
\end{figure*}

\subsection{Analysis of Human Preference Noise}
\label{app:human_preference_noise}

We quantify the label noise in the Adroit human preference datasets by measuring the fraction of preferences that contradict ground-truth reward signals. Our analysis reveals that human annotations in these tasks are exceptionally noisy, with noise ratios of 60\% for \textit{pen-human} and 55\% for \textit{pen-cloned}. To validate the impact of such noise, we conducted a controlled study on the \textit{Quadruped} tasks by injecting synthetic noise into preference labels.

\begin{figure}[h]
\centering
\includegraphics[width=0.48\textwidth]{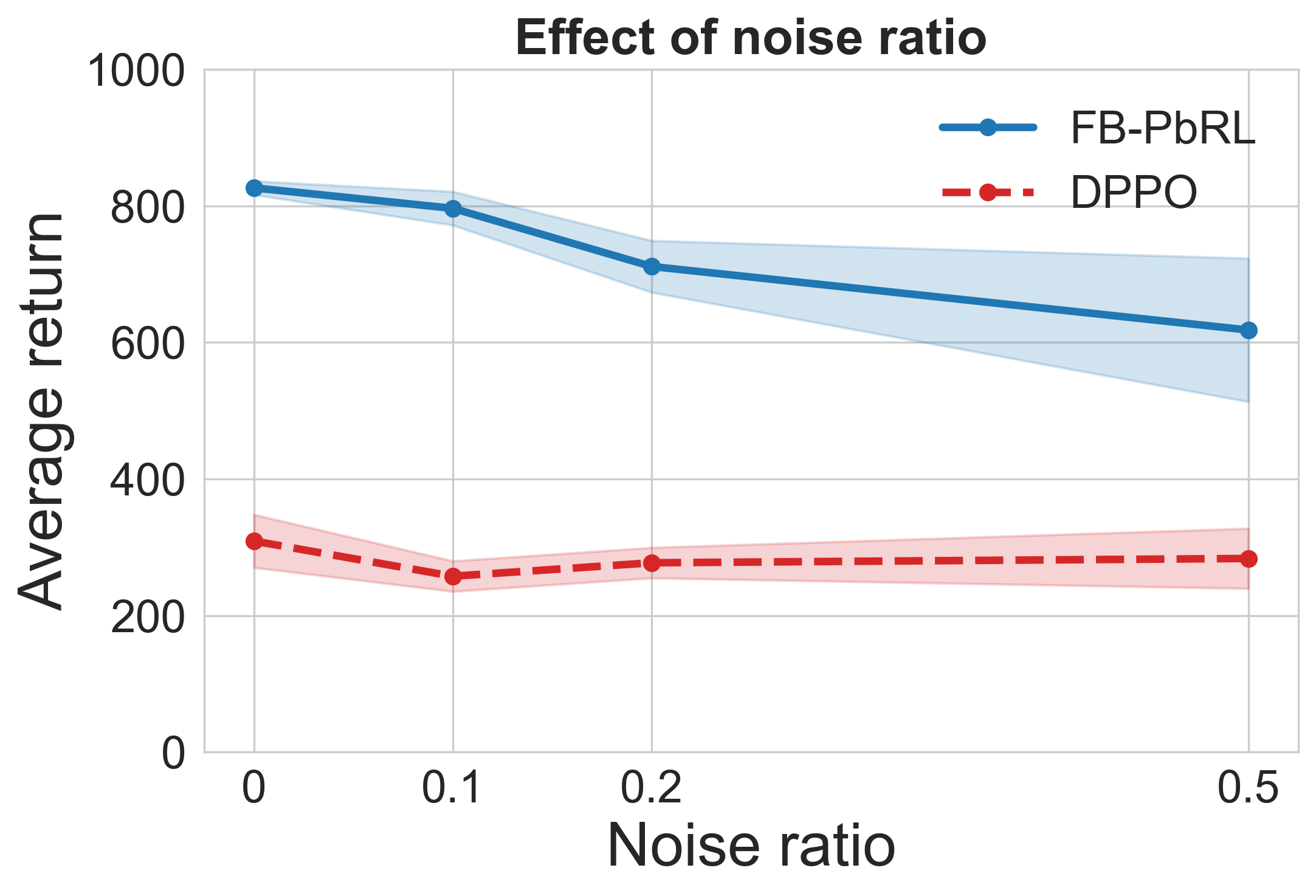}
\caption{\textbf{Robustness analysis against preference noise in the \textit{quadruped} domain.} The plot illustrates the performance of FB-PbRL and the DPPO baseline as the preference noise ratio increases. Result is averaged
over four \textit{Quadruped} tasks. In all plots, shaded regions indicate the standard deviation across 5 seeds.}
\label{fig:noise_injection_study}
\end{figure}

As shown in Figure~\ref{fig:noise_injection_study}, once the noise ratio approaches 50\%, the performance of all preference-based methods drops, and the margin of improvement provided by FB-PbRL shrinks. This confirms that our method's performance in Adroit is primarily bottlenecked by the inherent quality of the human-labeled data.

\subsection{Extension to Online PbRL Settings}
\label{app:online_pbrl}

While the primary focus of this work is on the offline regime, our framework can naturally extend to online PbRL settings. This extension is achieved by collecting environment trajectories utilizing an online Forward-Backward representation backbone \citep{sun2025unsupervised}, followed by standard preference querying to specify the latent task vector. 

To validate this capability, we compare the against a standard online PbRL baseline, PrefPPO~\citep{christiano2017deep,lee2021b}, which learns a reward model from pairwise trajectory preferences and then optimizes the policy with Proximal Policy Optimization with a clipped objective~\citep{schulman2017proximal,huang2024ppo}. As presented in Table~\ref{tab:online_pbrl}, online FB-PbRL demonstrates consistent and significant performance gains across all evaluated tasks in the \textit{Quadruped} domain. These results confirm that our representation-driven approach extends effectively beyond the offline setting and maintains highly competitive performance in online environments.

\begin{table}[h]
    \centering
    \caption{\textbf{Performance comparison in the online PbRL setting.} Results are evaluated on the \textit{Quadruped} domain. Online FB-PbRL significantly outperforms the standard PrefPPO baseline across all tasks.}
    \label{tab:online_pbrl}
    \vskip 0.1in
    \begin{small}
    \setlength{\tabcolsep}{12pt}
    \renewcommand{\arraystretch}{0.95}
    \begin{tabular}{l|cc}
        \toprule
        \textbf{Task} & \textbf{PrefPPO} & \textbf{Online FB-PbRL (Ours)} \\
        \midrule
        walk  & 279.7 $\pm$ 8.7  & \textbf{597.2 $\pm$ 60.0} \\
        run   & 198.6 $\pm$ 26.3 & \textbf{425.9 $\pm$ 77.8} \\
        stand & 357.4 $\pm$ 4.6  & \textbf{958.5 $\pm$ 1.2}  \\
        jump  & 298.9 $\pm$ 21.3 & \textbf{679.2 $\pm$ 41.5} \\
        \midrule
        \textit{Average} & 283.7 & \textbf{665.2} \\
        \bottomrule
    \end{tabular}
    \end{small}
\end{table}

\subsection{Training Curve}
\textbf{Main results.} The following training curves present the main results, showing the performance of FB-PbRL compared with other baselines in the \textit{Cheetah}, \textit{Walker}, and \textit{Quadruped} domains, as well as a comparison between the PbRL and zero-shot RFRL protocols. For the OPRL and CLARIFY baselines, we report training curves up to $2\times 10^5$ steps, as these methods are originally configured to train for this horizon and exhibit no further performance improvement beyond this point in our experiments. All the corresponding training curves are shown in Figures \ref{fig:Main_1M}, \ref{fig:Main_200K}, and \ref{fig:Main_zeroshot}.

\begin{figure*}[t]
    \centering
    \includegraphics[width=0.7\textwidth]{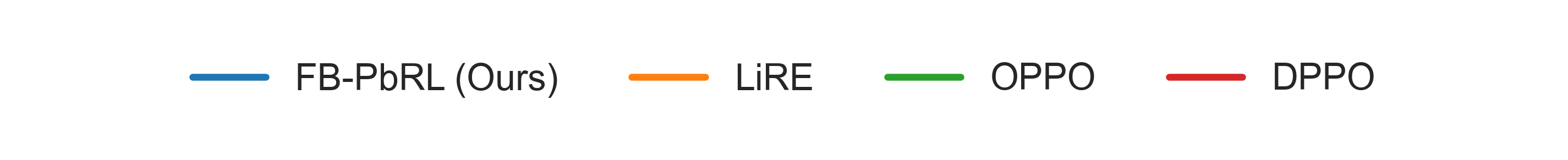}
    \vspace{-0.1cm}

    \includegraphics[width=\textwidth]{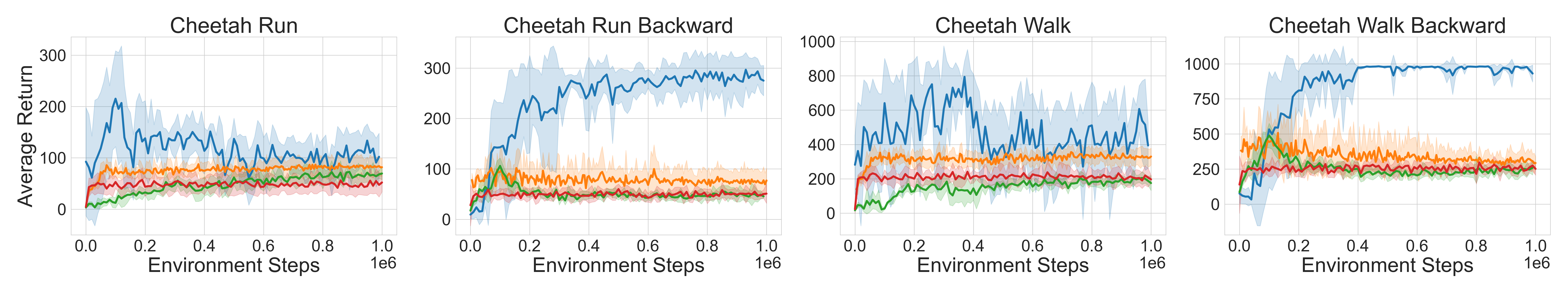}
    \includegraphics[width=\textwidth]{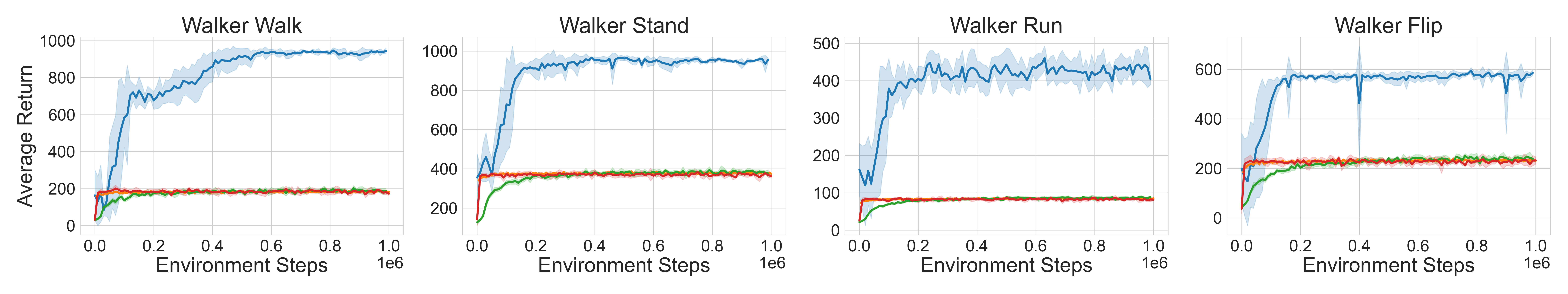}
    \includegraphics[width=\textwidth]{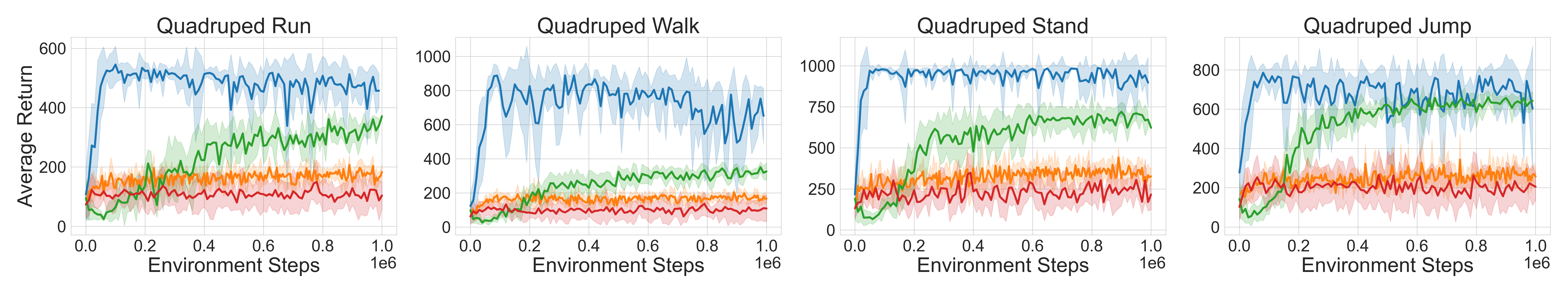}

    \caption{
    \textbf{FB-PbRL compared with LiRE, OPPO, and DPPO.} 
    In all plots, shaded regions indicate the standard deviation across 5 seeds.
    }
    \label{fig:Main_1M}
\end{figure*}

\begin{figure*}[t]
    \centering
    \includegraphics[width=0.6\textwidth]{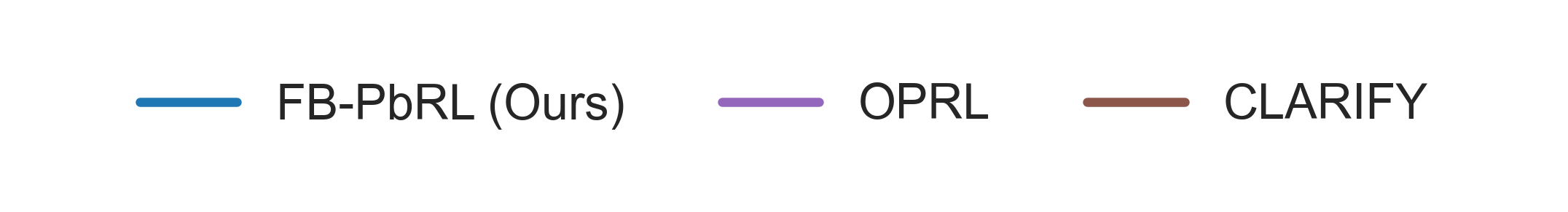}
    \vspace{-0.1cm}

    \includegraphics[width=\textwidth]{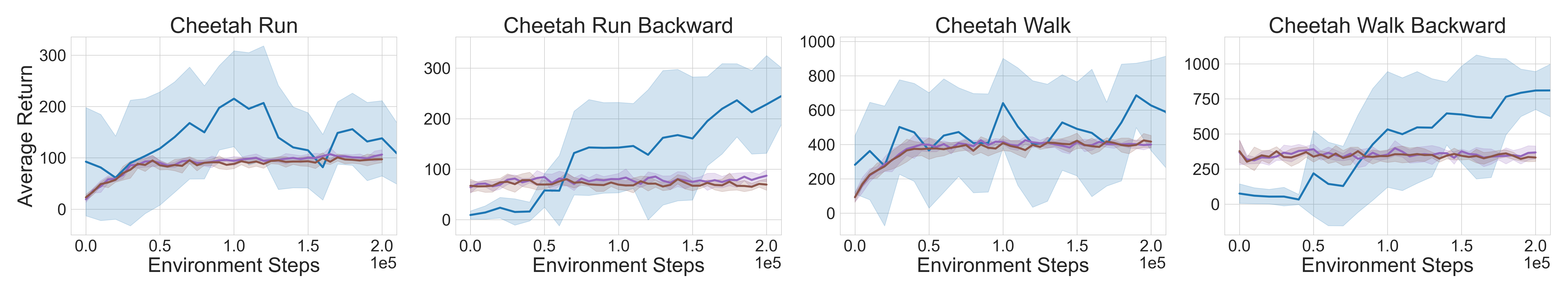}
    \includegraphics[width=\textwidth]{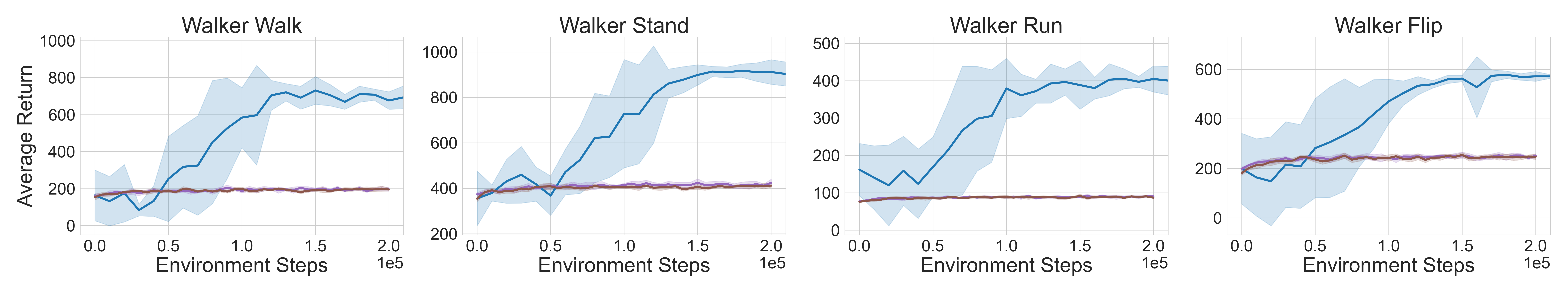}
    \includegraphics[width=\textwidth]{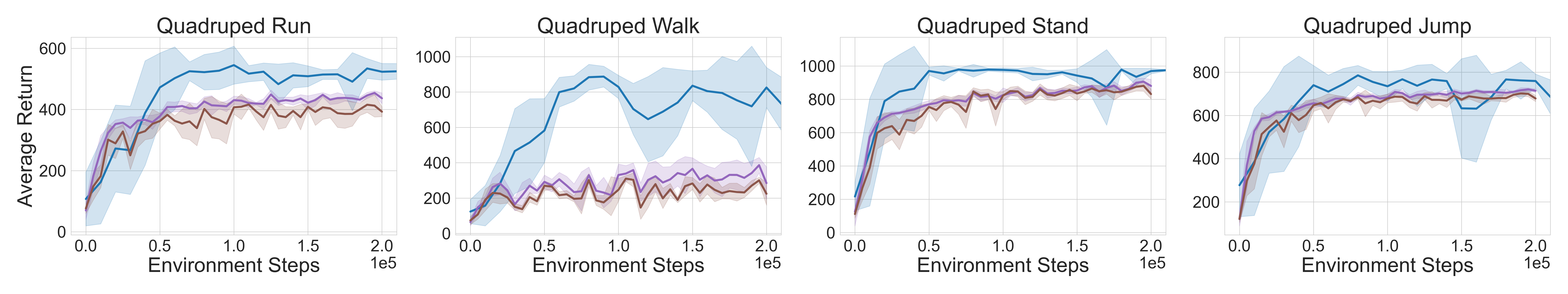}

    \caption{
    \textbf{FB-PbRL compared with OPRL and CLARIFY.} 
    In all plots, shaded regions indicate the standard deviation across 5 seeds.
    }
    \label{fig:Main_200K}
\end{figure*}

\begin{figure*}[t]
    \centering
    \includegraphics[width=0.5\textwidth]{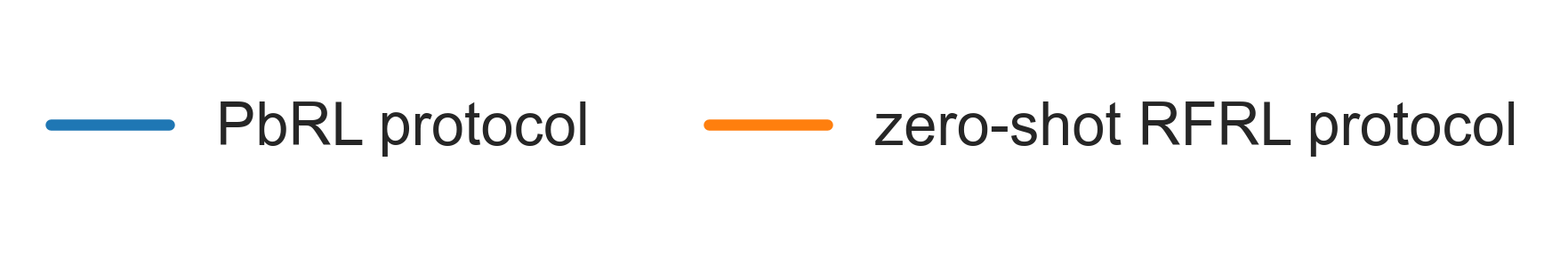}
    \vspace{-0.1cm}

    \includegraphics[width=\textwidth]{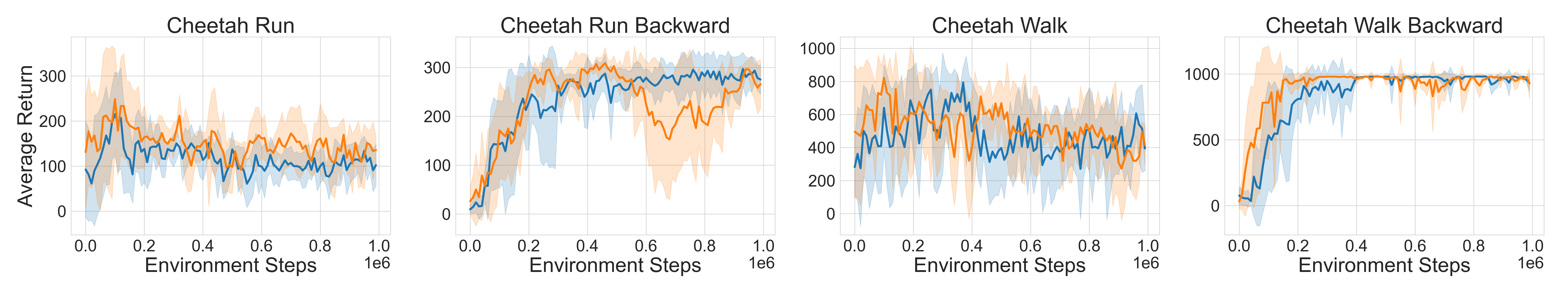}
    \includegraphics[width=\textwidth]{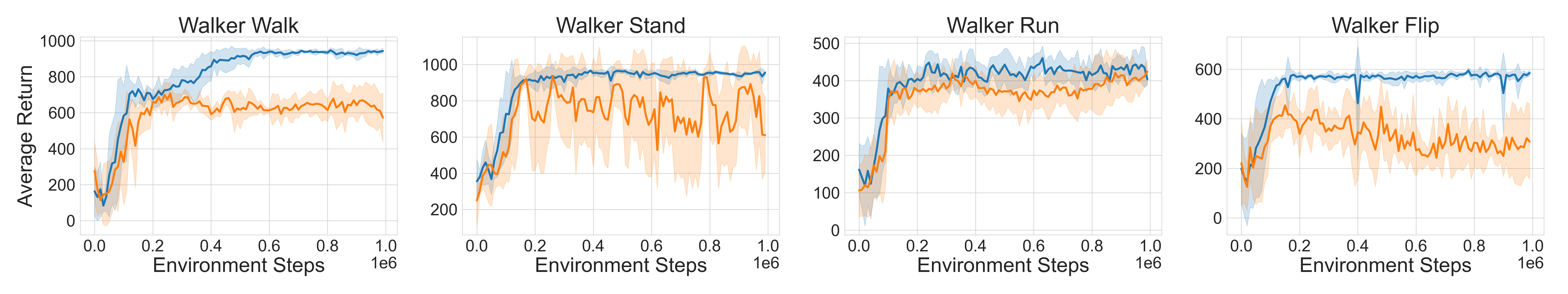}
    \includegraphics[width=\textwidth]{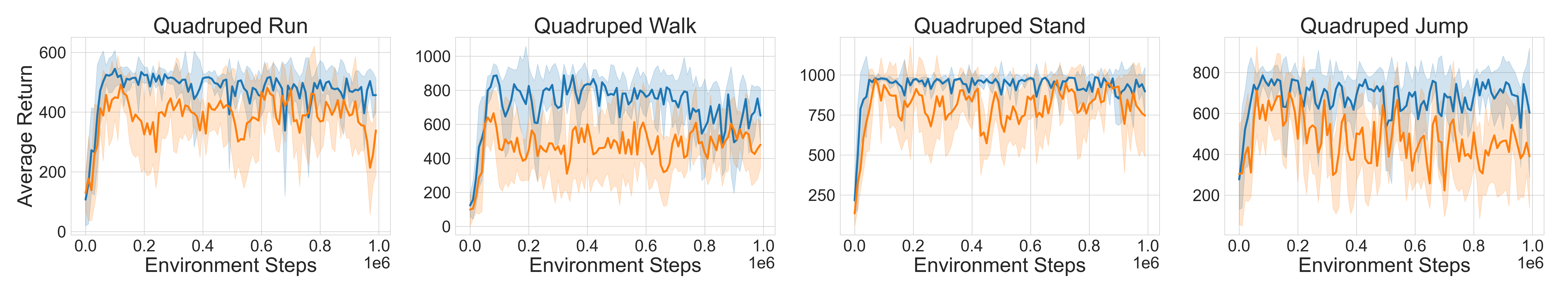}

    \caption{
    \textbf{Comparison between the PbRL and zero-shot RFRL protocols.} 
    In all plots, shaded regions indicate the standard deviation across 5 seeds.
    }
    \label{fig:Main_zeroshot}
\end{figure*}

\textbf{Robustness}
We show the training curve of FB-PbRL under both limited and noisy preference supervision.
Under the limited-preference setting, experiments follow the zero-shot RFRL protocol, where dataset coverage is inherently small.
As a result, training curves exhibit higher variance, and a gradual performance drop is observed in later stages, which is consistent with the increased sensitivity to data scarcity in this regime.

For noisy preference annotations, we observe a pronounced domain-dependent effect.
In particular, performance on the \textit{Quadruped} domain degrades more noticeably as the noise ratio increases, whereas \textit{Walker} remains largely robust and only shows a clear performance drop at a noise ratio of 0.2.
All the corresponding training curves are shown in Figure \ref{fig:limited_pairs}, \ref{fig:noise}.

\begin{figure*}[t]
    \centering
    \includegraphics[width=0.7\textwidth]{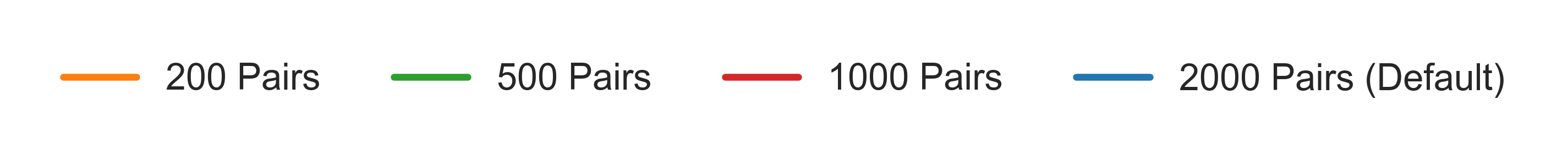}
    \vspace{-0.1cm}

    \includegraphics[width=\textwidth]{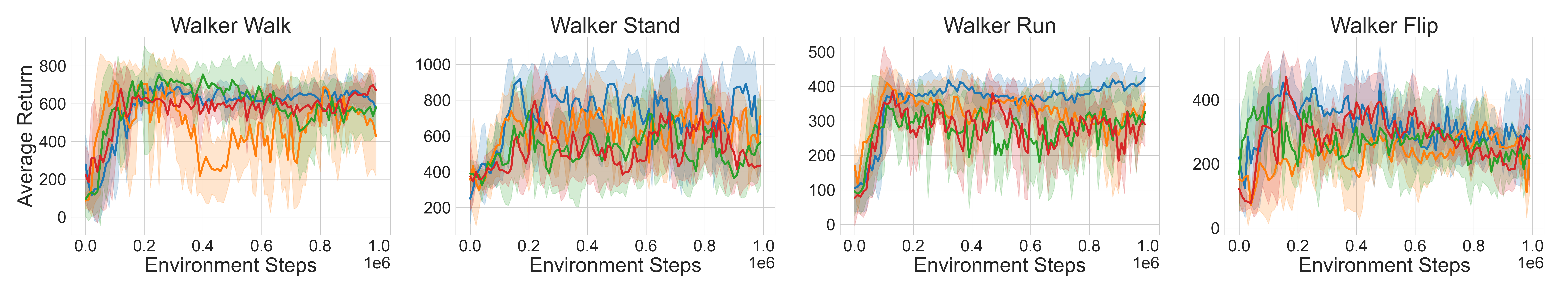}
    \includegraphics[width=\textwidth]{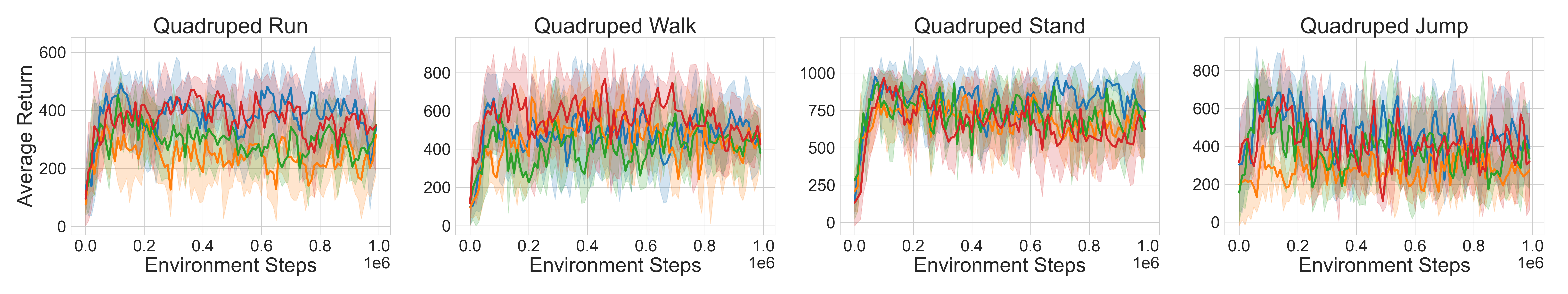}

    \caption{
    \textbf{Comparison of FB-PbRL under different numbers of preference pairs.} 
    In all plots, shaded regions indicate the standard deviation across 5 seeds.
    }
    \label{fig:limited_pairs}
\end{figure*}

\begin{figure*}[t]
    \centering
    \includegraphics[width=0.8\textwidth]{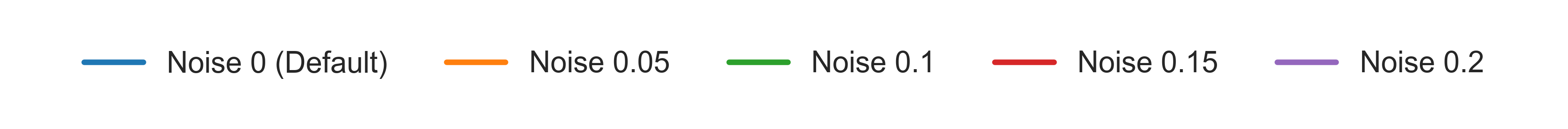}
    \vspace{-0.1cm}

    \includegraphics[width=\textwidth]{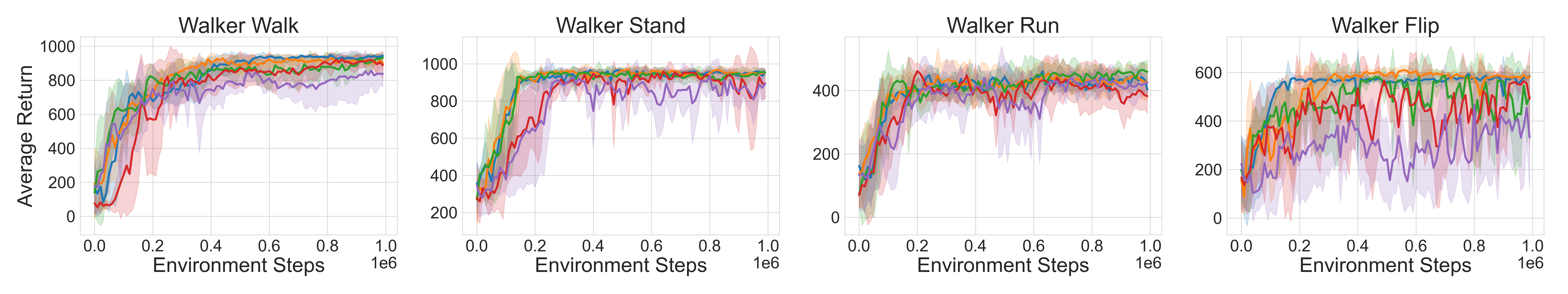}
    \includegraphics[width=\textwidth]{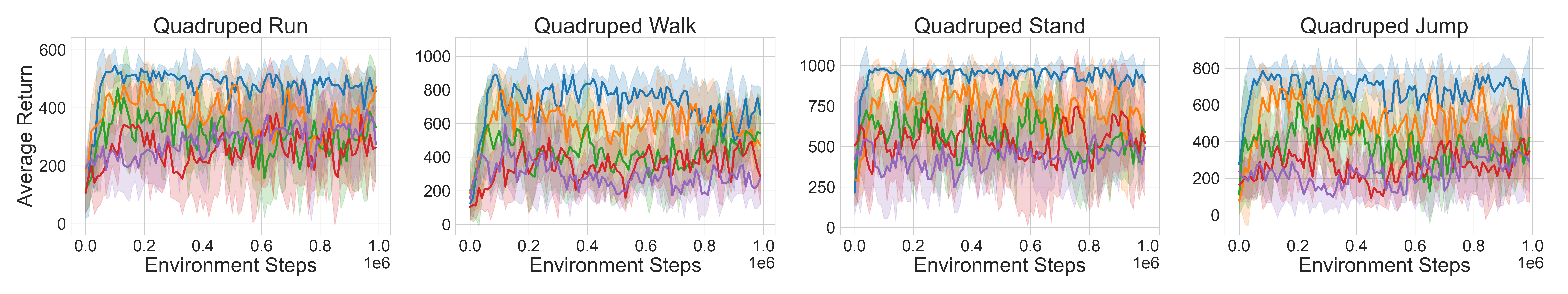}

    \caption{
    \textbf{Comparison between different noise settings.} 
    In all plots, shaded regions indicate the standard deviation across 5 seeds.
    }
    \label{fig:noise}
\end{figure*}

\textbf{Contrastive coefficient.}
We observe that setting the contrastive coefficient to 10 leads to degraded performance on \textit{Quadruped-Walk}, suggesting insufficient alignment to capture a meaningful task embedding in this setting.
In contrast, larger values as well as the default choice (100) consistently yield strong performance across tasks, indicating that a sufficiently strong contrastive signal is necessary for reliable task embedding learning.
The corresponding training curves are shown in Figure \ref{fig:contrastive_coefficient}.

\begin{figure*}[t]
    \centering
    \includegraphics[width=0.5\textwidth]{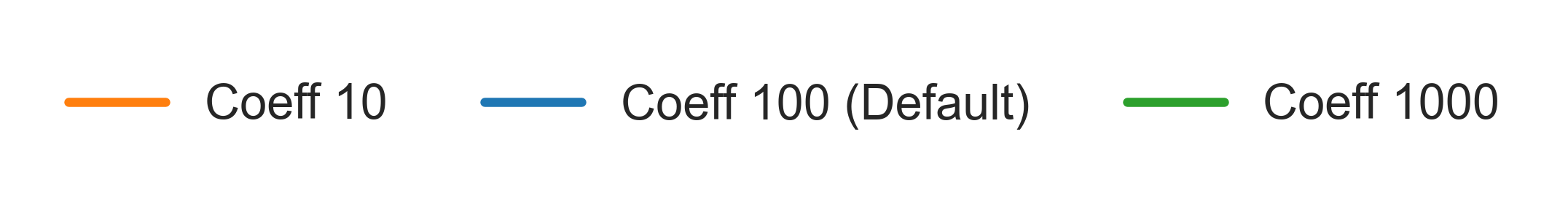}
    \vspace{-0.1cm}

    \includegraphics[width=0.35\textwidth]{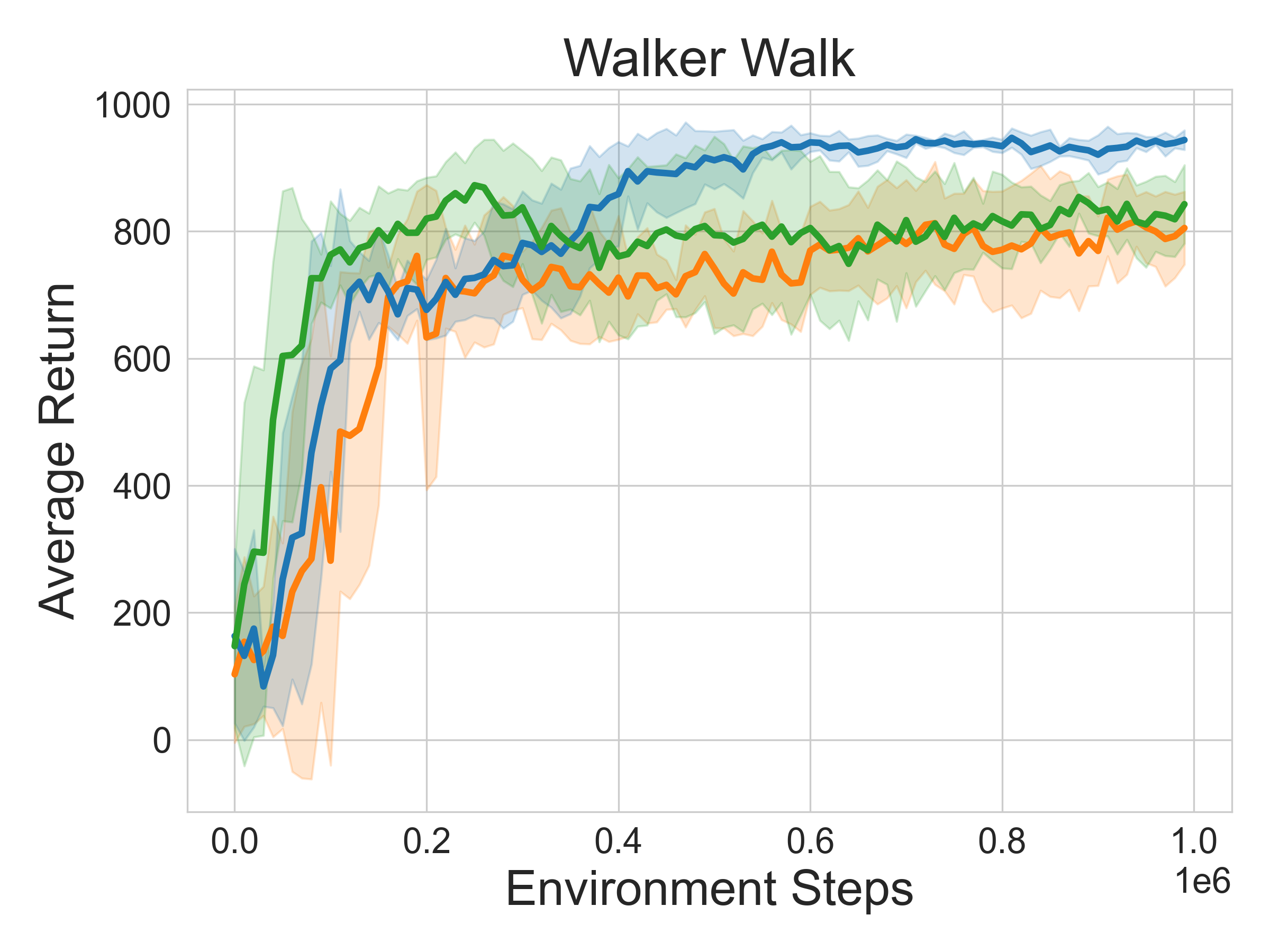}
    \includegraphics[width=0.35\textwidth]{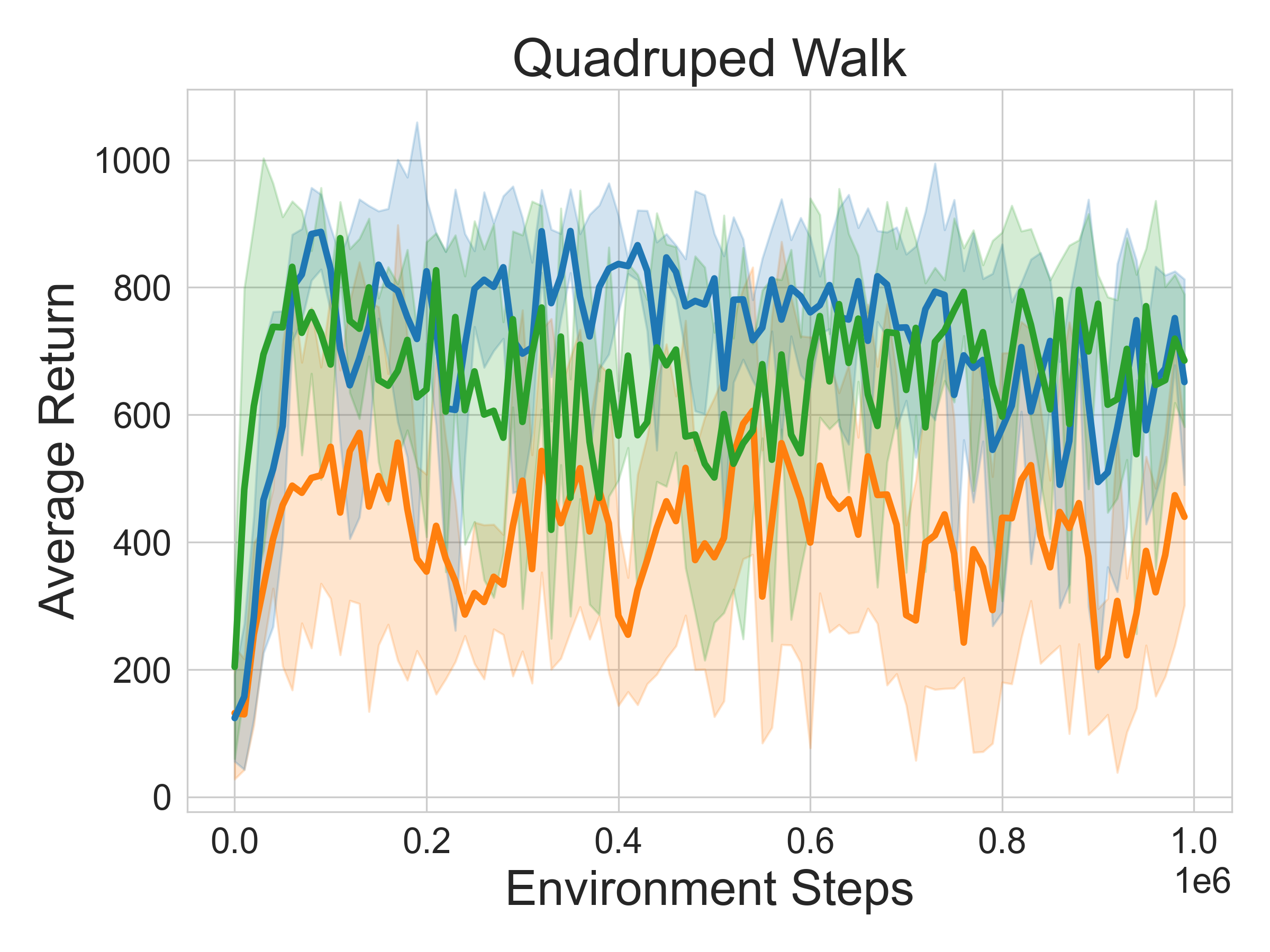}

    \caption{
    \textbf{Comparison between different contrastive coefficients.} 
    In all plots, shaded regions indicate the standard deviation across 5 seeds.
    }
    \label{fig:contrastive_coefficient}
\end{figure*}

\textbf{Segment size.}
Varying the segment size has limited impact on most tasks, with performance remaining comparable across different choices.
An exception is \textit{Walker-Walk} with segment size 25, which exhibits a noticeable drop in performance.
This suggests that Walker tasks benefit from longer temporal context, whereas Quadruped tasks are less sensitive to segment length.
The corresponding training curves are shown in Figure \ref{fig:segment}.

\begin{figure*}[t]
    \centering
    \includegraphics[width=0.7\textwidth]{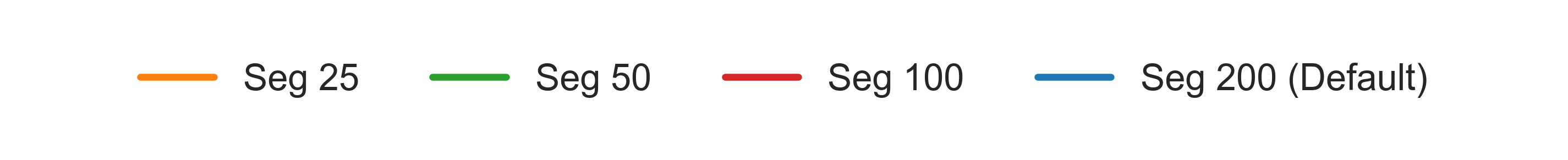}
    \vspace{-0.1cm}

    \includegraphics[width=\textwidth]{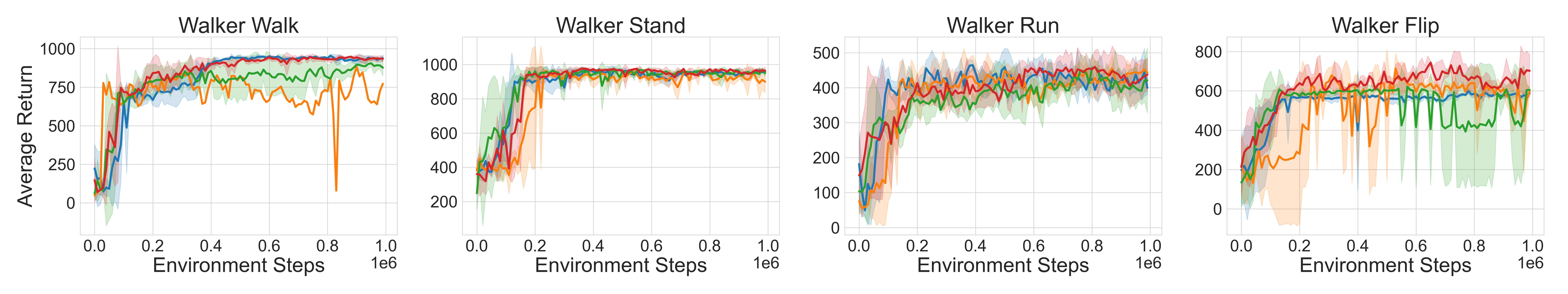}

    \caption{
    \textbf{Comparison between different segment sizes.} 
    In all plots, shaded regions indicate the standard deviation across 3 seeds.
    }
    \label{fig:segment}
\end{figure*}

\textbf{Contrastive objective.}
We further compare our SimCLR-style contrastive learning objective with a margin-based alternative.
Across nearly all tasks, the margin-based objective results in inferior and less stable performance, highlighting the advantage of the SimCLR formulation adopted in our method.
The corresponding training curves are shown in Figure \ref{fig:contrastive_objective}.

\begin{figure*}[t]
    \centering
    \includegraphics[width=0.4\textwidth]{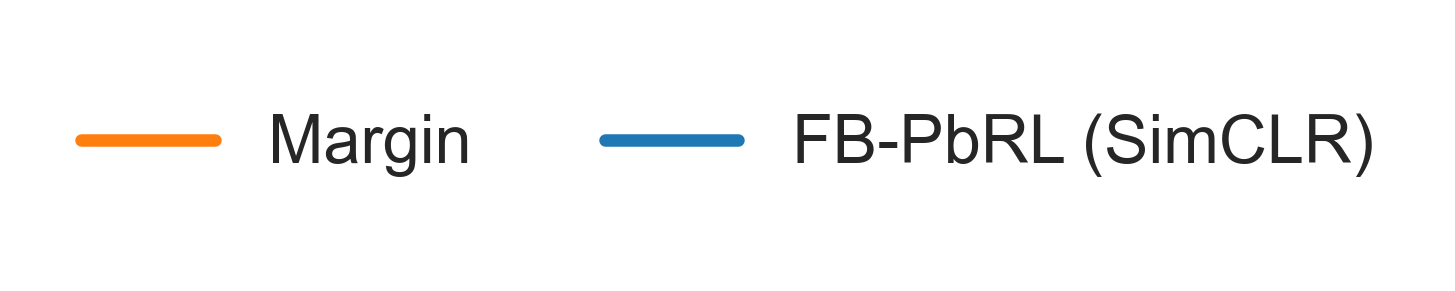}
    \vspace{-0.1cm}

    \includegraphics[width=\textwidth]{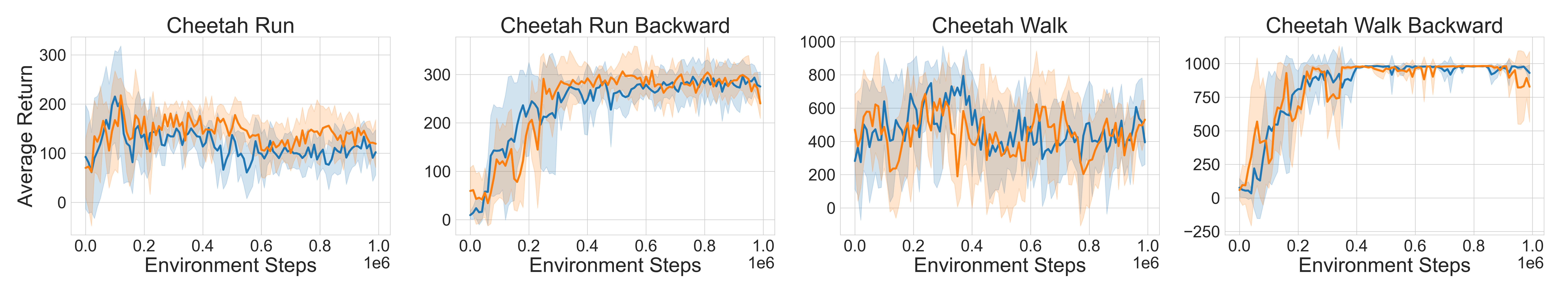}
    \includegraphics[width=\textwidth]{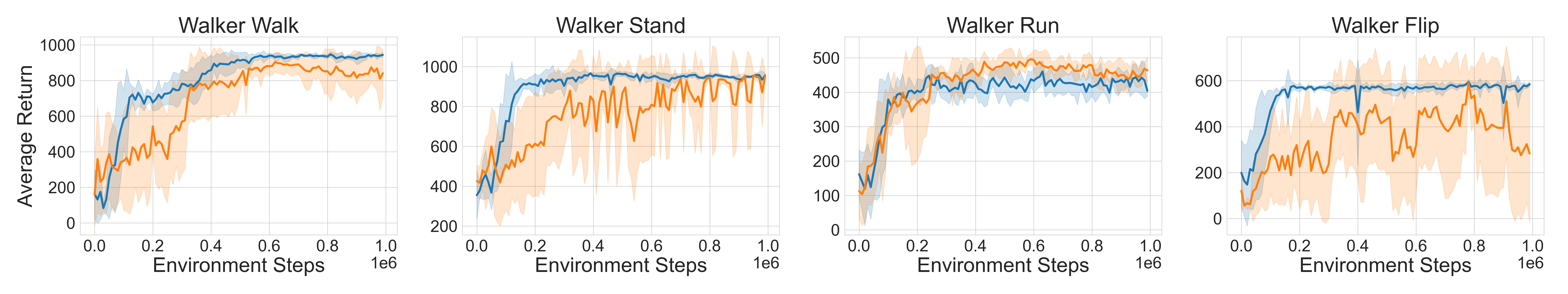}
    \includegraphics[width=\textwidth]{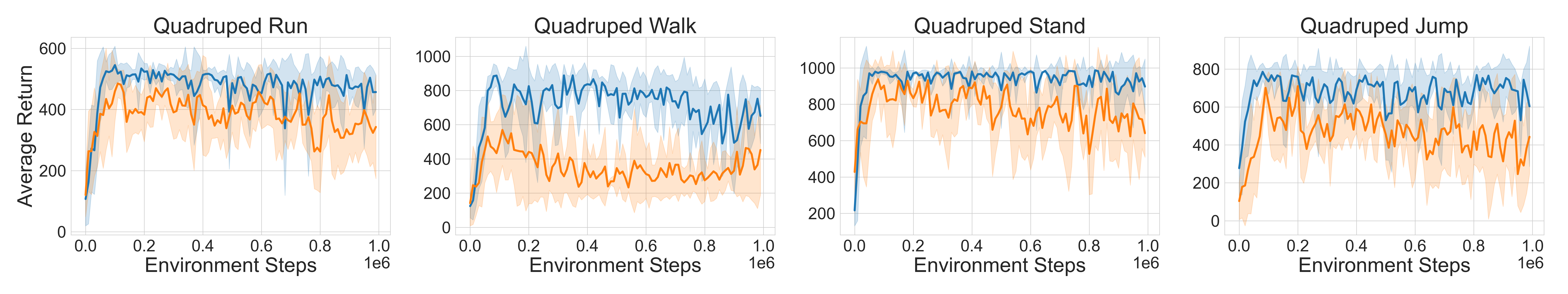}

    \caption{
    \textbf{Comparison between contrastive objectives.} 
    In all plots, shaded regions indicate the standard deviation across 5 seeds.
    }
    \label{fig:contrastive_objective}
\end{figure*}

\section{Negative Results}

\subsection{Latent-Space Regularization and Consistency Constraints}
\textbf{Regularization Loss.}
A key challenge in offline reinforcement learning is mitigating OOD effects during optimization \citep{kumar2020conservative, kostrikovoffline}. While prior FB-based methods typically address this issue by regularizing the actor \citep{jeen2024zero, zheng2025towards}, in our setting we observed that OOD behavior can also arise in the latent task space when optimizing the anchor $\bm{z}$ via preference supervision. To enforce this, we experimented with an auxiliary regularization term, $\mathcal{L}_{\text{reg}} = \|\mathbf{z} - \bar{\mathbf{z}}_{\text{pos}}\|_2^2$, where $\bar{\mathbf{z}}_{\text{pos}}$ denotes the centroid of the positive samples. This term  was designed to anchor $\mathbf{z}$ near the centroid of positive preference samples.

However, this explicit regularization yielded no performance gains. We attribute this to the implicit regularization provided by our preference-guided fine-tuning . The contrastive objective inherently reshapes the latent geometry, naturally confining $\mathbf{z}$ to the valid task manifold. Consequently, additional geometric constraints proved redundant and occasionally restrictive compared to the reshaping induced by fine-tuning alone.

\textbf{Reconstruction Loss.}
The original FB framework does not explicitly enforce a self-consistency constraint in the latent space, i.e., trajectories generated by a policy conditioned on a task vector $\bm{z}$ may not map back to the same latent representation when re-encoded by the backward model.
To address this, we explored a reconstruction-based regularization that encourages consistency between the anchor $\bm{z}$ and the latent representation $\hat{\bm{z}}$ obtained by encoding trajectories generated under $\bm{z}$.

Concretely, we generate multiple trajectories using the actor conditioned on $\bm{z}$, encode them back into latent representations via the backward encoder, and minimize the cosine distance
\[
\mathcal{L}_{\mathrm{recon}} = 1 - \cos(\bm{z}, \hat{\bm{z}}).
\]
However, this approach incurs substantial computational overhead, as it requires repeated environment rollouts during training.
In practice, incorporating this loss significantly increases training time and is therefore not feasible in our setting.

\subsection{Mitigating Fast Convergence of the Preference Loss}
We observed that the optimization of the task anchor $\mathbf{z}$ could converge rapidly under the preference loss. To mitigate this and encourage broader exploration, we experimented with two strategies:

\textbf{Multi-Anchor Initialization. } Initializing four anchor vectors and randomly sampling one of them at each update, aiming to diversify the search trajectory.

\textbf{Noise Injection. } We added Gaussian noise to $\mathbf{z}$ during sampling to induce local exploration around the current estimate.

Contrary to our expectations, neither strategy yielded higher asymptotic performance. While these methods successfully slowed down the convergence rate, they merely delayed the time required to reach peak performance without finding better solutions than the baseline. In contrast, we find that a simple reduction of the learning rate for $\bm{z}$ is the most effective and stable way to mitigate overly fast convergence of the preference loss, without introducing additional optimization complexity.


\end{document}